\documentclass[11pt]{article}
\usepackage{fullpage}







\usepackage[utf8]{inputenc} 
\usepackage[T1]{fontenc}    
\usepackage{hyperref}       
\usepackage{url}            
\usepackage{booktabs}       
\usepackage{amsfonts}       
\usepackage{nicefrac}       
\usepackage{microtype}      
\usepackage{xcolor}         
\usepackage{amsthm}
\usepackage{graphicx}
\usepackage{float}
\usepackage{mathtools}
\usepackage{amssymb, amsmath, latexsym, bm}
\usepackage{nicefrac}
\usepackage{hhline}
\usepackage{multirow}
\usepackage{pifont}%
\usepackage{natbib}

\usepackage{makecell}

\usepackage[flushleft]{threeparttable} 
\usepackage{caption}
\usepackage{multirow}
\usepackage{colortbl}

\bibliographystyle{abbrvnat}
\usepackage{sidecap}

\newcommand{\squeeze}{}


\usepackage{color}

\usepackage{colortbl}
\definecolor{bgcolor}{rgb}{0.93,0.99,1}
\definecolor{bgcolor2}{rgb}{0.8,1,0.8}
\definecolor{bgcolor3}{rgb}{0.50,0.90,0.50}

\usepackage{tcolorbox}
\usepackage{pifont}
\definecolor{mydarkgreen}{RGB}{39,130,67}
\definecolor{mydarkred}{RGB}{192,25,25}
\newcommand{\green}{\color{mydarkgreen}}
\newcommand{\red}{\color{mydarkred}}
\newcommand{\cmark}{\green\ding{51}}%
\newcommand{\xmark}{\red\ding{55}}%


\usepackage{algorithm}
\usepackage{algpseudocode}

\usepackage{caption}
\usepackage{subcaption}
\usepackage{enumitem}
\usepackage{upgreek}

\usepackage[colorinlistoftodos,bordercolor=orange,backgroundcolor=orange!20,linecolor=orange,textsize=scriptsize]{todonotes}

\newcommand{\eqdef}{\overset{\text{def}}{=}}


\newcommand{\cE}{{\cal E}}

\newcommand{\cG}{{\cal G}}
\newcommand{\cH}{{\cal H}}

\newcommand{\cM}{{\cal M}}

\newcommand{\cO}{{\cal O}}

\newcommand{\cV}{{\cal V}}

\newcommand{\cW}{{\cal W}}



\def\R{\mathbb{R}}

\def\R{\mathbb R}

\def\la{\langle}
\def\ra{\rangle}

\def\y{\mathbf{y}}

\def\x{\mathbf{x}}

\newtheorem{lemma}{Lemma}
\newtheorem{theorem}{Theorem}

\newtheorem{assumption}{Assumption}

\title{\bf \huge Communication Acceleration of Local Gradient Methods via an Accelerated Primal-Dual Algorithm  with  Inexact Prox}

\author{
	  {\bf Abdurakhmon Sadiev}\thanks{This work was written while A.\ Sadiev was a research intern at KAUST during the last semester of his MS studies at the Moscow Institute of Physics and Technology, Dolgoprudny, Russia.} \\
	  KAUST\thanks{King Abdullah University of Science and Technology}\\
	 Thuwal, Saudi Arabia \\	  
	  \texttt{abdurakhmon.sadiev@kaust.edu.sa} \\
	\and
	{\bf Dmitry Kovalev} \\
	 KAUST  \\
	 Thuwal, Saudi Arabia \\	 
        \texttt{dmitry.kovalev@kaust.edu.sa} 	  \\
	 \and
	{\bf Peter Richt\'{a}rik} \\
	 KAUST \\
	 Thuwal, Saudi Arabia \\	 
	 \texttt{peter.richtarik@kaust.edu.sa} \\
	}	

\date{May 26, 2022  (revised July 6, 2022)}

\begin{document}
	
	\maketitle
	
\begin{abstract}
Inspired by a recent breakthrough of \citet{ProxSkip}, who for the first time showed that local gradient steps can lead to provable communication acceleration, we propose an alternative algorithm which obtains the same communication acceleration as their method (ProxSkip). Our approach is very different, however: it is based on the celebrated  method of \citet{chambolle2011first}, with several nontrivial modifications: i) we allow for an inexact computation of the prox operator of a certain smooth strongly convex function via a suitable gradient-based method (e.g., GD, Fast GD or FSFOM), ii) we perform a careful modification of the dual update step in order to retain linear convergence. Our general results offer the new state-of-the-art rates for the class of strongly convex-concave saddle-point problems with bilinear coupling characterized by the absence of smoothness in the dual function. When applied to federated learning, we obtain a theoretically better alternative to ProxSkip: our method requires fewer local steps ($\cO(\kappa^{1/3})$ or $\cO(\kappa^{1/4})$, compared to $\cO(\kappa^{1/2})$ of ProxSkip), and performs a deterministic number of local steps instead. Like ProxSkip, our method can be applied to optimization over a connected network, and we obtain theoretical improvements here as well.

	\end{abstract}

\clearpage	
\tableofcontents

\section{Introduction}

Communication efficiency of distributed stochastic gradient descent (SGD) can be improved, often dramatically, via a simple trick: instead of synchronizing the parameters across the parallel workers after every SGD step, let the workers perform multiple optimization steps using their local loss and data only before synchronizing. 

This trick dates back at least to two or three decades ago \citep{Mang1995}, and may be much older. Due to its simplicity, it has been repeatedly rediscovered \citep{Povey2014,SparkNet2016,COCOA+journal}. It is the basis of the famous federated averaging (FedAvg) algorithm of \citet{FedAvg2016,FedAvg2017}, which is the workhorse of federated learning \citep{FEDLEARN, FEDOPT}; see also the recent surveys on federated learning \citep{FL_survey_2020,FL-big} and federated optimization \citep{FieldGuide2021}. 

\subsection{Towards provable communication acceleration via delayed parameter synchronization}

Until recently, this simple trick resisted all attempts at an appropriate theoretical justification. Through collective effort of the federated learning community, the bounds of various local SGD methods were progressively getting better \citep{LocalDescent2019,FL-FedProx,Li2019-local-homogeneous,localGD,localSGD,localSGD-AISTATS2020,Li-local-bounded-grad-norms--ICLR2020,Blake2020,localSGD-Stich,LSGDunified2020,LFPM,FedSplit,SCAFFOLD,Nastya,FedRR,Coop2021,FedShuffle}, 
and the assumptions required to achieve them weaker. A brief overview of the progress is provided in \citep{ProxSkip}. 

However, all known theoretical rates are worse than the rate of gradient descent,  which synchronizes after every gradient step. In a recent breakthrough, \citet{ProxSkip} developed a novel local SGD method, called ProxSkip, which performs a random number of local gradient (or stochastic gradient) steps before synchronization, and proved that it enjoys strong communication acceleration properties. 
In particular, while the method needs $\cO(\kappa \log \frac{1}{\varepsilon})$ iterations, only $\cO(\sqrt{\kappa}\log\frac{1}{\varepsilon})$ of them involve communication.

\subsection{Problem formulation}
\label{sec:problem_formulation}
	
	In this paper, we consider the composite optimization problem 
	\begin{equation}
		\label{comp_problem}
		  			\min_{x\in\R^{d_x}} G(x) + F(Kx),
	\end{equation}
		where $G:\R^{d_x}\to \R$ is a smooth and strongly convex function, $F:\R^{d_y}\to \R\cup \{+\infty\}$ is a proper, closed and convex function, and $K:\R^{d_x} \rightarrow \R^{d_y}$ is a linear map. Let us define
	\begin{equation}
		\label{L_xy}
		L_{xy} \eqdef \max\left\{\|Kx\|:~ x\in\R^{d_x}, \|x\| = 1\right\},
	\end{equation}
where $\|\cdot\|$ refers to the standard Euclidean norm. Note that $L^2_{xy} \geq \lambda_{\max}(KK^{\top}) = \lambda_{\max}(K^{\top} K)$.

 It will be useful to formalize our assumptions at this point as we will refer to the various constants involved in them throughout the text.

	\begin{assumption}
		\label{as_strongly_convex}
		Function $G: \R^{d_x} \rightarrow \R$ is $\mu_x$-strongly convex, i.e.,
		\begin{equation}
			\label{strong_convexity}
		\squeeze			G(x') - G(x'') - \la\nabla G(x''), x' - x''\ra \geq \frac{\mu_x}{2}\|x'-x''\|^2, \qquad \forall x', x'' \in \R^{d_x}.
		\end{equation}
	\end{assumption}
	
	\begin{assumption}
		\label{as_smoothness}
		The function $G: \R^{d_x} \rightarrow \R$ is $L_x$-smooth, i.e.
		\begin{equation}
			\label{smoothness}
			\|\nabla G(x') - \nabla G(x'')\|\leq L_x\|x' - x''\|,\qquad \forall x', x'' \in \R^{d_x}.
		\end{equation}
	\end{assumption}
	
	\begin{assumption}
		\label{as_convexity}
		Function $F: \R^{d_y} \rightarrow \R\cup \{+\infty\}$ is proper, closed and convex.	\end{assumption}

	\begin{assumption}[See \cite{kovalev2021accelerated}]
		\label{as_L_xy}
		There exists a constant $\mu_{xy} > 0$ such that 	
		\begin{equation*}
			\mu^2_{xy} \leq 
			\begin{cases}
				\lambda^{+}_{\min}(KK^{\top}), & \text{if} \; \partial F^{\star}(y) \in {\rm range}K \; \text{ for all } \; y\in \R^{d_y},\\
				\lambda_{\min}(KK^{\top}),& \text{otherwise.} 
			\end{cases}
		\end{equation*}
	\end{assumption}

\subsection{ProxSkip}

The most general version of the ProxSkip method\footnote{This variant is called SplitSkip in their paper.} of \citet{ProxSkip} was designed to solve problems of the form \eqref{comp_problem}. In each iteration, ProxSkip evaluates the gradient of $G$ and then flips a biased coin: with probability $p$, it additionally evaluates  the proximity operator of $F^{\star}$, and performs a matrix-vector multiplication involving $K$. The method becomes  relevant to the standard optimization formulation of federated learning (FL), i.e., to the finite-sum optimization problem 
	\begin{equation}
		\label{decent_problem}
	\squeeze	\min \limits_{x\in\R^{d}}\sum^{n} \limits_{i = 1}f_i(x), 
	\end{equation}	
through its application to its consensus reformulation \begin{equation}\label{eq:consensus}\min_{x_1,\dots,x_n\in \R^d}  \left\{\sum_{i=1}^n f_i(x_i) + \psi(x_1,\dots,x_n) \right\} = \min_{x\in \R^{d_x}} G(x) + F(x),\end{equation} where $d_x\eqdef nd$, $x\eqdef (x_1,\dots,x_n)\in \R^{nd}$, $G(x)\eqdef \sum_{i=1}^n f_i(x_i) $, and $F\eqdef  \psi$ is the indicator function of the constraint $x_1=\cdots=x_n$, i.e., 
$$\psi(x_1,\dots,x_n) \eqdef  \begin{cases} 0 & \text{if} \quad x_1=\dots=x_n \\
+\infty & \text{otherwise}\end{cases}.$$
The evaluation of the proximity operator of $F$ is equivalent to averaging of the vectors $x_1,\dots,x_n$, which necessitates communication. Therefore, if $p$ is small, ProxSkip communicates very rarely. Since $G$ is block separable, the gradient steps involving $G$, taken in between two communications, correspond to gradient steps with respect to the local loss functions $\{f_i\}$ taken by the clients. See \citep{ProxSkip} for the details; we will elaborate on this as well in Section~\ref{sec:FL}.

ProxSkip solves problem \eqref{comp_problem} in  ${\cO}\left(\kappa\log \nicefrac{1}{\varepsilon}\right)$ iterations, out of which only $${\cO}\left(\sqrt{\kappa\chi} \log \frac{1}{\varepsilon}\right)$$ involve communication, where $\chi$ is a condition number measuring the connectivity of the graph (the standard setup in FL corresponds to a fully connected graph, in which case $\chi=1$; see \eqref{eq:chi} for definition). 


\subsection{Related Work} An alternative approach to achieving improved communication efficiency is through the use of communication compression via (unbiased) quantization \citep{Ternary,alistarh2017qsgd,Cnat,IntSGD}, sketching~\citep{SEGA,DIANA+} or sparsification \citep{wangni2017gradient,ATOMO,99percent}. Modern variants offering with variance reduction for the variance caused by compression~\cite{DIANA,DIANA2,sigma_k,DIANA+,IntSGD,DIANA++,Shifted}, adaptivity~\citep{IntSGD}, bidirectional compression~\citep{Cnat,DoubleSqueeze2019,Artemis2020} or acceleration~\cite{ADIANA} enjoy better theoretical rates and practical performance. Variance-reduction for communication compression has been extended to work over arbitrary connected networks~\citep{D-DIANA}, and to second-order methods~\citep{NL2021, FedNL, BL2022,Newton-3PC}. The current state-of-the-art communication complexity in the smooth nonconvex regime is offered by the MARINA~\citep{MARINA,PermK} and DASHA~\citep{DASHA} methods. 

Greedy (biased) compressors, such as Top-K sparsification~\citep{Alistarh-EF2018} or Rank-K approximation~\citep{PowerSGD}, require a different approach via an error-feedback/compensation mechanism~\citep{StichNIPS2018-memory,stich2019,DoubleSqueeze2019,EC-SGD}. For a more modern treatment of error feedback offering current state-of-the-art rates, we refer the reader to \citep{EF21,EF21BW, 3PC, EC-Katyusha}. An alternative approach based on the transformation of a biased compressor into a related induced unbiased compressor was proposed in \citep{induced}, and a unified treatment of variance reduction and error-feedback was proposed in \citep{EF-BV}.

For a  systems-oriented survey, we recommend the reader the work of \citet{Dutta-compress-Survey-2020}.

In this work we do not consider the communication compression approach to communication efficiency since this area is much more understood, and many methods already improve on the theoretical communication complexity of  vanilla GD and SGD, often by significant data and dimension-dependent margins. Instead, we focus on the practice of delayed parameter synchronization via local training, and contribute to  the theoretical foundations of this immensely popular yet poorly understood approach to achieving communication efficiency.

\section{Summary of Contributions}\label{sec:contributions}

\begin{table}[!t]
		\centering
		\caption{Summary of the key complexity results obtained by our methods APDA (Algorithm~\ref{alg:APDA}; Theorem~\ref{thm:APDA-informal}) and APDA with Inexact Prox (Algorithm~\ref{alg:APDA-Inex}; Theorem \ref{thm:APDA-Inex}) for solving the saddle-point problem \eqref{main_problem}.}
		\label{tab:1}
		{
		 \scriptsize 
			\begin{threeparttable}
				\begin{tabular}{c c c  c c c c}
					Algorithm & \begin{tabular}{c}No\\Prox\\$G$\end{tabular}&\begin{tabular}{c}No\\Prox\\$F^{\star}$\end{tabular}&\begin{tabular}{c}Works\\with\\$L_y=\infty$\end{tabular}&\begin{tabular}{c}Linear\\rate\\with\\$\mu_y=0$\end{tabular}&\# Outer Iterations\tnote{\color{red}(1)} & \# Inner Iterations\tnote{\color{red}(1)} \\
					\midrule
					CP\tnote{\color{red}(a)}& \xmark&\xmark&\cmark&\xmark& $\frac{L_{xy}}{\sqrt{\mu_x\mu_y}}$\tnote{\color{red}(2)} & uses prox of $G$\\					
					\midrule
					AltGDA\tnote{\color{red}(b)}& \cmark&\cmark&\xmark& \cmark& $\max\left\{\frac{L}{\mu_x},\frac{L^2}{\mu^2_{xy}}\right\}$& ---\tnote{\color{red}(3)}\\
					\midrule
					APDG\tnote{\color{red}(c)}& \cmark&\cmark&\xmark& \cmark & $\max\left\{\frac{\sqrt{L_xL_y}}{\mu_{xy}},A_{xy}\right\}$& ---\tnote{\color{red}(3)}\\
					\midrule
					\cellcolor{bgcolor}Alg \ref{alg:APDA} &\cellcolor{bgcolor}\xmark&\cellcolor{bgcolor}\xmark&\cellcolor{bgcolor}\cmark&\cellcolor{bgcolor}\cmark&\cellcolor{bgcolor}$A_{xy}$\tnote{\color{red}(2)}&\cellcolor{bgcolor}uses prox of $G$\\		
					\midrule								
					\cellcolor{bgcolor}Alg \ref{alg:APDA-Inex}&\cellcolor{bgcolor}\cmark&\cellcolor{bgcolor}\xmark&\cellcolor{bgcolor}\cmark&\cellcolor{bgcolor}  \cmark&\cellcolor{bgcolor}   $A_{xy}$\tnote{\color{red}(2)}&\cellcolor{bgcolor} $\max\left\{\kappa_x \kappa_{xy},\kappa_x^{\nicefrac{1}{2}}\kappa^2_{xy}\right\}$\tnote{\color{red}(4)}\\
					\midrule
					\cellcolor{bgcolor}Alg \ref{alg:APDA-Inex}&\cellcolor{bgcolor}\cmark&\cellcolor{bgcolor}\xmark&\cellcolor{bgcolor}\cmark&\cellcolor{bgcolor}  \cmark&\cellcolor{bgcolor}   $A_{xy}$\tnote{\color{red}(2)}&\cellcolor{bgcolor} $\max\left\{\kappa_x^{\nicefrac{5}{6}}\kappa_{xy} ,\kappa_{x}^{\nicefrac{1}{3}} \kappa_{xy}^2 \right\}$\tnote{\color{red}(5)} \\
					\midrule
					\cellcolor{bgcolor}Alg \ref{alg:APDA-Inex}&\cellcolor{bgcolor}\cmark&\cellcolor{bgcolor}\xmark&\cellcolor{bgcolor} \cmark&\cellcolor{bgcolor}  \cmark&\cellcolor{bgcolor}   $A_{xy}$\tnote{\color{red}(2)}&\cellcolor{bgcolor} $\max\left\{\kappa_x^{\nicefrac{3}{4}}\kappa_{xy},\kappa_x^{\nicefrac{1}{4}}\kappa_{xy}^2\right\}$\tnote{\color{red}(6)}\\
					\bottomrule
				\end{tabular}
				\begin{tablenotes}
					{\tiny
						\item [\color{red}(a)] \citet{chambolle2011first}  assume that  $G$ and $F^{\star}$ are $\mu_x$ and $\mu_y$-strongly-convex, respectively. We do not assume $F^{\star}$ to be strongly convex.
						\item [\color{red}(b)] \citet{zhang2022near}  assume that the functions $G$ and $F^{\star}$ are $L$-smooth (i.e., $L=\max\{L_x,L_y,L_{xy}\}$), and that $G$ is $\mu_x$-strongly-convex. 
						\item [\color{red}(c)]\citet{kovalev2021accelerated} assume that the functions $G$ and $F^{\star}$ are $L_x$ and $L_y$-smooth, respectively, and that $G$ is $\mu_x$-strongly-convex.
						\item [{\color{red}(1)}] For brevity, we let $\kappa_{xy}\eqdef\frac{L_{xy}}{\mu_{xy}}$, $\kappa_x \eqdef \frac{L_{x}}{\mu_{x}}$, and $A_{xy}\eqdef\max\left\{\kappa_x^{\nicefrac{1}{2}}\kappa_{xy},\kappa_{xy}^2\right\}$. We omit constant factors and a $\log \frac{1}{\varepsilon}$ factor in all expressions, for brevity. So, for example, the expression $A_{xy}$  in the case of the method of our methods should be interpreted as $\cO\left(A_{xy} \log \frac{1}{\varepsilon}\right)$. 
\item [{\color{red}(2)}] \# outer iterations = \# evaluations of the prox of $F^{\star}$.						
						\item [{\color{red}(3)}] There is no prox operator and hence no inner iterations. 
						\item [{\color{red}(4)}] The iterative method $\cM$ evaluating the prox of $G$ inexactly in this case is: $\cM$ = GD (see Lemma~\ref{lem_convergence_norm_of_gradient}). 									
						\item [{\color{red}(5)}] The iterative method $\cM$ evaluating the prox of $G$ inexactly in this case is:  $\cM$ = FGD (Fast Gradient Descent) + GD. See Lemma~\ref{lem_convergence_norm_of_gradient}. 													
						\item [{\color{red}(6)}] The iterative method $\cM$ evaluating the prox of $G$ inexactly in this case is: $\cM$ = FSFOM + FGD (Fast Gradient Descent). See Lemma~\ref{lem_convergence_norm_of_gradient}. 				
					}
				\end{tablenotes}
		\end{threeparttable}}
	\end{table}
	
Inspired by the results of \citet{ProxSkip}, we propose an alternative and substantially different algorithm which obtains the same guarantees for the number of prox evaluations (wrt $F$) as ProxSkip, but has better guarantees for the number gradient steps (wrt $G$) in between the prox evaluations. Below we summarize the main contributions:

\subsection{Saddle-point formulation}  Unlike \citet{ProxSkip}, we consider the saddle-point reformulation of \eqref{comp_problem}
	\begin{equation}
		\label{main_problem}
		\min_{x\in \R^{d_x}}\max_{y \in \R^{d_y}}\left\{G(x) +\left\la y, K x\right\ra -F^{\star}(y)\right\},
	\end{equation}
	where $F^{\star}(y)\eqdef \sup_{y'\in \R^{d_y}} \{ \langle y, y' \rangle - F(y')\} $ is the convex conjugate of $F$. Since $F$ is proper, closed and convex, so is $F^{\star}$. We assume throughout that \eqref{main_problem} is solvable, i.e., there exists at least one solution $(x^{\star}, y^{\star})$. Such a solution then satisfies the first-oder optimality conditions\footnote{Whenever we invoke Assumption~\ref{as_smoothness} ($L_x$-smoothness of $G$), we have $\partial G(x)=\{\nabla G(x)\}$, and hence the first condition can be replaced by $0 = \nabla G(x^{\star}) + K^{\top}y^{\star}$.}
	\begin{equation}
		\label{opt_conditions}
		0 \in \partial G(x^{\star}) + K^{\top}y^{\star}, \qquad  0 \in \partial F^{\star}(y^{\star}) - Kx^{\star},
	\end{equation}
where $\partial$ denotes the subdifferential. By working with this reformulation, we can tap into the rich and powerful philosophical and technical toolbox offered by proximal-point theory, fixed point theory,  and primal-dual methods, which facilitates the algorithm development and analysis. This ultimately enables us to shed new light on the nature of local gradient-type steps as inexact computations of the prox operator of $G$ in a new  appropriately designed Accelerated Primal-Dual Algorithm (APDA; see Algorithm~\ref{alg:APDA}).

\subsection{Modifications of Chambolle-Pock} Our  Algorithms~\ref{alg:APDA} and \ref{alg:APDA-Inex} are inspired by the celebrated Chambolle-Pock method \citep{chambolle2011first},
but with several  important and nontrivial modifications. While Chambolle-Pock achieves linear convergence when both $G$ and $F^{\star}$ are strongly convex, $F^{\star}$ is merely convex in our setting.  Our modifications are:
	
\begin{itemize}
\item[i)] Inspired by the ideas of \citet{kovalev2021accelerated}, we perform a careful modification of the dual update step (update of $y$)  in order to retain linear convergence despite lack of strong convexity in $F^{\star}$. On the other hand, in contrast to the method of \citet{kovalev2021accelerated}, we do not assume $F^{\star}$ to be smooth. This modification leads to a new method, which we call APDA (Algorithm~\ref{alg:APDA}). APDA relies on the evaluation of the prox operators of both $G$ and $F^{\star}$. 
\item [ii)] Next, we remove the reliance on the prox operator of $G$, and instead allow for its inexact evaluation via a suitable user-defined gradient-based method, which we call $\cM$ (see \eqref{alg:APDA-Inex} and Lemma~\ref{lem_convergence_norm_of_gradient}). We call the resulting method ``APDA with Inexact Prox'' (Algorithm~\ref{alg:APDA-Inex}). The choice of  method $\cM$ will have a strong impact on the number of inexact/local steps, and this is one of the  places in which we improve upon the results of \citet{ProxSkip}. 
\end{itemize}

\subsection{General theory} Our general complexity results for Algorithms \ref{alg:APDA} and \ref{alg:APDA-Inex}, covered in Theorems~\ref{thm:APDA} and \ref{thm:APDA-Inex}, respectively, contrasted with the key baselines, are summarized in Table~\ref{tab:1}). While the method of \citep{chambolle2011first} needs $F^{\star}$ to be strongly convex to obtain a linear rate, we only need convexity. While the AltGDA~\citep{zhang2022near} and APDG~\citep{kovalev2021accelerated} methods enjoy linear rates without strong convexity of $F^{\star}$,  both require $F^{\star}$ to be $L_y$-smooth. In contrast, we do not need this assumption (i.e., we allow $L_y=\infty$). Our methods are the first to obtain linear convergence rates in the regime when $G$ is $L_x$-smooth and $\mu_x$-strongly-convex, and $F^{\star}$ is merely (proper closed and) convex, without requiring it to be $L_y$-smooth, nor $\mu_y$-strongly-convex. This is important in some applications. Our two methods offer two alternative ways of dealing with this regime: while APDA (Algorithm~\ref{alg:APDA}) relies on the evaluation of the prox of $G$, APDA with Inexact Prox (Algorithm~\ref{alg:APDA-Inex}) does not. As we shall see, the latter method has an important application in federated learning.

	\begin{table}[t]
		\centering
		\caption{Summary of our general convergence results provided by Theorem~\ref{thm:APDA-Inex} for Algorithm~\ref{alg:APDA-Inex} (APDA with Inexact Prox) and Theorem~\ref{thm:APDA-3} for Algorithm~\ref{alg:APDA-3} (APDA with Inexact Prox and Accelerated Gossip) for solving the saddle-point  reformulation \eqref{new_problem} of the federated learning problem \eqref{decent_problem}.}
		\label{tab:2}
		{
			 \scriptsize
			\begin{threeparttable}
				\begin{tabular}{ccccccc}
					\multirow{5}{*}{Algorithm}&\multirow{5}{*}{\begin{tabular}{c} Method $\cM$\tnote{\color{red}(2)} \\ for \\Inexact Prox\end{tabular}}&\multirow{5}{*}{\begin{tabular}{c} Deter-\\ ministic \\  \# comm.\\ rounds\end{tabular}}&\multicolumn{2}{c}{Centralized case} &\multicolumn{2}{c}{Decentralized case}\\
					\cmidrule(lr){4-7}
					&&&\begin{tabular}{c} Optimal \\ \#comm.\\ rounds?\end{tabular} &\begin{tabular}{c} \#Local \\steps\\ per round \end{tabular} & \begin{tabular}{c} Optimal\\ \#comm.\\ rounds?\end{tabular} & \begin{tabular}{c} \#Local \\steps\\ per round \end{tabular} \\
					\midrule
					\multirow{2}{*}{\begin{tabular}{c}ProxSkip\\ \citep{ProxSkip}\end{tabular}}&\multirow{2}{*}{GD}& \multirow{2}{*}{\xmark}& \multirow{2}{*}{\cmark}&\multirow{2}{*}{$\mathcal{O}\left(\sqrt{\kappa}\right)$} &\multirow{2}{*}{\cmark\tnote{\color{red}(1)}}&\multirow{2}{*}{$\widetilde{\mathcal{O}}\left(\sqrt{\kappa}\right)$\tnote{\color{red}(1)}}\\
					&&&&&& \\
					\midrule
					\multirow{4}{*}{Alg \ref{alg:APDA-Inex}; Thm~\ref{thm:APDA-Inex}} &\cellcolor{bgcolor} GD&\cellcolor{bgcolor}\cmark&\cellcolor{bgcolor}\cmark&\cellcolor{bgcolor}$\widetilde{\mathcal{O}}\left(\sqrt{\kappa}\right)$ &\cellcolor{bgcolor}\xmark&\cellcolor{bgcolor}$\widetilde{\mathcal{O}}\left(\sqrt{\kappa}\right)$ \\
					\cmidrule(lr){2-7}
					&\cellcolor{bgcolor}FGD+GD&\cellcolor{bgcolor}\cmark&\cellcolor{bgcolor}\cmark&\cellcolor{bgcolor}$\widetilde{\mathcal{O}}\left(\sqrt[3]{\kappa}\right)$ &\cellcolor{bgcolor}\xmark&\cellcolor{bgcolor}$\widetilde{\mathcal{O}}\left(\sqrt[3]{\kappa}\right)$ \\
					
					\cmidrule(lr){2-7}
					&\cellcolor{bgcolor}FGD+FSFOM&\cellcolor{bgcolor}\cmark&\cellcolor{bgcolor}\cmark&\cellcolor{bgcolor}$\widetilde{\mathcal{O}}\left(\sqrt[4]{\kappa}\right)$ &\cellcolor{bgcolor}\xmark&\cellcolor{bgcolor}$\widetilde{\mathcal{O}}\left(\sqrt[4]{\kappa}\right)$ \\
					
					\midrule
					\multirow{4}{*}{Alg \ref{alg:APDA-3};  Thm~\ref{thm:APDA-3}} &\cellcolor{bgcolor}GD&\cellcolor{bgcolor}\cmark&\cellcolor{bgcolor}\cmark&\cellcolor{bgcolor}$\widetilde{\mathcal{O}}\left(\sqrt{\kappa}\right)$ &\cellcolor{bgcolor}\cmark&\cellcolor{bgcolor}$\widetilde{\mathcal{O}}\left(\sqrt{\kappa}\right)$ \\
					
					\cmidrule(lr){2-7}
					&\cellcolor{bgcolor}FGD+GD&\cellcolor{bgcolor}\cmark&\cellcolor{bgcolor}\cmark&\cellcolor{bgcolor}$\widetilde{\mathcal{O}}\left(\sqrt[3]{\kappa}\right)$ &\cellcolor{bgcolor}\cmark&\cellcolor{bgcolor}$\widetilde{\mathcal{O}}\left(\sqrt[3]{\kappa}\right)$ \\
					
					\cmidrule(lr){2-7}
					&\cellcolor{bgcolor}FGD+FSFOM&\cellcolor{bgcolor}\cmark&\cellcolor{bgcolor}\cmark&\cellcolor{bgcolor}$\widetilde{\mathcal{O}}\left(\sqrt[4]{\kappa}\right)$ &\cellcolor{bgcolor}\cmark&\cellcolor{bgcolor}$\widetilde{\mathcal{O}}\left(\sqrt[4]{\kappa}\right)$ \\
					
					\bottomrule
					
				\end{tabular}
				\begin{tablenotes}
					{\tiny
						\item [{\color{red}(1)}] This is true only when $\kappa \leq \chi$. 
						\item [{\color{red}(1)}] GD = Gradient Descent; FGD = Fast Gradient Descent (i.e., Nesterov's accelerated GD); FSFOM = a fixed-step first-order method from \citep{kim2021optimizing}. 
					}
				\end{tablenotes}
		\end{threeparttable}}
	\end{table}

\subsection{Federated learning and a third method} When applied to the distributed/federated problem \eqref{decent_problem} (see Section~\ref{sec:FL}), APDA with Inexact Prox (Algorithm~\ref{alg:APDA-Inex}) turns out to be a theoretically better alternative to ProxSkip \citep{ProxSkip}. In the centralized case, our method requires the same optimal number of communication rounds ($\widetilde{\cO}(\sqrt{\kappa})$, where $\kappa = \nicefrac{L_x}{\mu_x}$) as ProxSkip, but requires fewer local gradient-type steps ($\cO(\kappa^{1/3})$ or $\cO(\kappa^{1/4})$, compared to $\cO(\kappa^{1/2})$ of ProxSkip, depending on the choice of the inner method $\cM$). Like ProxSkip, our method can be applied to optimization over a connected network, and we obtain theoretical improvements in this decentralized scenario as well. However, in the decentralized regime, neither ProxSkip nor Algorithm~\ref{alg:APDA-Inex}  obtain the optimal bound for the number of communication rounds. For this reason, we propose a third method (Algorithm \ref{alg:APDA-3}) which employs an accelerated gossip routine to remedy this situation. It is also notable that while ProxSkip uses a random number of local steps, all our methods perform a deterministic number of local steps.  Our complexity results are summarized in Table \ref{tab:2}.

\section{From Proximal Point Method to Chambolle-Pock} \label{sec:insights1}

In this section, we briefly motivate the development of the celebrated Chambolle-Pock method which acts as a starting point of our algorithm design.

\subsection{Proximal-Point Method for finding zeros of monotone operators}
	
Our starting point is the general problem of finding a zero of an (set-valued) operator  $A:\cH\rightarrow 2^\cH$, where $\cH$ is a Hilbert space, i.e., find $z\in \cH$ such that \begin{equation}\label{eq:A}0\in A(z).\end{equation} If $A$ is maximally monotone, its resolvent $({\rm Id} + \eta A)^{-1}$ is single valued, nonexpansive, and has full domain. Moreover, $0\in A(z)$ iff $z = ({\rm Id} + \eta A)^{-1}(z)$.  The corresponding fixed point iteration, i.e.,  $z^{k+1} = ({\rm Id} + \eta A)^{-1}(z^k)$, is called the proximal point method (PPM)  \citep{rockafellar1976monotone}.  This can be equivalently written as $z^k \in ({\rm Id} + \eta A)(z^{k+1})$, and subsequently as $$z^{k+1} \in z^k -\eta A(z^{k+1}).$$ From now on, for simplicity only, we will ignore the fact that in general, $A(z^{k+1})$ is a set, and will write $z^{k+1} = z^k -\eta A(z^{k+1})$ instead to mean the same thing, i.e., that there exists $u \in A(z^{k+1})$ such that $z^{k+1} = z^k -\eta u$.

	
\subsection{PPM applied to the saddle-point problem}\label{sec:PPM}	

The optimality conditions \eqref{opt_conditions} of the saddle point problem \eqref{main_problem} can be written in the form \eqref{eq:A} with $z = (x;y) \in \R^{d_x}\times \R^{d_y}$ as follows\footnote{We replaced $\nabla G$ by $\partial G$ here as the beginning of our story does not require $G$ to be smooth.}:
\begin{equation}\label{eq:898hfddf}\begin{pmatrix} 0 \\ 0\end{pmatrix} \in  A\begin{pmatrix} x \\ y\end{pmatrix} \eqdef \begin{pmatrix} \partial G(x) + K^\top y \\ \partial F^{\star}(y) - K x\end{pmatrix} .\end{equation}
Allowing for different stepsizes $\eta_x, \eta_y$ for each block of the vector $z=(x;y)$, PPM applied to \eqref{eq:898hfddf} 
takes the form
\begin{eqnarray*}x^{k+1} &=& x^k -\eta_x \left(\partial G(x^{k+1}) + K^{\top}{\color{red}{y^{k+1}}} \right) \label{eq:98f8fd}\\
y^{k+1} &=& y^k -\eta_y \left(\partial F^{\star}(y^{k+1}) - Kx^{k+1}\right). \label{eq:kjb9d8h9fd}\end{eqnarray*}
	
	The main advantage of this method is its unboundedly fast convergence rate under weak assumptions. According to Theorem \ref{th_conv_prox_point_method}, the proof of which we provide in the appendix for completeness, if $G$ and $F^{\star}$ are proper and closed, $G$ is $\mu_x$ strongly convex and  $F^{\star}$ is $\mu_y$ strongly convex, then any choice of stepsizes $\eta_x > 0$ and $\eta_y > 0$ (yes, without an upper bound!), PPM find  an $\varepsilon$-accurate solution in
	\begin{equation}\label{eq:PPM-rate-main}
	\squeeze			\cO\left(\left(1+\frac{1}{\min\{\eta_x\mu_x, \eta_y\mu_y\}}\right)\log\frac{1}{\varepsilon}\right)
	\end{equation}
	iterations.  Unfortunately, PPM is not implementable since in order to compute $x^{k+1}$, we need to know $\color{red}{y^{k+1}}$, and vice versa. 
			
\subsection{Chambolle-Pock: Making PPM implementable, and fast}		\label{sec:CP}

In order to overcome the above problem, \citet{chambolle2011first} proposed to replace $\color{red}{y^{k+1}}$ with $\color{red}y^k$ (see Algorithm 1 in \citep{chambolle2011first}), which leads to
\begin{eqnarray*}x^{k+1} &=& x^k -\eta_x \left(\partial G(x^{k+1}) + K^{\top}{\color{red}{y^{k}}} \right)\\
y^{k+1} &=& y^k -\eta_y \left(\partial F^{\star}(y^{k+1}) - Kx^{k+1}\right).\end{eqnarray*}
	Although this method is implementable, it has its own disadvantages, one of which is its weak iteration complexity bound
	\begin{equation}\label{eq:CP-1}
		\squeeze		\cO\left(\frac{L^2_{xy}}{\mu_x\mu_y}\log\frac{1}{\varepsilon}\right).
	\end{equation}
	
\citet{chambolle2011first} proposed to fix this problem via an extrapolation step	of the dual variable (see Algorithm 3 in \citep{chambolle2011first}):
\begin{eqnarray*}x^{k+1} &=& x^k -\eta_x \left(\partial G(x^{k+1}) + K^{\top}{\color{red}{\bar{y}^{k}}} \right) \\
y^{k+1} &=& y^k -\eta_y \left(\partial F^{\star}(y^{k+1}) - Kx^{k+1}\right) \\
{\color{red}{\bar{y}^{k+1}}} &=& y^{k+1}+\theta(y^{k+1} - y^k). \end{eqnarray*}
	
This new method enjoys the much better iteration complexity bound 
	\begin{equation}\label{eq:CP-2}
		\squeeze		\cO\left(\frac{L_{xy}}{\sqrt{\mu_x\mu_y}}\log\frac{1}{\varepsilon}\right).
	\end{equation}

\section{Accelerated Primal-Dual Algorithm (Algorithm~\ref{alg:APDA})}\label{sec:APDA}
	
	
Recall that the Chambolle-Pock method requires $F^{\star}$ to be strongly-convex to obtain a linear convergence rate. However, in our setting, $F^{\star}$ is not strongly convex\footnote{We would need to assume $F$ to be smooth to ensure that $F^{\star}$ is strongly convex. However, we do not want to do this as this is not satisfied ion many scenarios,  in particular, in our key application to federated learning.} (see Assumption~\ref{as_convexity}), and Chambolle-Pock method does not converge linearly in this scenario. 
	
\subsection{Modifying Chambolle-Pock to preserve linear rate without strong convexity of $F^{\star}$}	

		\begin{algorithm}[!t]
		\caption{APDA}
		\label{alg:APDA}
		\begin{algorithmic}[1]
			\State \textbf{Input}: Initial point $(x^0, y^0) \in \R^{d_x}\times\R^{d_y}$, $\Bar{y}^0 = y^0$; Step sizes $\eta_x, \eta_y, \beta_y >0$, $\theta \in [0, 1]$
			\For{$k = 0,1,\dots$}			
			\State ${\color{red}x^{k+1}} = x^k -\eta_x\left(\nabla G({\color {red}x^{k+1}}) + K^{\top}\Bar{y}^k\right)$
			\State $y^{k+1} = y^k -\eta_y\left(\partial F^{\star}(y^{k+1}) - K {\color{red}x^{k+1}}\right) - \eta_y\beta_y K\left(K^{\top}y^k+\nabla G({\color{red}x^{k+1}})\right)$


			\State $\Bar{y}^{k+1} = y^{k+1} + \theta\left(y^{k+1}-y^k\right)$
			\EndFor 
		\end{algorithmic}
	\end{algorithm}

 To obtain a linear rate, we modify the dual update step of the algorithm using a trick proposed by \citet{kovalev2021accelerated} that was shown  to work in the regime when $F^{\star}$ is smooth; the innovation here is that we do not need this assumption (see Table~\ref{tab:1}). From this point onwards,  we will also need to assume  $G$ to be $L_x$-smooth (see Assumption~\ref{as_smoothness}). In particular, we propose to {\color{blue}modify the update step for $y^{k+1}$ in the Chambolle-Pock method  as follows:}
\begin{eqnarray*}x^{k+1} &=& x^k -\eta_x \left(\nabla G(x^{k+1}) + K^{\top}{\color{red}{\bar{y}^{k}}} \right) \\
y^{k+1} &=& y^k -\eta_y \left(\partial F^{\star}(y^{k+1}) - Kx^{k+1}\right) - \color{blue}{\eta_y\beta_y K\left(K^{\top}y^k+\nabla G(x^{k+1})\right)} \\
{\color{red}{\bar{y}^{k+1}}} &=& y^{k+1}+\theta(y^{k+1} - y^k) .\end{eqnarray*}
This is a new method, which we call APDA (formalized as Algorithm~\ref{alg:APDA}). 

\subsection{APDA converges linearly}

Our first result shows that APDA indeed converges linearly, without the need for $F^{\star}$ to be strongly convex.

\begin{theorem}[Convergence of APDA; informal]\label{thm:APDA-informal}Let Assumptions~\ref	{as_strongly_convex}, \ref{as_smoothness},\ref{as_convexity} and \ref{as_L_xy} hold. Then,  with a suitable selection of stepsizes, APDA (Algorithm~\ref{alg:APDA})  solves problem \eqref{main_problem} in \begin{equation}
		\squeeze		\cO\left(\max\left\{\sqrt{\frac{L_x}{\mu_x}}\frac{L_{xy}}{\mu_{xy}}, \frac{L^2_{xy}}{\mu^2_{xy}}\right\}\log\frac{1}{\varepsilon}\right)
		\end{equation}
 iterations.		
		\end{theorem}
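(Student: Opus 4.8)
The plan is to establish linear convergence of APDA by constructing a Lyapunov function that combines the primal and dual distances to the optimum, together with an extra term that accounts for the extrapolation step. Concretely, I would work with a potential of the form
\begin{equation*}
\Psi^k \eqdef \frac{1}{\eta_x}\|x^k - x^\star\|^2 + \frac{1}{\eta_y}\|y^k - y^\star\|^2 + c\,\|y^k - y^{k-1}\|^2 + \text{(a Bregman-type dual term)},
\end{equation*}
and show $\Psi^{k+1} \le \rho\, \Psi^k$ for some $\rho < 1$. The starting point is the standard primal-dual analysis of Chambolle-Pock: write down the optimality conditions for the implicit $x^{k+1}$ and $y^{k+1}$ updates (lines 3--4 of Algorithm~\ref{alg:APDA}), subtract the fixed-point relations \eqref{opt_conditions} satisfied by $(x^\star,y^\star)$, and take inner products with $x^{k+1}-x^\star$ and $y^{k+1}-y^\star$ respectively. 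Strong convexity of $G$ (Assumption~\ref{as_strongly_convex}) gives an extra $\mu_x\|x^{k+1}-x^\star\|^2$ term from the primal line, which is the source of primal contraction; convexity of $F^\star$ (Assumption~\ref{as_convexity}) gives only a nonnegative monotonicity term on the dual side, which is why the vanilla Chambolle-Pock argument stalls.

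The crux is to extract dual contraction out of the correction term $-\eta_y\beta_y K(K^\top y^k + \nabla G(x^{k+1}))$. The key observation is that $K^\top y^\star + \nabla G(x^\star) = 0$, so $K^\top y^k + \nabla G(x^{k+1})$ is a residual that measures how far we are from stationarity; rewriting it as $K^\top(y^k - y^\star) + (\nabla G(x^{k+1}) - \nabla G(x^\star))$ and using $L_x$-smoothness (Assumption~\ref{as_smoothness}) to control the second piece, the correction term contributes a negative multiple of $\|K^\top(y^k-y^\star)\|^2$, which — via Assumption~\ref{as_L_xy} guaranteeing $\|K^\top(y-y^\star)\| \ge \mu_{xy}\|y-y^\star\|$ on the relevant subspace (the range of $K$, where $\partial F^\star$ lives by the optimality condition) — becomes a genuine $-\mu_{xy}^2\|y^k-y^\star\|^2$ contraction. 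I would need to be careful that the iterates $y^k - y^\star$ indeed stay in the subspace where this eigenvalue bound holds, which follows inductively from the structure of the dual update provided $y^0$ is initialized appropriately, and to handle the cross terms between $K^\top(y^k-y^\star)$ and the gradient difference via Young's inequality, absorbing the $\|x^{k+1}-x^\star\|^2$ leftover into the $\mu_x$ margin from the primal step (this is where a budget constraint like $\eta_y\beta_y L_x^2 \lesssim \mu_x/\eta_x$ appears).

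Next I would handle the extrapolation $\bar y^{k+1} = y^{k+1} + \theta(y^{k+1}-y^k)$ by the usual telescoping device: the term $\langle K^\top\bar y^k, x^{k+1}-x^\star\rangle$ appearing in the primal estimate is split as $\langle K^\top y^k, \cdot\rangle + \theta\langle K^\top(y^k - y^{k-1}), \cdot\rangle$, and the mismatch is reorganized into a difference of consecutive $\langle K(x^{k+1}-x^\star), y^k-y^{k-1}\rangle$ terms plus a residual $\|y^k-y^{k-1}\|^2$ term, which is exactly why the potential needs the $\|y^k-y^{k-1}\|^2$ summand; balancing the coupling requires $\theta$ close to $\rho$ and a stepsize relation of the type $\eta_x\eta_y L_{xy}^2 \lesssim 1$. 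Assembling all inequalities, choosing $\eta_x \sim \min\{1/L_x,\ \mu_{xy}^2/(\mu_x L_{xy}^2)\}$-scale stepsizes (tuned so both the $\mu_x\eta_x$ primal rate and the $\mu_{xy}^2\eta_y$ dual rate match), and reading off the contraction factor gives $\rho = 1 - \Omega(\min\{\eta_x\mu_x,\ \eta_y\mu_{xy}^2\})$, and substituting the stepsize choices yields exactly the iteration count $\cO(\max\{\sqrt{L_x/\mu_x}\,L_{xy}/\mu_{xy},\ L_{xy}^2/\mu_{xy}^2\}\log\frac1\varepsilon)$ claimed. The main obstacle I anticipate is the bookkeeping in the second paragraph: getting the correction term to yield a clean $-\mu_{xy}^2\|y^k-y^\star\|^2$ without the gradient-difference cross terms eating the primal margin, and simultaneously keeping the extrapolation telescoping consistent with the modified dual update — the two modifications interact, and the stepsize conditions from each must be shown to be jointly satisfiable, which is precisely the "suitable selection of stepsizes" hidden in the informal statement.
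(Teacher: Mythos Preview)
Your high-level strategy matches the paper's: extract dual contraction from the correction term via $\|K^\top(y^k-y^\star)\|^2 \ge \mu_{xy}^2\|y^k-y^\star\|^2$ (this is the paper's Lemma~\ref{lem:convenience}, which holds for \emph{all} $y$ under Assumption~\ref{as_L_xy}, so no subspace-preservation induction is needed), and handle the extrapolation by telescoping, with the Lyapunov carrying the cross term $-2\la K^\top(y^{k+1}-y^k), x^{k+1}-x^\star\ra$ (an inner product, not a Bregman term) alongside $\tfrac{1}{2\eta_y}\|y^{k+1}-y^k\|^2$.

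There is, however, one genuine gap, and it sits exactly where you flagged the ``main obstacle''. You propose to control the gradient-difference residual $\|\nabla G(x^{k+1})-\nabla G(x^\star)\|^2$ arising from the correction term by bounding it as $\le L_x^2\|x^{k+1}-x^\star\|^2$ and absorbing into the $\mu_x$ margin. That forces $\beta_y \lesssim \mu_x/L_x^2$, and if you then balance $\mu_x\eta_x$ against the dual rate $\mu_{xy}^2\beta_y\eta_y$ under the coupling constraint $\eta_x\eta_y L_{xy}^2 \lesssim 1$, you obtain iteration count $\cO\!\left(\tfrac{L_x}{\mu_x}\cdot\tfrac{L_{xy}}{\mu_{xy}}\right)$ rather than $\cO\!\left(\sqrt{\tfrac{L_x}{\mu_x}}\cdot\tfrac{L_{xy}}{\mu_{xy}}\right)$ --- off by a full $\sqrt{L_x/\mu_x}$. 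The paper avoids this by using, on the primal side, the interpolation inequality for $\mu_x$-strongly-convex $L_x$-smooth $G$,
\[
2\la \nabla G(x^{k+1}) - \nabla G(x^\star),\, x^{k+1}-x^\star\ra \;\ge\; \mu_x\|x^{k+1}-x^\star\|^2 \;+\; \tfrac{1}{L_x}\|\nabla G(x^{k+1}) - \nabla G(x^\star)\|^2,
\]
which injects an extra $-\tfrac{1}{L_x}\|\nabla G(x^{k+1})-\nabla G(x^\star)\|^2$ into the primal descent lemma (Lemma~\ref{descent_lemma_by_x_alg_2}). On the dual side, the correction term is expanded via a parallelogram identity (rather than Young) to yield exactly $+\beta_y\|\nabla G(x^{k+1})-\nabla G(x^\star)\|^2$ together with the desired $-\beta_y\|K^\top(y^k-y^\star)\|^2$ (Lemma~\ref{descent_lemma_by_y_alg_2}). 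These two gradient-norm terms cancel under the mild constraint $\beta_y \le 1/L_x$, a factor $L_x/\mu_x$ looser than yours, and this is precisely what buys the square root. The contraction factor is $\theta = \max\{(1+\mu_x\eta_x)^{-1},\, 1 - \mu_{xy}^2\beta_y\eta_y\}$ (note the $\beta_y$ you omitted in the dual rate), with $\eta_x = \tfrac{\mu_{xy}}{2L_{xy}\sqrt{L_x\mu_x}}$, $\eta_y = \tfrac{\sqrt{L_x\mu_x}}{L_{xy}\mu_{xy}}$, and $\beta_y = \min\{1/L_x,\,1/(2L_{xy}^2\eta_y)\}$.
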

The formal statement and proof can be found in the appendix (see Theorem~\ref{thm:APDA}).


\section{Accelerated Primal-Dual Algorithm with Inexact Prox (Algorithm~\ref{alg:APDA-Inex})}	

The key disadvantage of APDA is that it requires the evaluations of the proximity operator of $G$, which can be  very expensive in some applications. To remedy the situation, we first
notice that Step 3 of APDA can be equivalently written in the form
\begin{equation}\squeeze x^{k+1} =\arg\min \limits_{x\in \R^{d_x}} \left\{\Psi^k (x) \eqdef G(x) + \frac{1}{2\eta_x} \left\|x - \left(x^k - \eta_xK^{\top}\Bar{y}^k\right)\right\|^2 \right\}; \label{auxiliary_problem}\end{equation}
that is, this step involves the evaluation of the prox of $G$. 
The key idea of this section is to  replace this by an inexact prox computation via a suitably selected iterative method $\cM$ (this is the method performing the inner iterations in Table~\ref{tab:1}). This leads to our next method: APDA with Inexact Prox (Algorithm~\ref{alg:APDA-Inex}).

	\begin{algorithm}[!h]
		\caption{APDA with Inexact Prox}
		\label{alg:APDA-Inex}
		\begin{algorithmic}[1]
			\State \textbf{Input}: Initial point $(x^0, y^0) \in \R^{d_x}\times\R^{d_y}$, $\Bar{y}^0 = y^0$; Step sizes $\eta_x, \eta_y, \beta_y >0$, $\theta \in [0, 1]$; \# inner iterations $T$
			\For{$k = 0,1,\dots$}
			\State Find $\hat{x}^k$ as a final point of $T$ iteration of some method $\cM$ for following problem:
			\begin{equation*}				
					\squeeze	{\color{red}\hat{x}^k} \approx \arg\min \limits_{x\in \R^{d_x}}\left\{\Psi^k (x) \eqdef G(x) + \frac{1}{2\eta_x} \left\|x - \left(x^k - \eta_xK^{\top}\Bar{y}^k\right) \right\|^2  \right\}
			\end{equation*}
			\State $x^{k+1} = x^k -\eta_x\left(\nabla G({\color{red}\hat{x}^k}) + K^{\top}\Bar{y}^k\right)$
			\State $y^{k+1} = y^k -\eta_y\left(\partial F^{\star}(y^{k+1}) - K{\color{red}\hat{x}^k}\right) - \eta_y\beta_y K\left(K^{\top}y^k+\nabla G({\color{red}\hat{x}^k})\right)$
			\State $\Bar{y}^{k+1} = y^{k+1} + \theta\left(y^{k+1}-y^k\right)$
			\EndFor 
		\end{algorithmic}
	\end{algorithm}
	

\subsection{Gradient methods for finding a stationary point of convex functions}
	
A key feature of Algorithm \ref{alg:APDA-Inex} is its reliance on a subroutine $\cM$ for an inexact evaluation of the prox of $G$ via solving the auxiliary problem \eqref{auxiliary_problem}. Our theory requires the method $\cM$ to be able to output, after $T$ iterations, a point $\hat{x}^k$ such that
	\begin{equation}
		\squeeze		\|\nabla \Psi^k(\hat{x}^k)\|^2 \leq \cO\left(\frac{1}{T^{\alpha}}\right),
	\end{equation}
where  $\alpha\geq 2$. In other words, we require a reduction of the squared norm of the gradient with a fast sublinear rate. In the next lemma, we present three examples of such methods.	

	\begin{lemma}
		\label{lem_convergence_norm_of_gradient}
		Let $\Psi:\R^{d_x}\to \R$ be an  $L$-smooth convex function, and let $w^{\star}$ be a minimizer of $\Psi$.   Then there exists a gradient-based method $\cM$ which after $T$ iterations outputs a point $w^T$ satisfying	 
			\begin{equation}
			\label{convergence_norm_grad}
		\squeeze			\|\nabla \Psi(w^{T})\|^2 \leq \frac{AL^2\|w^0-w^{\star}\|^2}{T^{\alpha}},
		\end{equation}
for all starting points $x^0\in \R^{d_x}$ and some universal constant  $A>0$.
		In particular, 
\begin{itemize}		
\item[(i)]   if  $\cM$ is GD, then	$\|\nabla \Psi(w^{T})\|^2 \leq \frac{4L^2\|w^0-w^{\star}\|^2}{T^{2}}$,\\
\item[(ii)]  if  $\cM$ is  a combination\footnote{The first half of the iterations is solved via the Fast Gradient Descent (FGD) method of \citet{NesterovBook}, and the second half via Gradient Descent (GD).} of Fast GD~\citep{NesterovBook}  and GD, then $\|\nabla \Psi(w^{T})\|^2 \leq \frac{64L^2\|w^0-w^{\star}\|^2}{T^{3}}$,\\
\item[(iii)]  if  $\cM$ is  a combination\footnote{The first half of the iterations is solved via the Fast Gradient Descent (FGD) method of \citet{NesterovBook}, and the second half via the fixed-step first-order method (FSFOM) of \citet{kim2021optimizing}.} of Fast GD~\citep{NesterovBook} and FSFOM~\citep{kim2021optimizing}, then $\|\nabla \Psi(w^{T})\|^2 \leq \frac{256L^2\|w^0-w^{\star}\|^2}{T^{4}}.$
\end{itemize}	
	\end{lemma}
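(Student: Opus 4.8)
\textbf{Proof plan for Lemma~\ref{lem_convergence_norm_of_gradient}.}

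The plan is to establish the three claimed bounds by combining known function-value convergence rates for each method with a standard smoothness inequality that converts a function-value gap into a gradient-norm bound. The key elementary fact is that for an $L$-smooth convex function $\Psi$ with minimizer $w^{\star}$, one has $\|\nabla\Psi(w)\|^2 \leq 2L\left(\Psi(w)-\Psi(w^{\star})\right)$ for all $w$; this follows by applying the descent lemma at the point $w - \tfrac{1}{L}\nabla\Psi(w)$ and using $\Psi(w^{\star})\leq\Psi(w-\tfrac{1}{L}\nabla\Psi(w))$. So in each case it suffices to control $\Psi(w^T)-\Psi(w^{\star})$ and feed it through this inequality. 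First I would state and prove this auxiliary inequality, since it is the common engine of all three parts.

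For part (i), GD with stepsize $1/L$ satisfies the textbook bound $\Psi(w^T)-\Psi(w^{\star}) \leq \tfrac{L\|w^0-w^{\star}\|^2}{2T}$, which combined with the auxiliary inequality gives $\|\nabla\Psi(w^T)\|^2 \leq \tfrac{L^2\|w^0-w^{\star}\|^2}{T}$ — only a $1/T$ rate, which is \emph{not} enough. The actual trick, which I expect to be the crux of the argument, is the standard two-phase scheme: run a fast/accelerated method for the first $T/2$ iterations to get a good iterate $w^{T/2}$ with small function gap, then run the slow method (GD) for the remaining $T/2$ iterations \emph{starting from $w^{T/2}$}, so that the GD function-gap bound is now measured against the already-small quantity $\|w^{T/2}-w^{\star}\|^2$. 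Since for GD one also has monotone decrease and $\|w^{T/2}-w^{\star}\|^2 \leq \tfrac{2}{\mu}(\Psi(w^{T/2})-\Psi(w^{\star}))$ is unavailable (no strong convexity), I would instead use that $\Psi(w^{T})-\Psi(w^{\star}) \leq \tfrac{2L\|w^{T/2}-w^{\star}\|^2}{T}$ together with $\|w^{T/2}-w^{\star}\| \leq \|w^0-w^{\star}\|$ (GD iterates are Fejér-monotone toward the minimizer). Then the gradient-norm bound inherits the product of the two rates. Concretely: for part (ii), FGD gives $\Psi(w^{T/2})-\Psi(w^{\star}) = \cO(L\|w^0-w^{\star}\|^2/T^2)$; running GD for the second half yields $\|\nabla\Psi(w^T)\|^2 \leq 2L(\Psi(w^T)-\Psi(w^{\star})) \leq 2L\cdot\tfrac{2L\|w^{T/2}-w^\star\|^2}{T}$, but one must instead bound $\Psi(w^T)-\Psi(w^\star)$ directly — the cleanest route is: GD for $T/2$ steps gives $\|\nabla\Psi(w^T)\|^2 \leq \tfrac{4L^2\|w^{T/2}-w^\star\|^2}{T^2} \cdot \tfrac{T^2}{(T/2)^2}\cdot$(rate of $\Psi$-gap at $w^{T/2}$)$\ldots$; more carefully, one applies a GD bound of the form $\min_{t\le T/2}\|\nabla\Psi(w^t)\|^2 \le \tfrac{cL(\Psi(w^{T/2}_{\text{start}})-\Psi^\star)}{T}$ started at the FGD output, giving $\cO(L^2\|w^0-w^\star\|^2/T^3)$. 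For part (iii), replace GD in the second phase by Kim's FSFOM, whose last-iterate gradient-norm guarantee is $\cO(L(\Psi(w^0_{\text{start}})-\Psi^\star)/T^2)$, so starting it from the FGD output yields $\cO(L^2\|w^0-w^\star\|^2/T^4)$.

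The main obstacle is bookkeeping the constants and the "start the second method from the first method's output" composition cleanly — in particular, quoting the correct form of each method's guarantee: FGD needs its $\cO(1/T^2)$ \emph{function-gap} bound, GD needs a $\cO(1/T)$ \emph{gradient-norm-given-initial-gap} (or equivalently the $\cO(1/T)$ function-gap plus the auxiliary inequality, with the observation that the second-phase initialization has gap $\cO(1/(T/2)^2)$ rather than $\cO(1)$), and FSFOM needs its optimal $\cO(1/T^2)$ \emph{gradient-norm-given-initial-gap} bound. I would finish by collecting the constants to match $A\in\{4,64,256\}$ and noting that the universal-constant statement \eqref{convergence_norm_grad} with $\alpha\in\{2,3,4\}$ follows immediately, and that the Fejér monotonicity $\|w^{t+1}-w^\star\|\leq\|w^t-w^\star\|$ used to pass from $w^{T/2}$ back to $w^0$ holds for all three constituent methods with their prescribed stepsizes.
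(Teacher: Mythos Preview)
Your two-phase idea and the specific second-phase bounds you eventually reach---the GD telescoping estimate $\min_{t}\|\nabla\Psi(w^t)\|^2 \le \frac{cL(\Psi(w_{\mathrm{start}})-\Psi^\star)}{T}$ and Kim's FSFOM bound $\|\nabla\Psi(w^T)\|^2 \le \frac{cL(\Psi(w_{\mathrm{start}})-\Psi^\star)}{T^2}$---are exactly what the paper uses, and composing them with the first-phase function-value rates (GD's $\cO(1/K)$ or FGD's $\cO(1/K^2)$) is the paper's argument. So the skeleton is right.

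However, two ingredients you lean on are wrong or superfluous, and part~(i) is left incomplete. First, the ``auxiliary inequality'' $\|\nabla\Psi(w)\|^2 \le 2L(\Psi(w)-\Psi^\star)$ is not the engine: it forces you to bound $\Psi(w^T)-\Psi^\star$ via $\|w^{T/2}-w^\star\|^2$, and that distance is \emph{not} controlled by the first-phase function-value rate. Your fix---Fej\'er monotonicity---only gives $\|w^{T/2}-w^\star\|\le\|w^0-w^\star\|$, which gains nothing from the split (you recover $1/T$, not $1/T^2$, for part~(i)). Second, the claim that Fej\'er monotonicity holds ``for all three constituent methods'' is false: Nesterov's FGD is famously non-monotone in $\|w^t-w^\star\|$. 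Fortunately the paper's route never needs it: the second-phase bounds depend on the \emph{function value} at the handoff point, which the first-phase FGD (or GD) bound already controls directly in terms of $\|w^0-w^\star\|^2$.

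For part~(i) specifically, there is no accelerated phase: both halves are plain GD. The paper runs GD for $K$ steps to get $\Psi(w^K)-\Psi^\star \le \frac{L\|w^0-w^\star\|^2}{2K}$, then runs GD for another $K$ steps and uses the telescoping sum $\sum_{k=K}^{2K}\|\nabla\Psi(w^k)\|^2 \le 2L(\Psi(w^K)-\Psi^\star)$ together with the fact that $\|\nabla\Psi(w^k)\|$ is nonincreasing along GD to conclude $\|\nabla\Psi(w^{2K})\|^2 \le \frac{L^2\|w^0-w^\star\|^2}{K^2}$. Drop the auxiliary inequality and Fej\'er monotonicity entirely, state part~(i) as GD$+$GD, and your argument for (ii) and (iii) then goes through cleanly.
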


Let $w^{\star k}=\arg\min\limits_{w\in \R^{d_x}} \Psi^k(w)$.	Since $\Psi^k$ is $\left(L_x +\eta_x^{-1} \right)$-smooth,  Lemma \ref{lem_convergence_norm_of_gradient} implies that $T$ iterations of a method $\cM$ satisfying \eqref{convergence_norm_grad} applied to the auxiliary problem \eqref{auxiliary_problem} with starting point $w^0=x^k$ yield  point  $w^T=\hat{x}^{k}$	for which \begin{equation}
		\label{complexity_aux_problem}
	\squeeze			\|\nabla \Psi^k(\hat{x}^{k})\|^2 \leq \frac{A\left(\eta^{-1}_x+ L_x\right)^2\|x^k-w^{\star k}\|^2}{T^{\alpha}} = \frac{A\left(1+ \eta_x L_x\right)^2\|x^k-w^{\star k}\|^2}{\eta^2_x T^{\alpha}}.
	\end{equation}

\subsection{APDA with Inexact Prox converges linearly}

Now we can provide the main theorem with the total complexity of gradient computation $\nabla G$ and proximity operator computation $\partial F^{\star}$. 
	\begin{theorem}
		\label{thm:APDA-Inex}		
		Let Assumptions \ref{as_strongly_convex}, \ref{as_smoothness}, \ref{as_convexity}, \ref{as_L_xy} hold. Then there exist  parameters of Algorithm \ref{alg:APDA-Inex} such that the total \# of evaluations of prox $F^\star$ and the total number evaluations of the gradient of $\nabla G$ to find  an $\varepsilon$ solution of problem \eqref{main_problem} are
		\begin{equation}
			\label{complexity_alg_1}
		\squeeze			\mathcal{O}\left(\max\left\{\sqrt{\frac{L_x}{\mu_x}}\frac{L_{xy}}{\mu_{xy}},\frac{L^2_{xy}}{\mu^2_{xy}}\right\}\log\frac{1}{\varepsilon}\right), \; \cO\left(\max\left\{\left(\frac{L_x}{\mu_x}\right)^{\frac{2+\alpha}{2\alpha}}\frac{L_{xy}}{\mu_{xy}},\sqrt[\alpha]{\frac{L_x}{\mu_x}}\frac{L^2_{xy}}{\mu^2_{xy}}\right\}\log\frac{1}{\varepsilon}\right),
		\end{equation}
		respectively.
		In particular,
\begin{itemize}
\item [(i)] if the inner method $\cM$  is GD, then the total number of $\nabla G$ computations is equal to
			\begin{equation}\label{eq:u9fd8g9f8dh--}
		\squeeze				\cO\left(\max\left\{\frac{L_x}{\mu_x}\frac{L_{xy}}{\mu_{xy}},\sqrt{\frac{L_x}{\mu_x}}\frac{L^2_{xy}}{\mu^2_{xy}}\right\}\log\frac{1}{\varepsilon}\right),
			\end{equation}
\item [(ii)] if the inner method $\cM$ is combination of Fast GD and GD, then the total number of $\nabla G$ computations is equal to
			\begin{equation}\label{eq:biogh98df-9878yhfd}
		\squeeze				\cO\left(\max\left\{\left(\frac{L_x}{\mu_x}\right)^{\frac{5}{6}}\frac{L_{xy}}{\mu_{xy}},\sqrt[3]{\frac{L_x}{\mu_x}}\frac{L^2_{xy}}{\mu^2_{xy}}\right\}\log\frac{1}{\varepsilon}\right),
			\end{equation}
\item [(iii)] if the inner method $\cM$  is combination of Fast GD and FSFOM, then the total number of $\nabla G$ computations is equal to
			\begin{equation}\label{eq:hgiuygfdUYGUYG089f}
		\squeeze		\cO\left(\max\left\{\left(\frac{L_x}{\mu_x}\right)^{\frac{3}{4}}\frac{L_{xy}}{\mu_{xy}},\sqrt[4]{\frac{L_x}{\mu_x}}\frac{L^2_{xy}}{\mu^2_{xy}}\right\}\log\frac{1}{\varepsilon}\right).
			\end{equation}
\end{itemize}			

	\end{theorem}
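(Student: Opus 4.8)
\textbf{Proof proposal for Theorem~\ref{thm:APDA-Inex}.}

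The plan is to analyze Algorithm~\ref{alg:APDA-Inex} as an inexact instance of APDA, carrying the prox-error term $\hat{x}^k - w^{\star k}$ (equivalently $\|\nabla\Psi^k(\hat x^k)\|$) through the Lyapunov analysis that already proves linear convergence of the exact method (Theorem~\ref{thm:APDA}). The first step is to write Steps 4--6 of Algorithm~\ref{alg:APDA-Inex} in terms of the exact update $x_{\rm ex}^{k+1} = x^k - \eta_x(\nabla G(w^{\star k}) + K^\top\bar y^k) = w^{\star k}$ plus a perturbation: since $\nabla\Psi^k(w) = \nabla G(w) + \eta_x^{-1}(w - x^k + \eta_x K^\top\bar y^k)$, one has $\hat x^k - w^{\star k}$ controlled by $\|\nabla\Psi^k(\hat x^k)\|$ up to the strong convexity modulus $\mu_x + \eta_x^{-1}$ of $\Psi^k$, and also $x^{k+1} - w^{\star k} = -\eta_x\nabla\Psi^k(\hat x^k) + (\hat x^k - w^{\star k})\cdot(\text{smooth part})$; more cleanly, $x^{k+1} = \hat x^k - \eta_x\nabla\Psi^k(\hat x^k)$, which is a gradient step and hence $\|x^{k+1} - w^{\star k}\| \le c\,\eta_x\|\nabla\Psi^k(\hat x^k)\|$ for an appropriate constant. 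The key point is that every place where the exact analysis used "$x^{k+1}$ minimizes $\Psi^k$" must now absorb an additive error proportional to $\|\nabla\Psi^k(\hat x^k)\|$, and by \eqref{complexity_aux_problem} this error is $\le \sqrt{A}(1+\eta_x L_x)\eta_x^{-1}T^{-\alpha/2}\|x^k - w^{\star k}\|$.

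The second step is to re-run the Lyapunov descent inequality of Theorem~\ref{thm:APDA}. Let $\Phi^k$ be the Lyapunov function (a combination of $\|x^k - x^\star\|^2$, $\|y^k - y^\star\|^2$, and the extrapolation/coupling terms) for which the exact method satisfies $\Phi^{k+1} \le (1-\delta)\Phi^k$ with $\delta = \Theta(1/A_{xy})$. Repeating this computation with $\hat x^k$ in place of the exact minimizer produces $\Phi^{k+1} \le (1-\delta)\Phi^k + E^k$, where the error term $E^k$ is a quadratic form in $\|\nabla\Psi^k(\hat x^k)\|$ and cross terms with $\|x^k - x^\star\|$, $\|\bar y^k - y^\star\|$. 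Now I bound $\|x^k - w^{\star k}\|$: since $w^{\star k}$ is a contraction-type image, $\|x^k - w^{\star k}\| \le \|x^k - x^\star\| + \|x^\star - w^{\star k}\|$ and $\|x^\star - w^{\star k}\|$ is itself controlled by the distance of the dual iterate to optimum (because at optimality $x^\star = \arg\min_x\{G(x) + \frac{1}{2\eta_x}\|x - (x^\star - \eta_x K^\top y^\star)\|^2\}$), so $\|x^k - w^{\star k}\|^2 \le C(\|x^k-x^\star\|^2 + \|\bar y^k - y^\star\|^2) \le C'\Phi^k$. Substituting, $E^k \le \big(\text{poly}(\kappa_x,\kappa_{xy})\,T^{-\alpha/2}\big)\Phi^k$; choosing $T$ so that this prefactor is at most $\delta/2$ yields $\Phi^{k+1}\le (1 - \delta/2)\Phi^k$, i.e., the same linear rate as the exact method up to constants, hence the same number of outer iterations $N = \cO(A_{xy}\log\frac1\varepsilon)$ and the same first complexity bound in \eqref{complexity_alg_1}.

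The third step is the inner-iteration count. Requiring $\text{poly}(\kappa_x,\kappa_{xy})\,T^{-\alpha/2} \le \delta/2 = \Theta(A_{xy}^{-1})$ forces $T = \cO\big((\kappa_x^{a}\kappa_{xy}^{b}A_{xy})^{2/\alpha}\big)$ for the exponents $a,b$ emerging from tracking $\eta_x L_x$, $\eta_x^{-1}$, and the cross terms in $E^k$ (with $\eta_x = \Theta(1/L_x)$ the relevant factors are $(1+\eta_x L_x)^2 = \Theta(1)$, $\eta_x^{-2} = \Theta(L_x^2)$, balanced against $\Phi^k$); I expect this to reduce to $T = \cO\big(\max\{\kappa_x^{1/\alpha}\kappa_{xy}^{2/\alpha}\cdot(\text{something}),\dots\}\big)$, and multiplying $N\cdot T$ and simplifying with $A_{xy} = \max\{\kappa_x^{1/2}\kappa_{xy},\kappa_{xy}^2\}$ gives exactly the general $\nabla G$-count $\cO(\max\{\kappa_x^{(2+\alpha)/(2\alpha)}\kappa_{xy}, \kappa_x^{1/\alpha}\kappa_{xy}^2\}\log\frac1\varepsilon)$. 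Specializing $\alpha=2$ (GD), $\alpha=3$ (FGD+GD), $\alpha=4$ (FGD+FSFOM) via Lemma~\ref{lem_convergence_norm_of_gradient} then yields parts (i)--(iii).

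The main obstacle I anticipate is step two: verifying that the perturbation $E^k$ is genuinely dominated by $\Phi^k$ with the stated polynomial prefactor, rather than by something worse. The delicate point is that $w^{\star k}$ drifts with $k$, so one must argue $\|x^k - w^{\star k}\|$ does not blow up relative to the Lyapunov function — this is where the inequality $\|x^\star - w^{\star k}\| \lesssim \|\bar y^k - y^\star\|$ (using $L_{xy}$ and the optimality characterization of $w^{\star k}$) is essential, and getting its constant to combine correctly with the modified dual step's $\beta_y$-correction term requires care. A secondary subtlety is that the cross terms in $E^k$ (linear in $\|\nabla\Psi^k\|$) must be split via Young's inequality in a way that keeps the $\Phi^k$ coefficient below $\delta$; this constrains how the $T^{-\alpha/2}$ budget is allocated and is the source of the precise exponents in the final bounds.
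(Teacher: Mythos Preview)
Your high-level plan --- treat Algorithm~\ref{alg:APDA-Inex} as inexact APDA and push the error $\|\nabla\Psi^k(\hat x^k)\|$ through the Lyapunov argument --- is the right idea, and your observation that $x^{k+1}=\hat x^k-\eta_x\nabla\Psi^k(\hat x^k)$ is useful. But the way you propose to absorb the error in step~2 has a genuine gap that would prevent you from reaching the stated complexities.

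Your scheme bounds $\|x^k-w^{\star k}\|^2\le C'\Phi^k$ (via the detour through $x^\star$) and then demands that the resulting error term $E^k$ be at most $(\delta/2)\Phi^k$. Since $\delta=\Theta(A_{xy}^{-1})$, this forces $T^\alpha$ to absorb a factor of $A_{xy}$, giving $T=\Theta\big((\text{poly}\cdot A_{xy})^{1/\alpha}\big)$. Multiplying by the $N=\cO(A_{xy}\log\frac1\varepsilon)$ outer iterations then gives a total gradient count strictly worse than $\cO(\kappa_x^{1/\alpha}A_{xy}\log\frac1\varepsilon)$ in general. For instance, in the regime $\kappa_x\le\kappa_{xy}^2$ (so $A_{xy}=\kappa_{xy}^2$), your approach yields $T\gtrsim\kappa_{xy}^{2/\alpha}$ and hence total $\gtrsim\kappa_{xy}^{2+2/\alpha}$, whereas the theorem claims $\kappa_x^{1/\alpha}\kappa_{xy}^2$.

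What the paper does instead is to avoid comparing the error against $\delta$ altogether. The key observation you are missing is that the primal descent lemma, redone carefully with $\hat x^k$ in place of the exact minimizer, produces a \emph{negative} term $-\tfrac{1}{\eta_x}\|x^k-\hat x^k\|^2$ for free (it comes from the same square expansion as in the exact case). Writing $\|x^k-\hat x^k\|^2\ge\tfrac12\|x^k-w^{\star k}\|^2-\|\hat x^k-w^{\star k}\|^2$ turns this into $-\tfrac{1}{2\eta_x}\|x^k-w^{\star k}\|^2$ plus a small inexactness residual. Now the error bound \eqref{complexity_aux_problem}, which reads
\[
\|\nabla\Psi^k(\hat x^k)\|^2\le\frac{A(1+\eta_x L_x)^2}{\eta_x^2 T^\alpha}\,\|x^k-w^{\star k}\|^2,
\]
can be absorbed directly into this negative term (after multiplying by its $\cO(\eta_x)$ coefficient) as soon as $T^\alpha\gtrsim(1+\eta_x L_x)^2$. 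With the paper's stepsize $\eta_x\asymp\tfrac{1}{\sqrt{L_x\mu_x}}\cdot\tfrac{\mu_{xy}}{L_{xy}}$ (not $\eta_x=\Theta(1/L_x)$ as you suggested), one has $(1+\eta_x L_x)^2\le(1+\sqrt{\kappa_x})^2$, whence $T=\cO(\kappa_x^{1/\alpha})$, independent of $\kappa_{xy}$ and of the rate $\delta$. The residual pieces (including a delayed term $\|\nabla\Psi^{k-1}(\hat x^{k-1})\|^2$ coming from the extrapolation cross-product) are handled by augmenting the Lyapunov function with $+\tfrac{1}{8\eta_x}\|x^{k-1}-w^{\star k-1}\|^2$; this augmentation is what lets the recursion close without touching the contraction factor. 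Once $T=\cO(\kappa_x^{1/\alpha})$ is secured, the product $N\cdot T$ gives precisely $\cO(\max\{\kappa_x^{(2+\alpha)/(2\alpha)}\kappa_{xy},\kappa_x^{1/\alpha}\kappa_{xy}^2\}\log\frac1\varepsilon)$ and parts (i)--(iii) follow.
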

	
The proof relies  on several lemmas; their statements and the proof of the theorem  can be found in the appendix.

Note that our way of performing inexact computation of prox of $G$ allows us to keep the same complexity as APDA (Algorithm \ref{alg:APDA}) in terms of the number of evaluations of the prox of $F^{\star}$. When  $\cM$ is chosen to be Gradient Decent ($\alpha =2$), the number of computations of the gradient of $\nabla G$,  given by \eqref{eq:u9fd8g9f8dh--},
 is larger than the number of computations of the prox of $G$ for APDA. Fortunately, we can reduce this if GD is replaced with a faster method. For example, if we choose $\cM$ to be a simple combination of Fast GD (FGD) and GD, in which case $\alpha=3$, Theorem \ref{thm:APDA-Inex} says that the number of computations of the gradient of $\nabla G$ is can be reduced to \eqref{eq:biogh98df-9878yhfd} (this choice is mentioned in the second-to-last row of Table~\ref{tab:2}). A further reduction is possible if we instead employ a combination of FSFOM and Fast GD; see \eqref{eq:hgiuygfdUYGUYG089f} and 
the last row of Table~\ref{tab:2}.

%
	


	\section{Accelerated Primal Dual Algorithm with Inexact Prox and Accelerated Gossip (Algorithm~\ref{alg:APDA-3})}	\label{sec:FL}

	In this section, we consider the most significant applications of Algorithm \ref{alg:APDA-Inex}: decentralized optimization over a network $\cG = \left(\cV, \cE\right)$, and federated learning. In particular, we consider the finite-sum optimization problem 
	$$\min_{x\in \R^d} \sum^{n}_{i = 1}f_i(x)$$ (see \eqref{decent_problem}) interpreted as follows:  $n=|\cV|$ is the number of clients/agents in the network. Communication can only happen between clients connected by an edge. The prevalent paradigm in federated learning, where a single server orchestrates communication in rounds,  arises as a special case of this with $\cG$ being the fully connected network.
	
If $\hat{W}$ is the Laplacian\footnote{In fact, it is enough for $\hat{W}$ to satisfy the less restrictive  Assumption \ref{asm:gossip_matix}.} of graph $\cG$, and let $W = \hat{W}\otimes I_{dn}$. Then problem \eqref{decent_problem} can be rewritten in following equivalent way:
	\begin{equation}\label{eq:P}
		\min_{\sqrt{W}\x = 0}P(\x) = \min_{\x\in\R^{dn}}P(\x) + \psi(\sqrt{W}\x),
	\end{equation}
	where $\x^{\top} = (x^{\top}_1, x^{\top}_2, \dots, x^{\top}_n)$, $P(\x) = \sum^{n}_{i = 1}f_i(x_i) $ and  $\psi(\x) = 0$ iff $\x = 0$, and $\psi(\x) = +\infty$ otherwise.  Problem \eqref{eq:P} is a special case of 
\eqref{comp_problem}.	By dualizing the nonsmooth (but proper, closed and convex) penalty, we get the equivalent saddle point formulation
	\begin{equation}\label{eq:P-saddle}
		\min_{\sqrt{W}\x = 0}P(\x) = \min_{\x\in\R^{dn}}\max_{\y\in \R^{dn}} \left\{P(\x) +\la\y,\sqrt{W}\x\ra - \psi^\star(\y)\right\},
	\end{equation}
	
As we can see, this problem \eqref{eq:P-saddle} is the particular case of the problem \eqref{main_problem}. It means that we can solve it by Algorithm \ref{alg:APDA-Inex}, for example.  Moreover, we do not have to compute the prox of $\psi^{\star}$ due to the fact that $\psi^{\star}(\cdot) \equiv 0$ for every indicator function $\psi(\cdot)$. We thus arrive at the final formulation
	\begin{equation}
		\label{new_problem}
		\min_{\x\in\R^{dn}}\max_{\y\in \R^{dn}} \left\{P(\x) +\la\y,\sqrt{W}\x\ra \right\}.
	\end{equation}

\subsection{Application of Algorithm~\ref{alg:APDA-Inex} to \eqref{new_problem}}	
		Before providing the complexity results related to the application of Algorithm \ref{alg:APDA-Inex} to problem \eqref{new_problem}, note that $L_{xy} = \sqrt{\lambda_{\max}(W)}$ and  $\mu_{xy} = \sqrt{\lambda^+_{\min}(W)}$ and define \begin{equation}\label{eq:chi}\chi \eqdef \frac{\lambda_{\max}(W)}{\lambda^+_{\min}(W)}.\end{equation} According to Theorem \ref{alg:APDA-Inex}, the total number of evaluations of the prox of $\psi^\star$, i.e., the communication complexity, and  the total number of  evaluations of $\nabla P$, i.e., computation complexity, are 
	\begin{equation}
		\label{complexity_alg_1_for_decent_problem}
		\sharp\text{comm} = \widetilde{\mathcal{O}}\left(\max\left\{\sqrt{\kappa\chi},\chi\right\}\right), 	\quad
		\sharp\text{comp} = \widetilde{\mathcal{O}}\left(\max\left\{\kappa^{\frac{2+\alpha}{2\alpha}}\sqrt{\chi},\sqrt[\alpha]{\kappa}\chi\right\}\right),
	\end{equation}
	respectively.	
	For example, in centralized case, when $\cG$ is the complete graph ($\chi = 1$) and  $\cM$ is  chosen to be GD ($\alpha=2$), we obtain the same complexities as ProxSkip~\citep{ProxSkip}:
	\begin{equation}
		\sharp\text{comm} = \widetilde{\mathcal{O}}\left(\sqrt{\kappa}\right),\qquad
		\sharp\text{comp} = \widetilde{\mathcal{O}}\left(\kappa\right).
	\end{equation}
	
However, this can be improved by using a more elaborate subroutine $\cM$. If instead of GD we use either FGD + GD ($\alpha=3$) or FGD + FSFOM  ($\alpha=4$)  in place of $\cM$, the number of communication rounds will be the same as in the case of ProxSkip (or Algorithm \ref{alg:APDA-Inex} used with $\cM$ = GD). However, the total number of gradient computations gets improved to
	$
		\sharp\text{comp} = \widetilde{\mathcal{O}}\left(\kappa^{\frac{5}{6}}\right)
$ in the first case, and to 	$
		\sharp\text{comp} = \widetilde{\mathcal{O}}\left(\kappa^{\frac{3}{4}}\right)
$ in the second case.

\subsection{Improvement on Algorithm~\ref{alg:APDA-Inex} via accelerated gossip}
	\begin{algorithm}[!t]
		\caption{APDA with Inexact Prox and Accelerated Gossip }
		\label{alg:APDA-3}
		\begin{algorithmic}[1]
			\State \textbf{Input}: Initial point $(\x^0, \y^0) \in \R^{d_x}\times\R^{d_y}$, $\Bar{\y}^0 = \y^0$; Step sizes $\eta_x, \eta_y, \beta_y >0$, $\theta \in [0, 1]$; Number of inner iterations $T$; Number of iterations of Accelerated Gossip $N$;
			\For{$k = 0,1, 2, \dots$}
			\State Find $\hat{\x}^k$ as a final point of $T$ iteration of method $\cM$ for following problem:
			\begin{equation}
				\label{auxiliary_problem_acc_gossip}
			\squeeze			\min \limits_{\x\in \R^{d_x}}\left\{P(\x) + \frac{1}{2\eta_x}\left\|\x - \left(\x^k - \eta_x\textsc{AG}(W,\Bar{\y}^k,N)\right)\right\|^2  \right\}
			\end{equation}
			\State $\x^{k+1} = \x^k -\eta_x\left(\nabla P(\hat{\x}^k) + \textsc{AG}(W,\Bar{\y}^k,N)\right)$
			\State $\y^{k+1} = \y^k +\eta_y\left(\textsc{AG}(W,\hat{\x}^k,N) - \beta_y \textsc{AG}(W,\textsc{AG}(W,\y^k,N)+\nabla P(\hat{\x}^k) ,N)\right)$
			\State $\Bar{\y}^{k+1} = \y^{k+1} + \theta\left(\y^{k+1}-\y^k\right)$
			\EndFor 
			\Procedure{AG}{$W, \x, N$} \hfill = Accelerated Gossip
			\State Set $a_0 = 1$, $a_1 = c_2$, $\x_0 = \x$, $\x_1 = c_2\left(I - c_3W\right)\x$
			\For{$i = 1, \dots, N-1$}
			\State $a_{i+1} = 2c_2a_{i} - a_{i-1}$, $\x_{i+1} = 2c_2\left(I-c_2W\right)\x_{i} - \x_{i-1}$
			\EndFor
			\State \Return $\x-\frac{\x_N}{a_N}$
			\EndProcedure
		\end{algorithmic}
	\end{algorithm}
	
 Compared to the ProxSkip computation complexity $\widetilde{\cO}\left(\sqrt{\kappa\chi}\right)$, in the general decentralized case (i.e., $\chi>1$), complexity \eqref{complexity_alg_1_for_decent_problem} of our Algorithm~\ref{alg:APDA-Inex} is worse if $\kappa \leq \chi$. To tackle this problem, we propose to enhance Algorithm~\ref{alg:APDA-Inex} using the accelerated gossip technique \citep{scaman2017optimal}. Based on this approach, we propose one more (and final) method: Algorithm \ref{alg:APDA-3}. For this method, we prove the following result.

	\begin{theorem}
		\label{thm:APDA-3}
		Let Assumptions \ref{as_strongly_convex} and \ref{as_smoothness} hold for function $P$. Then there exist parameters of Algorithm~\ref{alg:APDA-3} such that in order to find  an $\varepsilon$-solution of problem \eqref{main_problem}, the total number of communications and the total number of gradient computations can be bounded by
		\begin{equation}
			\label{complexity_alg_acc_gossip_1}
			\sharp\text{comm}  = \widetilde{\cO}\left(\sqrt{\kappa\chi}\right), \qquad  \sharp\text{comp}  = \widetilde{\cO}\left(\kappa^{\frac{2+\alpha}{2\alpha}}\right),
		\end{equation}
		respectively.
		In particular, if the inner method $\cM$ is
\begin{itemize}		
\item[(i)]  GD, then the number of local steps is equal to $\widetilde{\cO}\left(\kappa^{\nicefrac{1}{2}}\right)$ and the  total number of gradient computations is equal to $\widetilde{\cO}\left(\kappa\right)$; 
\item[(ii)]	 FGD + GD, then the number of local steps is equal to $\widetilde{\cO}\left(\kappa^{\nicefrac{1}{3}}\right)$ and the total number of gradient computations is equal to $\widetilde{\cO}\left(\kappa^{\nicefrac{5}{6}}\right)$;\\
\item[(iii)] FGD + FSFOM, then the number of local steps is equal to $\widetilde{\cO}\left(\kappa^{\nicefrac{1}{4}}\right)$ and the total number of gradient computations is equal to $\widetilde{\cO}\left(\kappa^{\nicefrac{3}{4}}\right)$.
\end{itemize}	
	\end{theorem}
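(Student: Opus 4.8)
The plan is to view Algorithm~\ref{alg:APDA-3} as an instance of Algorithm~\ref{alg:APDA-Inex} (APDA with Inexact Prox) applied to the saddle-point problem \eqref{new_problem}, but with the coupling operator $\sqrt{W}$ replaced by a much better-conditioned \emph{effective} operator produced by the accelerated (Chebyshev) gossip subroutine \textsc{AG}; the estimates of Theorem~\ref{thm:APDA-Inex} then yield the claimed bounds after a short bookkeeping argument. The first ingredient is the algebraic meaning of the subroutine: one verifies that, for fixed $W$ and $N$, the map $\x\mapsto\textsc{AG}(W,\x,N)$ is the standard Chebyshev-accelerated gossip operator of \citet{scaman2017optimal} --- it is \emph{linear}, equal to $p_N(W)$ for a degree-$N$ polynomial $p_N$ generated by the three-term recurrence in the procedure, with $p_N(0)=0$ and, for every eigenvalue $\lambda$ of $W$ lying in $[\lambda^{+}_{\min}(W),\lambda_{\max}(W)]$, $|1-p_N(\lambda)|\le\delta_N:=2\left(\frac{\sqrt{\chi}-1}{\sqrt{\chi}+1}\right)^{N}$. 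Hence choosing $N=\Theta(\sqrt{\chi})$ makes $\delta_N\le\tfrac13$.

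Next I set $\tilde K:=p_N(W)$. Since $W$ is symmetric PSD and $p_N(\lambda)>0$ on the spectrum of $W$ restricted to $\mathrm{range}\,W$ while $p_N(0)=0$, the operator $\tilde K$ is symmetric positive semidefinite with $\ker\tilde K=\ker W$ and $\mathrm{range}\,\tilde K=\mathrm{range}\,W$; in particular $\tilde K=\tilde K^\top$ and $\tilde K\tilde K^\top=p_N(W)^2$. With this identification, one checks line by line that Steps~3--6 of Algorithm~\ref{alg:APDA-3} are exactly Steps~3--6 of Algorithm~\ref{alg:APDA-Inex} run on $\min_{\x}\max_{\y}\{P(\x)+\langle\y,\tilde K\x\rangle-\psi^\star(\y)\}$ (recall $\psi^\star\equiv 0$, so its prox is trivial and costs no communication): the single calls $\textsc{AG}(W,\bar{\y}^k,N)$ and $\textsc{AG}(W,\hat{\x}^k,N)$ play the roles of $K^\top\bar{y}^k$ and $K\hat{x}^k$, and the doubly-nested call in Step~5 realizes $\tilde K\tilde K^\top$. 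Because $\ker\tilde K=\ker W$ and $\mathrm{range}\,\tilde K=\mathrm{range}\,W$, this effective problem has the same set of primal solutions as \eqref{new_problem}; its $\psi^\star$ trivially satisfies Assumption~\ref{as_convexity}, and since $\partial\psi^\star(y)=\{0\}\subseteq\mathrm{range}\,\tilde K$, Assumption~\ref{as_L_xy} holds in the $\partial F^{\star}\in\mathrm{range}\,K$ branch with $L_{xy}^{\mathrm{eff}}=\max_{i}p_N(\lambda_i(W))\le 1+\delta_N$ and $\mu_{xy}^{\mathrm{eff}}=\min_{i:\lambda_i(W)>0}p_N(\lambda_i(W))\ge 1-\delta_N$, so the effective coupling condition number is $\kappa_{xy}^{\mathrm{eff}}=L_{xy}^{\mathrm{eff}}/\mu_{xy}^{\mathrm{eff}}\le(1+\delta_N)/(1-\delta_N)=\cO(1)$; moreover $\kappa_x=L_x/\mu_x=\kappa$ by the assumptions on $P$.

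It now suffices to plug $\kappa_{xy}^{\mathrm{eff}}=\cO(1)$ and $\kappa_x=\kappa$ into Theorem~\ref{thm:APDA-Inex}. That theorem gives $\widetilde\cO(\sqrt{\kappa})$ outer iterations (each triggering one round of \textsc{AG} calls) and a total of $\widetilde\cO(\kappa^{(2+\alpha)/(2\alpha)})$ evaluations of $\nabla P$; since every outer iteration performs the same fixed number $T$ of inner steps (plus $\cO(1)$ extra gradients), the number of local steps per round is $T=\widetilde\cO(\kappa^{(2+\alpha)/(2\alpha)}/\sqrt{\kappa})=\widetilde\cO(\kappa^{1/\alpha})$, which specializes to $\kappa^{1/2},\kappa^{1/3},\kappa^{1/4}$ for $\alpha=2,3,4$ and to the stated totals $\kappa,\kappa^{5/6},\kappa^{3/4}$ of gradient computations. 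For communication, each outer iteration issues $\cO(1)$ calls to \textsc{AG}, each costing $N=\cO(\sqrt{\chi})$ rounds, giving $\sharp\text{comm}=\widetilde\cO(\sqrt{\kappa})\cdot\cO(1)\cdot\cO(\sqrt{\chi})=\widetilde\cO(\sqrt{\kappa\chi})$, as in \eqref{complexity_alg_acc_gossip_1}. Both $T$ and $N$ are fixed in advance, so the number of local steps and communication rounds is deterministic.

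The only genuinely technical ingredient is the spectral estimate $|1-p_N(\lambda)|\le\delta_N$ on $[\lambda^{+}_{\min}(W),\lambda_{\max}(W)]$ --- i.e., checking that the recurrence in the \textsc{AG} procedure (with the constants $c_2,c_3$) indeed reproduces the shifted and normalized Chebyshev polynomials, and bounding them via the classical minimax property of Chebyshev polynomials. This is standard (it is the core of \citet{scaman2017optimal}); granting it, the reduction and the counting above are essentially bookkeeping on top of Theorem~\ref{thm:APDA-Inex}.
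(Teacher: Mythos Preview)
Your proposal is correct and follows essentially the same approach as the paper: both recognize Algorithm~\ref{alg:APDA-3} as Algorithm~\ref{alg:APDA-Inex} applied with an effective coupling operator $p_N(W)$ produced by Chebyshev-accelerated gossip, observe that this operator has $\cO(1)$ condition number when $N=\Theta(\sqrt{\chi})$, and then invoke Theorem~\ref{thm:APDA-Inex} to read off the complexities. The paper's own proof is in fact terser than yours---it simply states the replacements $L_{xy}\to\lambda_1=1+\tfrac{2c_1^N}{1+c_1^{2N}}$, $\mu_{xy}\to\lambda_2=1-\tfrac{2c_1^N}{1+c_1^{2N}}$ and defers the spectral estimate to \citet{kovalev2020optimal}---so your explicit verification of Assumption~\ref{as_L_xy} for the effective problem and the communication-round bookkeeping add welcome detail without changing the route.
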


The communication complexities obtained this way are substantially better than those of decentralized ProxSkip; see Table~\ref{tab:results_summary_2} and the commentary in the next subsection. 

\subsection{Summary of complexity results for Algorithms \ref{alg:APDA-Inex}  and \ref{alg:APDA-3} applied to  decentralized optimization}
\label{sec:table_3}

	\begin{table}[!t]
		\vspace{-8px}
		\centering
		\caption{Complexity results for Algorithms \ref{alg:APDA-Inex}  and \ref{alg:APDA-3} applied to solving the decentralized optimization problem \eqref{decent_problem} formulated as the saddle-point problem \eqref{new_problem}. Our results improve upon those of ProxSkip, both in communication complexity (for Algorithm \ref{alg:APDA-3}), and the number of local gradient steps (for both our methods, given a proper choice of $\cM$).}
		\label{tab:results_summary_2}
		{
			\footnotesize
			\begin{threeparttable}
				\begin{tabular}{c c c c c c c c}
					\toprule
					\multirow{2}{*}{Alg.}&\multirow{2}{*}{\begin{tabular}{c} Inner \\ Method\\$\cM$\end{tabular}}&\multirow{2}{*}{\begin{tabular}{c} Deter-\\ministic \\  \#local\\ steps\end{tabular}}&\multicolumn{2}{c}{Centralized case} &\multicolumn{3}{c}{Decentralized case}\\
					\cmidrule(lr){4-5} \cmidrule(lr){6-8}
					&&&\begin{tabular}{c}\#comm.\\ rounds\end{tabular} &\begin{tabular}{c} \#local \\steps\\ per round \end{tabular} &  GA\tnote{\color{red} (1)}  & \begin{tabular}{c}  \#comm.\\ rounds\end{tabular} & \begin{tabular}{c} \#local \\steps\\ per round \end{tabular} \\
					\midrule
					
					D-SGD\tnote{\color{red}(3)} &GD& \cmark&$\widetilde{\mathcal{O}}\left(\kappa\right)$&$\tau$ &\xmark&$\widetilde{\mathcal{O}}\left(\kappa\chi\right)$&$\tau$\\
					\midrule
					ProxSkip\tnote{\color{red}(4)}&GD& \xmark&$\widetilde{\mathcal{O}}\left(\sqrt{\kappa}\right)$&$\mathcal{O}\left(\sqrt{\kappa}\right)$ &\xmark&$\widetilde{\mathcal{O}}\left(\sqrt{\kappa\chi}\right)$\tnote{\color{red}(1)}&$\widetilde{\mathcal{O}}\left(\sqrt{\kappa}\right)$\tnote{\color{red}(2)}\\
					
					\midrule
					\multirow{3}{*}{Alg.~\ref{alg:APDA-Inex}} &\cellcolor{bgcolor}GD&\cellcolor{bgcolor}\cmark& \multirow{3}{*}{$\widetilde{\mathcal{O}}\left(\sqrt{\kappa}\right)$}&\cellcolor{bgcolor}$\widetilde{\mathcal{O}}\left(\sqrt{\kappa}\right)$&\cellcolor{bgcolor} \xmark &\multirow{3}{*}{$\widetilde{\mathcal{O}}\left(\sqrt{\kappa\chi}\vee \chi\right)$\tnote{\color{red} (5)}}
					&\cellcolor{bgcolor}$\widetilde{\mathcal{O}}\left(\sqrt{\kappa}\right)$ \\
					
					\cmidrule(l){2-3} \cmidrule(lr){5-6}\cmidrule(lr){8-8}
					&\cellcolor{bgcolor}FGD + GD &\cellcolor{bgcolor} \cmark & & \cellcolor{bgcolor}$\widetilde{\mathcal{O}}\left(\sqrt[3]{\kappa}\right)$ &\cellcolor{bgcolor} \xmark & &\cellcolor{bgcolor} $\widetilde{\mathcal{O}}\left(\sqrt[3]{\kappa}\right)$ \\
					
					\cmidrule(l){2-3} \cmidrule(lr){5-6} \cmidrule(lr){8-8}
					&\cellcolor{bgcolor} FGD + FSFOM &\cellcolor{bgcolor}\cmark & &\cellcolor{bgcolor} $\widetilde{\mathcal{O}}\left(\sqrt[4]{\kappa}\right)$ &\cellcolor{bgcolor} \xmark & & \cellcolor{bgcolor}$\widetilde{\mathcal{O}}\left(\sqrt[4]{\kappa}\right)$ \\
					
					\midrule
					\multirow{3}{*}{Alg.~\ref{alg:APDA-3}} &\cellcolor{bgcolor}GD&\cellcolor{bgcolor}\cmark& \multirow{3}{*}{$\widetilde{\mathcal{O}}\left(\sqrt{\kappa}\right)$}&\cellcolor{bgcolor}$\widetilde{\mathcal{O}}\left(\sqrt{\kappa}\right)$&\cellcolor{bgcolor} \cmark &\multirow{3}{*}{$\widetilde{\mathcal{O}}\left(\sqrt{\kappa\chi}\right)$}
					&\cellcolor{bgcolor}$\widetilde{\mathcal{O}}\left(\sqrt{\kappa}\right)$ \\
					
					\cmidrule(l){2-3} \cmidrule(lr){5-6}\cmidrule(lr){8-8}
					&\cellcolor{bgcolor}FGD + GD &\cellcolor{bgcolor} \cmark & & \cellcolor{bgcolor}$\widetilde{\mathcal{O}}\left(\sqrt[3]{\kappa}\right)$ &\cellcolor{bgcolor} \cmark & & \cellcolor{bgcolor}$\widetilde{\mathcal{O}}\left(\sqrt[3]{\kappa}\right)$ \\
					
					\cmidrule(l){2-3} \cmidrule(lr){5-6}\cmidrule(lr){8-8}
					& \cellcolor{bgcolor}FGD + FSFOM &\cellcolor{bgcolor}\cmark & &\cellcolor{bgcolor} $\widetilde{\mathcal{O}}\left(\sqrt[4]{\kappa}\right)$ & \cellcolor{bgcolor}\cmark & & \cellcolor{bgcolor}$\widetilde{\mathcal{O}}\left(\sqrt[4]{\kappa}\right)$ \\
					
					\bottomrule
					
				\end{tabular}
				\begin{tablenotes}
					{\tiny
						\item [{\color{red}(1)}] Does not use Accelerated Gossip technique (AG).
						\item [{\color{red}(2)}] Valid only when $\kappa \leq \chi$.
						\item [{\color{red}(3)}] This method was analyzed by \citet{koloskova2020unified}.
						\item [{\color{red}(4)}] This method was proposed and analyzed by \citet{ProxSkip}.
						\item [{\color{red}(5)}] For $a, b \in \R$, we denote $a\vee b = \max\{a, b\}$
					}
				\end{tablenotes}
		\end{threeparttable}}
	\end{table}
	
Recall that in Table~\ref{tab:2}, we already compared ProxSkip and our Algorithm \ref{alg:APDA-Inex} in the centralized case. 	
In Table~\ref{tab:results_summary_2} we add to this a comparison in the decentralized case, and include our Algorithm~\ref{alg:APDA-3} as well. In particular, in Table~\ref{tab:results_summary_2} we compare the complexity results of our methods (Algorithms \ref{alg:APDA-Inex} and \ref{alg:APDA-3}) for solving the decentralized optimization problem \eqref{decent_problem}  to two selected benchmarks: D-SGD~\citep{koloskova2020unified} and ProxSkip~\citep{ProxSkip}. 	
	
	First, observe that ProxSkip has vastly superior communication complexity to D-SGD, both in the centralized case (i.e., for fully-connected network; $\chi=1$), where the improvement is from $\cO(\kappa)$ to $\widetilde{\cO}(\sqrt{\kappa})$, and the decentralized case ($\chi>1$), where the improvement is from $\widetilde{\cO}(\kappa \chi)$ to $\widetilde{\cO}(\sqrt{\kappa \chi})$.

In the decentralized case, both our methods match the $\widetilde{\cO}(\sqrt{\kappa \chi})$ communication complexity of ProxSkip. However, Algorithm \ref{alg:APDA-Inex} does so only when $\sqrt{\kappa \chi}\geq \chi$ (i.e., when $\kappa>\chi$). On the other hand, both our methods have an improved bound on the number of local gradient computations, depending on what subroutine $\cM$ they employ. The improvement is from $\widetilde{\cO}(\sqrt{\kappa})$ to $\widetilde{\mathcal{O}}\left(\sqrt[3]{\kappa}\right)$ (when $\cM=$ FGD + GD) to $\widetilde{\mathcal{O}}\left(\sqrt[4]{\kappa}\right)$ (when $\cM=$ FGD + FSFOM).

\subsection*{Acknowledgments and Disclosure of Funding}
The work of all authors was supported by the KAUST Baseline Research Grant awarded to P.\ Richt\'{a}rik. The work of A.\ Sadiev was supported by the Visiting Student Research Program (VSRP) at KAUST.	The work of A.\ Sadiev was also partially supported by a grant for research centers in the field of artificial intelligence, provided by the Analytical Center for the Government of the Russian Federation in accordance with the subsidy agreement (agreement identifier 000000D730321P5Q0002) and the agreement with the Moscow Institute of Physics and Technology dated November 1, 2021 No. 70-2021-00138.

	\bibliography{refs}

	\clearpage
	\part*{Appendix}
	\appendix


\section{Analysis of the Proximal-Point Method}

In this section we justify the claims we made in Section~\ref{sec:PPM}  about the Proximal-Point Method (Algorithm~\ref{alg_prox_point}). In particular, we prove the complexity result \eqref{eq:PPM-rate-main}. The result is not new of course, but we could not find a source for the proof we include here.

\subsection{The Proximal-Point Method}	

We have described the Proximal-Point Method informally in Section~\ref{sec:PPM}. Here  we state it formally as Algorithm~\ref{alg_prox_point}.
	
	\begin{algorithm}[h]
		\caption{Proximal-Point Method}
		\label{alg_prox_point}
		\begin{algorithmic}[1]
			\State \textbf{Input}: Initial point $(x^0, y^0) \in \R^{d_x}\times\R^{d_y}$; Stepsizes $\eta_x, \eta_y>0$
			\For{$k = 0,1,\dots$}
			\State $x^{k+1} = x^k -\eta_x\left(\partial G(x^{k+1}) + K^{\top}y^{k+1}\right)$
			\State $y^{k+1} = y^k -\eta_y\left(\partial F^{\star}(y^{k+1}) - Kx^{k+1}\right)$
			\EndFor 
		\end{algorithmic}
	\end{algorithm}

\subsection{Bonding the distance of the primal iterates to the primal solution}	
	
In our first lemma, we will provide a bound on $\|x^{k+1}-x^{\star}\|^2$.

	\begin{lemma}
		\label{descent_lemma_by_x_prox_point}
	Let Assumption \ref{as_strongly_convex} hold and choose  any $\eta_x,\eta_y>0$. Then the iterates of the Proximal-Point Method (Algorithm \ref{alg_prox_point})  for all $k\geq 0$ satisfy
				\begin{equation*}
			\left(1+2\mu_x\eta_x\right)\frac{1}{\eta_x}\|x^{k+1}-x^{\star}\|^2 \leq 
			\frac{1}{\eta_x}\|x^k-x^{\star}\|^2 - \frac{1}{\eta_x}\|x^{k+1}-x^{k}\|^2 -2\la K^{\top}y^{k+1}- K^{\top}y^{\star},x^{k+1}-x^{\star}\ra .
		\end{equation*}
	\end{lemma}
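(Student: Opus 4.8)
The plan is to work directly from the implicit primal update and use only Assumption~\ref{as_strongly_convex}; neither smoothness of $G$, nor convexity of $F^{\star}$, nor the dual update will be needed, and $\eta_y$ will not enter. Recall that the update $x^{k+1} = x^k -\eta_x\left(\partial G(x^{k+1}) + K^{\top}y^{k+1}\right)$ is, per the convention adopted in Section~\ref{sec:PPM}, shorthand for the existence of a subgradient $g^{k+1}\in\partial G(x^{k+1})$ with $x^{k+1} = x^k -\eta_x( g^{k+1} + K^{\top}y^{k+1})$. First I would rearrange this into $g^{k+1} = \tfrac{1}{\eta_x}(x^k - x^{k+1}) - K^{\top}y^{k+1}$. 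On the solution side, the first-order optimality condition \eqref{opt_conditions}, namely $0\in\partial G(x^{\star}) + K^{\top}y^{\star}$, supplies a subgradient $g^{\star}\eqdef -K^{\top}y^{\star}\in\partial G(x^{\star})$.

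Next I would invoke the strong monotonicity of $\partial G$ implied by Assumption~\ref{as_strongly_convex}: for any $g'\in\partial G(x')$ and $g''\in\partial G(x'')$ one has $\la g' - g'', x' - x''\ra \geq \mu_x\|x'-x''\|^2$ (this follows by adding the subgradient form of the strong-convexity inequality \eqref{strong_convexity}, $G(x')\geq G(x'') + \la g'', x'-x''\ra + \tfrac{\mu_x}{2}\|x'-x''\|^2$, to the same inequality with $x'$ and $x''$ interchanged). Applying this with $(x',g')=(x^{k+1},g^{k+1})$ and $(x'',g'')=(x^{\star},g^{\star})$ and substituting the expressions above for $g^{k+1}$ and $g^{\star}$ yields
$$\frac{1}{\eta_x}\la x^k - x^{k+1}, x^{k+1} - x^{\star}\ra - \la K^{\top}y^{k+1} - K^{\top}y^{\star}, x^{k+1} - x^{\star}\ra \geq \mu_x\|x^{k+1} - x^{\star}\|^2.$$

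Finally I would convert the first inner product into squared norms via the elementary identity $\la a - b, b - c\ra = \tfrac12\left(\|a-c\|^2 - \|a-b\|^2 - \|b-c\|^2\right)$ with $a=x^k$, $b=x^{k+1}$, $c=x^{\star}$, multiply through by $2$, and move all $\|x^{k+1}-x^{\star}\|^2$ terms to the left-hand side; since $\tfrac{1}{\eta_x} + 2\mu_x = \tfrac{1}{\eta_x}(1+2\mu_x\eta_x)$, this is precisely the claimed bound. I do not anticipate a genuine obstacle here: the argument is a single-shot manipulation. The only point meriting care is bookkeeping with the set-valued $\partial G$ — the subgradient $g^{k+1}$ used in the strong-monotonicity step must be the very one realizing the implicit update, which is exactly what the convention of Section~\ref{sec:PPM} guarantees.
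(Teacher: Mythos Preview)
Your proof is correct and follows essentially the same approach as the paper's: both use the implicit update to express the subgradient, invoke the optimality condition \eqref{opt_conditions} to introduce $K^\top y^\star$, apply $\mu_x$-strong convexity (equivalently strong monotonicity of $\partial G$), and convert the cross term via the three-point identity. The only difference is cosmetic ordering---the paper first expands $\tfrac{1}{\eta_x}\|x^{k+1}-x^\star\|^2$ via the identity and then substitutes the update, whereas you start from the monotonicity inequality and then apply the identity---but the content is identical.
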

	\begin{proof} By writing $x^{k+1}$ as $x^k + (x^{k+1}-x^k)$, and using line 3 of Algorithm \ref{alg_prox_point}, which reads $x^{k+1} = x^k -\eta_x\left(\partial G(x^{k+1}) + K^{\top}y^{k+1}\right)$, we get
		\begin{eqnarray*}
			\frac{1}{\eta_x}\|x^{k+1}-x^{\star}\|^2 &=& \frac{1}{\eta_x}\|x^{k}-x^{\star}\|^2 + \frac{2}{\eta_x}\la x^{k+1}-x^{k}, x^{k}-x^{\star}\ra + \frac{1}{\eta_x}\|x^{k+1}-x^{k}\|^2
			\\
			&=& \frac{1}{\eta_x}\|x^{k}-x^{\star}\|^2 + \frac{2}{\eta_x}\la x^{k+1}-x^{k}, x^{k+1}-x^{\star}\ra -  \frac{1}{\eta_x}\|x^{k+1}-x^{k}\|^2\\
			&=& \frac{1}{\eta_x}\|x^{k}-x^{\star}\|^2 -2\la \partial G(x^{k+1})+ K^{\top}y^{k+1}, x^{k+1}-x^{\star}\ra -  \frac{1}{\eta_x}\|x^{k+1}-x^{k}\|^2 .
		\end{eqnarray*}
		
We now split the inner product into two parts by applying the  optimality condition (see \eqref{opt_conditions})
$${\color{red} 0\in \partial G(x^{\star}) + K^{\top}y^{\star},}$$
obtaining\footnote{Abusing notation, here by $\color{red}\partial G(x^{\star})$ we refer to the subgradient $g_G(x^{\star}) \in \partial G(x^{\star})$ for which $\color{red}0 = g_G(x^{\star}) + K^{\top}y^{\star}$, i.e., $g_G(x^{\star}) =- K^{\top}y^{\star}$.}
		\begin{eqnarray*}
			\frac{1}{\eta_x}\|x^{k+1}-x^{\star}\|^2 &=&
			\frac{1}{\eta_x}\|x^{k}-x^{\star}\|^2 -2\la \partial G(x^{k+1}) {\color{red}- \partial G(x^{\star})} , x^{k+1}-x^{\star}\ra -  \frac{1}{\eta_x}\|x^{k+1}-x^{k}\|^2 \\
			&&-2\la K^{\top}y^{k+1} {\color{red}- K^{\top}y^{\star}}, x^{k+1}-x^{\star}\ra .
		\end{eqnarray*}
Finally, this allows us to replace the first inner product using strong convexity of $G$ as follows:		\begin{eqnarray*}
			\frac{1}{\eta_x}\|x^{k+1}-x^{\star}\|^2 
			&\leq& \frac{1}{\eta_x}\|x^{k}-x^{\star}\|^2 -2\mu_x\|x^{k+1}-x^{\star}\|^2 -  \frac{1}{\eta_x}\|x^{k+1}-x^{k}\|^2 \\
			&&-2\la K^{\top}y^{k+1} - K^{\top}y^{\star}, x^{k+1}-x^{\star}\ra .
		\end{eqnarray*}
	\end{proof}
	
\subsection{Bonding the distance of the dual iterates to the dual solution}	

In our second lemma, we will provide a bound on $\|y^{k+1}-y^{\star}\|^2$. For this, we will rely on an additional assumption (strong convexity of $F^\star$):

	\begin{assumption}
		\label{as_strong_convexity_F}
		The function $F^{\star}: \R^{d_y} \rightarrow \R$ is $\mu_y$-strongly convex (but can be non-differentiable), i.e.,
		\begin{equation}
			\label{convexity_F}
			F^{\star}(y') - F^{\star}(y'') - \la g_{F^\star}(y''), y' - y'' \ra \geq \frac{\mu_y}{2}\|y'-y''\|^2, \qquad \forall y', y'' \in \R^{d_y},
		\end{equation}
		where $g_{F^\star}(y'') \in \partial F^{\star}(y'')$ is any subgradient of $F^\star$ at $y''\in \R^{d_y}$.
	\end{assumption}	
	\begin{lemma}
		\label{descent_lemma_by_y_prox_point}
Let Assumptions \ref{as_convexity} and \ref{as_strong_convexity_F} hold and choose  any $\eta_x,\eta_y>0$. Then the iterates of the Proximal-Point Method (Algorithm \ref{alg_prox_point})  for all $k\geq 0$ satisfy
				\begin{equation*}
			\left(1+2\mu_y\eta_y\right)\frac{1}{\eta_y}\|y^{k+1}-y^{\star}\|^2 \leq 
			\frac{1}{\eta_y}\|y^k-y^{\star}\|^2 - \frac{1}{\eta_y}\|y^{k+1}-y^{k}\|^2 +2\la K^{\top}y^{k+1}- K^{\top}y^{\star},x^{k+1}-x^{\star}\ra .
		\end{equation*}
	\end{lemma}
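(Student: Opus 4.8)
\textbf{Proof plan for Lemma~\ref{descent_lemma_by_y_prox_point}.} The plan is to mirror, almost verbatim, the argument used for the primal iterates in Lemma~\ref{descent_lemma_by_x_prox_point}, but now applied to the $y$-block of the Proximal-Point update. First I would write $y^{k+1}$ as $y^k + (y^{k+1} - y^k)$ and expand $\frac{1}{\eta_y}\|y^{k+1} - y^\star\|^2$ using the three-point identity, exactly as in the primal case:
\begin{equation*}
\frac{1}{\eta_y}\|y^{k+1}-y^\star\|^2 = \frac{1}{\eta_y}\|y^k-y^\star\|^2 + \frac{2}{\eta_y}\la y^{k+1}-y^k, y^{k+1}-y^\star\ra - \frac{1}{\eta_y}\|y^{k+1}-y^k\|^2 .
\end{equation*}
Then I would substitute line 4 of Algorithm~\ref{alg_prox_point}, namely $\frac{1}{\eta_y}(y^{k+1}-y^k) = -\left(\partial F^\star(y^{k+1}) - Kx^{k+1}\right)$, into the middle inner product, turning it into $-2\la \partial F^\star(y^{k+1}) - Kx^{k+1}, y^{k+1}-y^\star\ra$.

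Next, I would invoke the dual optimality condition $0 \in \partial F^\star(y^\star) - Kx^\star$ from \eqref{opt_conditions} (abusing notation to select the appropriate subgradient, just as in the footnote to the previous lemma) to split the inner product into a subgradient-difference term and a linear term: $-2\la \partial F^\star(y^{k+1}) - \partial F^\star(y^\star), y^{k+1}-y^\star\ra$ plus $2\la Kx^{k+1} - Kx^\star, y^{k+1}-y^\star\ra$. The first of these I would bound below by $\mu_y\|y^{k+1}-y^\star\|^2$ using the $\mu_y$-strong convexity of $F^\star$ (Assumption~\ref{as_strong_convexity_F}) — more precisely, strong monotonicity of $\partial F^\star$, which follows by adding the two strong-convexity inequalities at $y^{k+1}$ and $y^\star$ with their respective subgradients. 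The second term I would rewrite as $2\la K^\top y^{k+1} - K^\top y^\star, x^{k+1}-x^\star\ra$ by moving the adjoint $K^\top$ across the inner product; note this is exactly the negative of the cross term that appeared in Lemma~\ref{descent_lemma_by_x_prox_point}, which is the point — when the two lemmas are later added, these cross terms cancel.

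Collecting these estimates, moving the $\mu_y\|y^{k+1}-y^\star\|^2$ term to the left-hand side, and multiplying through appropriately yields the claimed inequality
\begin{equation*}
\left(1+2\mu_y\eta_y\right)\frac{1}{\eta_y}\|y^{k+1}-y^\star\|^2 \leq \frac{1}{\eta_y}\|y^k-y^\star\|^2 - \frac{1}{\eta_y}\|y^{k+1}-y^k\|^2 + 2\la K^\top y^{k+1} - K^\top y^\star, x^{k+1}-x^\star\ra .
\end{equation*}
I do not anticipate a genuine obstacle here — the argument is the exact dual analogue of Lemma~\ref{descent_lemma_by_x_prox_point}. The only points requiring mild care are (i) correctly tracking the sign of the coupling term (it enters with a $+$ here versus a $-$ in the primal lemma, because $K$ appears as $+K^\top y$ in the first optimality condition but as $-Kx$ in the second), and (ii) the subgradient bookkeeping: since $F^\star$ need not be differentiable, one must fix the same subgradient selection in the update, in the optimality condition, and in the strong-convexity inequality, which the footnote convention from the previous lemma already handles.
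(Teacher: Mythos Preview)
Your proposal is correct and follows essentially the same route as the paper's proof: expand $\frac{1}{\eta_y}\|y^{k+1}-y^\star\|^2$ via the three-point identity, substitute the dual update, split the inner product using the optimality condition $0\in\partial F^\star(y^\star)-Kx^\star$, and then invoke $\mu_y$-strong convexity (monotonicity) of $\partial F^\star$ to produce the $-2\mu_y\|y^{k+1}-y^\star\|^2$ term that moves to the left. The only minor slip is in your verbal description where you say ``moving the $\mu_y\|y^{k+1}-y^\star\|^2$ term''---it is $2\mu_y$, consistent with the $(1+2\mu_y\eta_y)$ factor you correctly write in the final display.
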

	\begin{proof} 
By writing $y^{k+1}$ as $y^k + (y^{k+1}-y^k)$, and using line 4 of Algorithm \ref{alg_prox_point}, which reads $y^{k+1} = y^k -\eta_y\left(\partial F^{\star}(y^{k+1}) - Kx^{k+1}\right)$, we get
		\begin{eqnarray*}
			\frac{1}{\eta_y}\|y^{k+1}-y^{\star}\|^2 &=& \frac{1}{\eta_y}\|y^{k}-y^{\star}\|^2 + \frac{2}{\eta_y}\la y^{k+1}-y^{k}, y^{k}-y^{\star}\ra +  \frac{1}{\eta_y}\|y^{k+1}-y^{k}\|^2\\
			&=& \frac{1}{\eta_y}\|y^{k}-y^{\star}\|^2 + \frac{2}{\eta_y}\la y^{k+1}-y^{k}, y^{k+1}-y^{\star}\ra -  \frac{1}{\eta_y}\|y^{k+1}-y^{k}\|^2\\
			&=& \frac{1}{\eta_y}\|y^{k}-y^{\star}\|^2 -2\la \partial F^{\star}(y^{k+1})-Kx^{k+1}, y^{k+1}-y^{\star}\ra -  \frac{1}{\eta_y}\|y^{k+1}-y^{k}\|^2 .
		\end{eqnarray*}
We now split the inner product into two parts by applying the  optimality condition (see \eqref{opt_conditions})
$${\color{red}0 \in \partial F^{\star}(y^{\star}) - Kx^{\star},}$$
obtaining\footnote{Abusing notation, here by $\color{red}\partial F^{\star}(y^{\star})$ we refer to the subgradient $g_{F^\star}(y^{\star}) \in \partial F^{\star}(y^{\star})$ for which $\color{red}0 = g_{F^\star}(y^{\star}) - Kx^{\star}$, i.e., $g_{F^\star}(y^{\star}) = Kx^\star$.}	
		\begin{eqnarray*}
			\frac{1}{\eta_y}\|y^{k+1}-y^{\star}\|^2 &=&
			\frac{1}{\eta_y}\|y^{k}-y^{\star}\|^2 -2\la \partial F^{\star}(y^{k+1}) {\color{red}- \partial F^{\star}(y^{\star}) }, y^{k+1}-y^{\star}\ra -  \frac{1}{\eta_y}\|y^{k+1}-y^{k}\|^2 \\
			&&+2\la Kx^{k+1} - {\color{red}Kx^{\star}},  y^{k+1} -y^{\star} \ra .
		\end{eqnarray*}
Finally, this allows us to replace the first inner product using strong convexity of $F^{\star}$ as follows:		
		\begin{eqnarray*}
			\frac{1}{\eta_y}\|y^{k+1}-y^{\star}\|^2 
			&\leq& \frac{1}{\eta_y}\|y^{k}-y^{\star}\|^2 -2\mu_y\|y^{k+1}-y^{\star}\|^2 -  \frac{1}{\eta_y}\|y^{k+1}-y^{k}\|^2 \\
			&&+2\la K^{\top}y^{k+1} - K^{\top}y^{\star}, x^{k+1}-x^{\star}\ra  .
		\end{eqnarray*}
	\end{proof}
	
\subsection{Complexity of the Proximal-Point Method}	

We now formulate the main result describing the iteration complexity of the Proximal-Point Method. As we shall see, it  follows by combining the above  two lemmas. Note that the theorem postulates {\em arbitrarily fast linear convergence}, with the speed controlled via the primal and dual stepsizes $\eta_x$ and $\eta_y$, respectively.  The larger the stepsizes, the faster the rate becomes. In the limit, as $\eta_x\to +\infty$ and $\eta_y\to +\infty$, the result obtained predicts convergence in a single iteration.

	\begin{theorem}
		\label{th_conv_prox_point_method}
		Let Assumptions \ref{as_strongly_convex} and \ref{as_strong_convexity_F} hold, and choose  any $\eta_x,\eta_y>0$. 		Then the iterates of the Proximal-Point Method (Algorithm \ref{alg_prox_point})  for all $k\geq 0$ satisfy
						\[\Delta^{k+1} \leq \frac{\Delta^k}{\min\left\{1+2\mu_x\eta_x, 1+ 2\mu_y\eta_y\right\}} , \]
				where the Lyapunov function is defined by
$$\Delta^k \eqdef \frac{1}{\eta_x}\|x^k-x^{\star}\|^2 +\frac{1}{\eta_y}\|y^k-y^{\star}\|^2 .$$
This implies the following statement: 
		\begin{equation*}
			k \geq  \left(1 + \frac{1}{\min\left\{2\mu_x\eta_x, 2\mu_y\eta_y\right\}} \right) \log\frac{1}{\varepsilon} \qquad \Rightarrow \qquad  \Delta^k \leq \varepsilon \Delta^0.
		\end{equation*}
	\end{theorem}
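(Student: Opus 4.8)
The plan is to derive the recursion $\Delta^{k+1} \le \Delta^k / \min\{1+2\mu_x\eta_x, 1+2\mu_y\eta_y\}$ by simply adding the two descent lemmas (Lemma~\ref{descent_lemma_by_x_prox_point} and Lemma~\ref{descent_lemma_by_y_prox_point}) that have already been proven in the excerpt. The beautiful feature here is that the two lemmas were written precisely so that the troublesome cross term $\pm 2\langle K^\top y^{k+1} - K^\top y^\star, x^{k+1}-x^\star\rangle$ appears with opposite signs: it is $-2\langle\cdot,\cdot\rangle$ in the primal lemma and $+2\langle\cdot,\cdot\rangle$ in the dual lemma. So upon addition it cancels exactly, and no assumption on $K$, $L_{xy}$, or any stepsize upper bound is needed.

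First I would add the two inequalities. The left-hand side becomes $(1+2\mu_x\eta_x)\tfrac{1}{\eta_x}\|x^{k+1}-x^\star\|^2 + (1+2\mu_y\eta_y)\tfrac{1}{\eta_y}\|y^{k+1}-y^\star\|^2$, which I lower-bound by $\min\{1+2\mu_x\eta_x,\,1+2\mu_y\eta_y\}\cdot\Delta^{k+1}$. The right-hand side is $\Delta^k - \tfrac{1}{\eta_x}\|x^{k+1}-x^k\|^2 - \tfrac{1}{\eta_y}\|y^{k+1}-y^k\|^2$, where the two cross terms have cancelled. Dropping the two nonpositive squared-difference terms gives $\min\{1+2\mu_x\eta_x,\,1+2\mu_y\eta_y\}\,\Delta^{k+1} \le \Delta^k$, which is exactly the claimed one-step contraction.

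Next I would unroll the recursion: iterating $k$ times gives $\Delta^k \le \Delta^0 / q^k$ where $q \eqdef \min\{1+2\mu_x\eta_x,\,1+2\mu_y\eta_y\} > 1$. To extract the iteration count, I want $q^{-k} \le \varepsilon$, i.e. $k \ge \log(1/\varepsilon)/\log q$. Writing $q = 1+a$ with $a \eqdef \min\{2\mu_x\eta_x,\,2\mu_y\eta_y\} > 0$, I use the elementary inequality $\log(1+a) \ge \tfrac{a}{1+a}$, so $1/\log q \le \tfrac{1+a}{a} = 1 + \tfrac1a$. Hence $k \ge \bigl(1 + \tfrac{1}{\min\{2\mu_x\eta_x,2\mu_y\eta_y\}}\bigr)\log\frac1\varepsilon$ suffices to ensure $\Delta^k \le \varepsilon\Delta^0$, which is the stated implication.

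There is essentially no hard part: the only thing requiring a moment's care is the logarithm estimate $\log(1+a)\ge a/(1+a)$ used to convert the geometric rate into the advertised $(1+1/a)\log(1/\varepsilon)$ form — but this is a standard one-line bound (equivalently $1/(1+a) \le \log(1+1/a)\cdot$ nothing, or just $e^{-a/(1+a)} \ge 1/(1+a)$). All the genuine work — handling the subgradients via the optimality conditions and invoking strong convexity — was already done inside the two descent lemmas, so this proof is purely a matter of assembling them.
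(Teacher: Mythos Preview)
Your proof is correct and follows essentially the same approach as the paper: add the two descent lemmas so that the cross terms $\pm 2\langle K^\top y^{k+1}-K^\top y^\star, x^{k+1}-x^\star\rangle$ cancel, drop the nonpositive squared-difference terms, and factor out $\min\{1+2\mu_x\eta_x,1+2\mu_y\eta_y\}$ on the left to obtain the contraction. The paper then simply invokes ``standard arguments'' to pass from the contraction to the iteration bound $k\ge \tfrac{m}{m-1}\log\tfrac1\varepsilon$, whereas you make this explicit via $\log(1+a)\ge a/(1+a)$; these are the same bound since $\tfrac{m}{m-1}=1+\tfrac{1}{a}$.
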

	\begin{proof} 
		By adding the inequalities from  Lemma \ref{descent_lemma_by_x_prox_point} and Lemma \ref{descent_lemma_by_y_prox_point}, we obtain 
		\begin{eqnarray*}
			\left(1+2\mu_x\eta_x\right)\frac{1}{\eta_x}\|x^{k+1}-x^{\star}\|^2  &+& \left(1+2\mu_y\eta_y\right)\frac{1}{\eta_y}\|y^{k+1}-y^{\star}\|^2  
			\\
			&\leq&\frac{1}{\eta_x}\|x^k-x^{\star}\|^2 - \frac{1}{\eta_x}\|x^{k+1}-x^{k}\|^2 + \frac{1}{\eta_y}\|y^k-y^{\star}\|^2 - \frac{1}{\eta_y}\|y^{k+1}-y^{k}\|^2  \\
			&&-2\la K^{\top}y^{k+1}- K^{\top}y^{\star},x^{k+1}-x^{\star}\ra +2\la K^{\top}y^{k+1}- K^{\top}y^{\star},x^{k+1}-x^{\star}\ra\\
			&\leq&\frac{1}{\eta_x}\|x^k-x^{\star}\|^2 +\frac{1}{\eta_y}\|y^k-y^{\star}\|^2  .
		\end{eqnarray*}
		
Thus, if we denote $\Delta^k \eqdef \frac{1}{\eta_x}\|x^k-x^{\star}\|^2 +\frac{1}{\eta_y}\|y^k-y^{\star}\|^2 $ and $m\eqdef \min\left\{1+2\mu_x\eta_x, 1+ 2\mu_y\eta_y\right\}$, the above inequality can be written in the following way:
		\begin{equation*}
			\Delta^{k+1}\leq \frac{1}{m}\Delta^k = \left(1 -  \frac{m-1}{m} \right)\Delta^k .
		\end{equation*}
Using standard arguments, the above implies that
	\begin{equation*}
		k \geq   \frac{m}{m-1} \log\frac{1}{\varepsilon} \quad \Rightarrow \quad \Delta^k \leq \varepsilon \Delta^0.
	\end{equation*}
	\end{proof}
	
\clearpage	
\section{Analysis of the Chambolle-Pock Method}

In this section we justify the claims we made in Section~\ref{sec:CP}  about the Chambolle-Pock Method (Algorithm~\ref{alg:Chambolle_Pock}). In particular, we provide formal statements and proofs of the complexity results \eqref{eq:CP-1} and \eqref{eq:CP-2} mentioned in Sections~\ref{sec:CP-1} and~\ref{sec:CP-2}, respectively. 

\subsection{The Chambolle-Pock Method}

We have described the Chambolle-Pock Method informally in Section~\ref{sec:PPM}. Here  we state it formally as Algorithm~\ref{alg:Chambolle_Pock}.
		
		
		\begin{algorithm}[!h]
			\caption{Chambolle-Pock Method \citep{chambolle2011first}}
			\label{alg:Chambolle_Pock}
			\begin{algorithmic}[1]
				\State \textbf{Input}: Initial point $(x^0, y^0) \in \R^{d_x}\times\R^{d_y}$, $\Bar{y}^0 = y^0$; Stepsizes $\eta_x, \eta_y >0$, Extrapolation parameter $\theta \in [0, 1]$
				\For{$k = 0,1,\dots$}			
				\State ${x^{k+1}} = x^k -\eta_x\left(\partial G({x^{k+1}}) + K^{\top}\Bar{y}^k\right)$
				\State $y^{k+1} = y^k -\eta_y\left(\partial F^{\star}(y^{k+1}) - K {x^{k+1}}\right)$
				
				
				\State $\Bar{y}^{k+1} = y^{k+1} + \theta\left(y^{k+1}-y^k\right)$
				\EndFor 
			\end{algorithmic}
		\end{algorithm}
	
\subsection{Bonding the distance of the primal  iterates to the primal solution}


In our first lemma, we will provide a bound on $\|x^{k+1}-x^{\star}\|^2$. The result is identical to Lemma~\ref{descent_lemma_by_x_prox_point} with a single exception: instead of the fresh dual point $y^{k+1}$, we now have $\Bar{y}^k$ on the right-hand side. This lemma applies both to the $\theta=0$ and $\theta>0$ cases.
	
	\begin{lemma}
		\label{descent_lemma_by_x_Champolle_Pock_theta_0}
Let Assumption \ref{as_strongly_convex} hold and choose  any $\eta_x>0$. Then the iterates of the Chambolle-Pock Method (Algorithm~\ref{alg:Chambolle_Pock}) for all $k\geq 0$ satisfy
		\begin{equation*}
			\left(1+2\mu_x\eta_x\right)\frac{1}{\eta_x}\|x^{k+1}-x^{\star}\|^2 \leq 
			\frac{1}{\eta_x}\|x^k-x^{\star}\|^2 - \frac{1}{\eta_x}\|x^{k+1}-x^{k}\|^2 -2\la K^{\top}\Bar{y}^{k}- K^{\top}y^{\star},x^{k+1}-x^{\star}\ra .
		\end{equation*}
	\end{lemma}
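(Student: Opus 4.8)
The plan is to mirror exactly the proof of Lemma~\ref{descent_lemma_by_x_prox_point}, since the only structural difference in Algorithm~\ref{alg:Chambolle_Pock} is that line 3 uses the extrapolated dual iterate $\Bar y^k$ in place of the fresh iterate $y^{k+1}$. First I would write $x^{k+1} = x^k + (x^{k+1}-x^k)$ and expand $\tfrac{1}{\eta_x}\|x^{k+1}-x^\star\|^2$, using the standard identity $\la a-b, b-c\ra = \tfrac12\|a-c\|^2 - \tfrac12\|a-b\|^2 - \tfrac12\|b-c\|^2$ (applied with the appropriate scaling) to rewrite the cross term $\tfrac{2}{\eta_x}\la x^{k+1}-x^k, x^k-x^\star\ra$ as $\tfrac{2}{\eta_x}\la x^{k+1}-x^k, x^{k+1}-x^\star\ra - \tfrac{1}{\eta_x}\|x^{k+1}-x^k\|^2$. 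This gives
\[
\tfrac{1}{\eta_x}\|x^{k+1}-x^\star\|^2 = \tfrac{1}{\eta_x}\|x^k-x^\star\|^2 + \tfrac{2}{\eta_x}\la x^{k+1}-x^k, x^{k+1}-x^\star\ra - \tfrac{1}{\eta_x}\|x^{k+1}-x^k\|^2.
\]

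Next I would substitute line 3 of Algorithm~\ref{alg:Chambolle_Pock}, namely $\tfrac{1}{\eta_x}(x^{k+1}-x^k) = -\bigl(\partial G(x^{k+1}) + K^\top \Bar y^k\bigr)$, to replace the inner product term by $-2\la \partial G(x^{k+1}) + K^\top\Bar y^k,\, x^{k+1}-x^\star\ra$. Then I would use the optimality condition $0 \in \partial G(x^\star) + K^\top y^\star$ (with the subgradient-selection footnote convention, so the relevant subgradient at $x^\star$ equals $-K^\top y^\star$) to split the inner product as
\[
-2\la \partial G(x^{k+1}) - \partial G(x^\star),\, x^{k+1}-x^\star\ra \;-\; 2\la K^\top\Bar y^k - K^\top y^\star,\, x^{k+1}-x^\star\ra.
\]
Finally, strong convexity of $G$ (Assumption~\ref{as_strongly_convex}) gives $\la \partial G(x^{k+1}) - \partial G(x^\star),\, x^{k+1}-x^\star\ra \geq \mu_x\|x^{k+1}-x^\star\|^2$, so the first term is bounded above by $-2\mu_x\|x^{k+1}-x^\star\|^2$; moving this to the left-hand side and collecting the factor $\tfrac{1}{\eta_x}$ yields the claimed inequality with $(1+2\mu_x\eta_x)\tfrac{1}{\eta_x}$ on the left.

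I do not anticipate a genuine obstacle here: the argument is line-for-line identical to Lemma~\ref{descent_lemma_by_x_prox_point}, and the dual term simply carries $\Bar y^k$ instead of $y^{k+1}$ throughout — nothing about the primal estimate used the specific choice of dual iterate. The only mild subtlety worth stating carefully is the subgradient-selection convention when invoking $0 \in \partial G(x^\star) + K^\top y^\star$: as in the earlier lemma, one fixes the particular subgradient $g_G(x^\star) = -K^\top y^\star$ and uses monotonicity/strong monotonicity of $\partial G$ with respect to that selection, which is legitimate because strong convexity gives the inequality for every choice of subgradients. No upper bound on $\eta_x$ is needed, exactly as in the proximal-point case.
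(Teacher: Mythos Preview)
Your proposal is correct and takes essentially the same approach as the paper: the paper's own proof simply states that the argument is identical to that of Lemma~\ref{descent_lemma_by_x_prox_point} with $y^{k+1}$ replaced by $\Bar{y}^k$ everywhere, which is precisely what you do.
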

	\begin{proof} Identical to the proof of Lemma~\ref{descent_lemma_by_x_prox_point}; one just needs to replace $y^{k+1}$ by $\Bar{y}^k$ everywhere.
	\end{proof}
	
\subsection{Bonding the distance of the dual iterates to  dual solution}	
	
	We do not need a new bound on $\|y^{k+1}-y^{\star}\|^2$ for Algorithm~\ref{alg:Chambolle_Pock} since 
Lemma~\ref{descent_lemma_by_y_prox_point} we proved for the Proximal-Point Method applies here as well.

		\subsection{Formal statement and proof of \eqref{eq:CP-1} (Chambolle-Pock in the $\theta = 0$ case)}\label{sec:CP-1}

We are now ready to state and prove the main complexity result for the Chambolle-Pock Method in the $\theta=0$ case. This is the formal version of the informal complexity result \eqref{eq:CP-1}.

\begin{theorem}[Complexity of the Chambolle-Pock Method in the $\theta=0$ case]\label{thm:CS-theta=0} Let Assumptions~\ref{as_strongly_convex}, \ref{as_strong_convexity_F} hold. Consider the Chambolle-Pock Method (Algorithm \ref{alg:Chambolle_Pock}) with the extrapolation parameter set as $$\theta=0,$$ and the primal and dual stepsizes set as $$\eta_x = \frac{\mu_y}{L^2_{xy}}, \qquad \eta_y = \frac{\mu_x}{L^2_{xy}}.$$  
Then for the Lyapunov function 
$$\Delta^k \eqdef \frac{1}{\eta_x}\|x^k-x^{\star}\|^2 +\frac{1}{\eta_y}\|y^k-y^{\star}\|^2 $$		
and all $k\geq 0$ we have
						\[\Delta^{k+1} \leq \frac{\Delta^k}{\min\left\{1+\mu_x\eta_x, 1+ 2\mu_y\eta_y\right\}}  = \frac{\Delta^k}{1+\frac{\mu_x\mu_y}{L^2_{xy}}} .\]
						 This implies the following statement: 
		\begin{equation*}
			k \geq  \left(1 + \frac{1}{\min\left\{\mu_x\eta_x, 2\mu_y\eta_y\right\}} \right) \log\frac{1}{\varepsilon} = \left(1 + \frac{L^2_{xy}}{\mu_x \mu_y} \right) \log\frac{1}{\varepsilon} \qquad \Rightarrow \qquad  \Delta^k\leq \varepsilon \Delta^0.
		\end{equation*}
		
	\end{theorem}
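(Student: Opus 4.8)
The plan is to combine the two one-step descent inequalities that are already available and then neutralize the residual bilinear terms using the prescribed stepsizes together with the bound $\|Kz\|\leq L_{xy}\|z\|$. First I would set $\theta=0$, so that $\bar{y}^k=y^k$, and add Lemma~\ref{descent_lemma_by_x_Champolle_Pock_theta_0} (for the primal iterate) to Lemma~\ref{descent_lemma_by_y_prox_point} (for the dual iterate; recall that the latter applies to Algorithm~\ref{alg:Chambolle_Pock} verbatim, since its dual update coincides with that of the Proximal-Point Method). On the left this produces $(1+2\mu_x\eta_x)\tfrac{1}{\eta_x}\|x^{k+1}-x^\star\|^2+(1+2\mu_y\eta_y)\tfrac{1}{\eta_y}\|y^{k+1}-y^\star\|^2$, while on the right the two bilinear terms $-2\langle K^\top y^k-K^\top y^\star, x^{k+1}-x^\star\rangle$ and $+2\langle K^\top y^{k+1}-K^\top y^\star, x^{k+1}-x^\star\rangle$ merge into the single term $2\langle y^{k+1}-y^k, K(x^{k+1}-x^\star)\rangle$; we also keep the negative quadratics $-\tfrac{1}{\eta_x}\|x^{k+1}-x^k\|^2-\tfrac{1}{\eta_y}\|y^{k+1}-y^k\|^2$ and the starting value $\Delta^k$.

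The main step is to bound the residual bilinear term. By Young's inequality with weight $\eta_y$, followed by $\|K(x^{k+1}-x^\star)\|\leq L_{xy}\|x^{k+1}-x^\star\|$, one obtains
\[
2\langle y^{k+1}-y^k, K(x^{k+1}-x^\star)\rangle \leq \frac{1}{\eta_y}\|y^{k+1}-y^k\|^2 + \eta_y L_{xy}^2\|x^{k+1}-x^\star\|^2 = \frac{1}{\eta_y}\|y^{k+1}-y^k\|^2 + \mu_x\|x^{k+1}-x^\star\|^2,
\]
using the choice $\eta_y L_{xy}^2=\mu_x$. Substituting this back, the term $\tfrac{1}{\eta_y}\|y^{k+1}-y^k\|^2$ cancels the matching negative quadratic, the term $-\tfrac{1}{\eta_x}\|x^{k+1}-x^k\|^2$ is simply discarded (it is $\leq 0$), and $\mu_x\|x^{k+1}-x^\star\|^2$ is moved to the left-hand side, where it converts the primal coefficient $1+2\mu_x\eta_x$ into $1+\mu_x\eta_x$ (since $\tfrac{1+2\mu_x\eta_x}{\eta_x}-\mu_x=\tfrac{1+\mu_x\eta_x}{\eta_x}$). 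This yields
\[
(1+\mu_x\eta_x)\frac{1}{\eta_x}\|x^{k+1}-x^\star\|^2 + (1+2\mu_y\eta_y)\frac{1}{\eta_y}\|y^{k+1}-y^\star\|^2 \leq \Delta^k,
\]
and therefore $\Delta^{k+1}\leq \Delta^k/\min\{1+\mu_x\eta_x, 1+2\mu_y\eta_y\}$.

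Finally I would substitute $\eta_x=\mu_y/L_{xy}^2$ and $\eta_y=\mu_x/L_{xy}^2$: then $\mu_x\eta_x=\mu_x\mu_y/L_{xy}^2\leq 2\mu_x\mu_y/L_{xy}^2=2\mu_y\eta_y$, so the contraction factor equals $1+\mu_x\mu_y/L_{xy}^2$, which is the claimed rate. The iteration-complexity statement then follows by the same elementary argument as in Theorem~\ref{th_conv_prox_point_method}: with $m\eqdef 1+\mu_x\mu_y/L_{xy}^2$ we have $\Delta^{k+1}\leq(1-\tfrac{m-1}{m})\Delta^k$, hence $\Delta^k\leq\varepsilon\Delta^0$ whenever $k\geq\tfrac{m}{m-1}\log\tfrac{1}{\varepsilon}=(1+L_{xy}^2/(\mu_x\mu_y))\log\tfrac{1}{\varepsilon}$. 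The one delicate point is the cross-term estimate: the dual stepsize weight is spent against $\|y^{k+1}-y^k\|^2$ and exactly the strong-convexity surplus $2\mu_x\eta_x\mapsto\mu_x\eta_x$ is spent against $\|x^{k+1}-x^\star\|^2$, leaving no budget to also absorb $\|x^{k+1}-x^k\|^2$ — which is why that term is dropped rather than used, and why the primal side ends up with $\mu_x\eta_x$ instead of $2\mu_x\eta_x$. Note that smoothness of $G$ is never invoked; only Assumptions~\ref{as_strongly_convex}, \ref{as_convexity} and \ref{as_strong_convexity_F} enter, through the two lemmas.
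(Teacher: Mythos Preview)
Your proof is correct and follows essentially the same approach as the paper: add the two descent lemmas, drop $-\tfrac{1}{\eta_x}\|x^{k+1}-x^k\|^2$, and absorb the residual bilinear term via Young's inequality so that the $\|y^{k+1}-y^k\|^2$ part cancels and the $\|x^{k+1}-x^\star\|^2$ part costs exactly $\mu_x\eta_x$ on the primal side. The only cosmetic difference is that the paper parametrizes Young's inequality by a free constant $C$ and then sets $C=L_{xy}/\mu_x$, whereas you directly pick the weight $\eta_y$; the resulting coefficients are identical.
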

	
	\begin{proof} 
By adding the inequalities from  Lemma \ref{descent_lemma_by_x_Champolle_Pock_theta_0} (and noting that $\Bar{y}^k=y^k$ in the $\theta=0$ case) and Lemma \ref{descent_lemma_by_y_prox_point}, we obtain 
		\begin{eqnarray}
			\left(1+2\mu_x\eta_x\right)\frac{1}{\eta_x}\|x^{k+1}-x^{\star}\|^2  &+& \left(1+2\mu_y\eta_y\right)\frac{1}{\eta_y}\|y^{k+1}-y^{\star}\|^2  \notag 
			\\
			&\leq&\frac{1}{\eta_x}\|x^k-x^{\star}\|^2 - \frac{1}{\eta_x}\|x^{k+1}-x^{k}\|^2 + \frac{1}{\eta_y}\|y^k-y^{\star}\|^2 - \frac{1}{\eta_y}\|y^{k+1}-y^{k}\|^2 \notag  \\
			&-& 2\la K^{\top}y^{k}- K^{\top}y^{\star},x^{k+1}-x^{\star}\ra +2\la K^{\top}y^{k+1}- K^{\top}y^{\star},x^{k+1}-x^{\star}\ra \label{eq:889900-siguyids}\\
			&\leq&\frac{1}{\eta_x}\|x^k-x^{\star}\|^2 +\frac{1}{\eta_y}\|y^k-y^{\star}\|^2 \notag \\
			&-&  \frac{1}{\eta_y}\|y^{k+1}-y^{k}\|^2 + 2\la K^{\top}y^{k+1}- K^{\top}y^{k},x^{k+1}-x^{\star}\ra ,\label{eq:uiui-09d0-9fuif}
		\end{eqnarray}
		where in the last step we have used the bound $- \frac{1}{\eta_x}\|x^{k+1}-x^{k}\|^2 \leq 0$.

	Using the Cauchy-Schwarz inequality, the definition of $L_{xy}$ as the norm of $K$ (see \eqref{L_xy}), and applying Young's inequality, we can bound the inner product by
		\begin{eqnarray*}
		2\la K^{\top}y^{k+1}- K^{\top}y^{k},x^{k+1}-x^{\star}\ra  &\leq & 2 \|K^{\top}y^{k+1}- K^{\top}y^{k}\| \| x^{k+1}-x^{\star}\|\\
		&\overset{\eqref{L_xy}}{\leq} &	2 L_{xy }\|y^{k+1}- y^{k}\| \| x^{k+1}-x^{\star}\|\\
		&\leq & L_{xy} \left(C \|y^{k+1}- y^{k}\|^2 + \frac{1}{C} \| x^{k+1}-x^{\star}\|^2\right),
			\end{eqnarray*}
for any $C>0$. Plugging this into \eqref{eq:uiui-09d0-9fuif}, and rearranging the inequality so that all terms involving $\|x^{k+1}-x^{\star}\|^2$ appear on the left-hand side, we get			
		\begin{eqnarray*}
			\left(1+2\mu_x\eta_x - \frac{L_{xy}\eta_x}{C}\right)\frac{1}{\eta_x}\|x^{k+1}-x^{\star}\|^2  &+& \left(1+2\mu_y\eta_y\right)\frac{1}{\eta_y}\|y^{k+1}-y^{\star}\|^2  \\
			&\leq&\frac{1}{\eta_x}\|x^k-x^{\star}\|^2 +\frac{1}{\eta_y}\|y^k-y^{\star}\|^2 \\
			&&  - \left(1 -CL_{xy}\eta_y\right)\frac{1}{\eta_y}\|y^{k+1}-y^{k}\|^2 .
	\end{eqnarray*}
	
	Taking $C = \frac{L_{xy}}{\mu_x}$ and $\eta_y = \frac{\mu_x}{L^2_{xy}}$, we obtain the simplified bound	\begin{eqnarray}
			\left(1+\mu_x\eta_x\right)\frac{1}{\eta_x}\|x^{k+1}-x^{\star}\|^2  &+& \left(1+2\mu_y\eta_y\right)\frac{1}{\eta_y}\|y^{k+1}-y^{\star}\|^2  \notag \\
			&\leq&\frac{1}{\eta_x}\|x^k-x^{\star}\|^2 +\frac{1}{\eta_y}\|y^k-y^{\star}\|^2   - \left(1 -\frac{L^2_{xy}\eta_y}{\mu_x}\right)\frac{1}{\eta_y}\|y^{k+1}-y^{k}\|^2 \notag \\
			&\leq& \frac{1}{\eta_x}\|x^k-x^{\star}\|^2 +\frac{1}{\eta_y}\|y^k-y^{\star}\|^2.\label{eq:jjjjd==du8d}
	\end{eqnarray}

If we let $$m\eqdef \min\left\{1+\mu_x\eta_x, 1+ 2\mu_y\eta_y\right\} = \min\left\{1+\frac{\mu_x\mu_y}{L^2_{xy}}, 1+ 2\frac{\mu_x\mu_y}{L^2_{xy}}\right\} = 1+\frac{\mu_x\mu_y}{L^2_{xy}}, $$ then inequality \eqref{eq:jjjjd==du8d} implies		
			\begin{equation*}
			\Delta^{k+1} \leq \frac{1}{m}\Delta^k  = \left(1 -  \frac{m-1}{m} \right)\Delta^k \qquad \forall k\geq 0.
		\end{equation*}
 Using standard arguments, the above implies that
	\begin{equation*}
		k \geq   \frac{m}{m-1} \log\frac{1}{\varepsilon} = \left(1 + \frac{1}{m-1} \right) \log\frac{1}{\varepsilon} = \left(1 + \frac{L^2_{xy}}{\mu_x \mu_y} \right) \log\frac{1}{\varepsilon} \qquad \Rightarrow \qquad \Delta^k \leq \varepsilon \Delta^0.
	\end{equation*}
\end{proof}
	
	\subsection{Formal statement and proof of \eqref{eq:CP-2} (Chambolle-Pock in the $\theta >0$ case)}\label{sec:CP-2}
	
We now study the case when the extrapolation parameter $\theta$ is set to a positive value, and show that this helps to get better rates.

		\begin{lemma}
		\label{inner_product_Chambolle_Pock}		
		Under Assumption \ref{as_strongly_convex}, the iterates of the Chambolle-Pock Method (Algorithm~\ref{alg:Chambolle_Pock}) with $\theta>0$ for any $C >0$ and all $k\geq 1$  satisfy
		\begin{eqnarray}
			2\la K^{\top}y^{k+1} -K^{\top}\Bar{y}^k, x^{k+1} - x^{\star}\ra &\leq&  2\la K^{\top}y^{k+1} -K^{\top}y^k, x^{k+1} - x^{\star}\ra \notag \\
			&& - 2\theta\la K^{\top}y^{k} -K^{\top}y^{k-1}, x^{k} - x^{\star}\ra \notag \\
			&& +\theta L_{xy} C\|x^{k+1} -x^k\|^2  + \frac{\theta L_{xy}}{C}\|y^k-y^{k-1}\|^2.
			\label{eq:099dhshds}
		\end{eqnarray}
	\end{lemma}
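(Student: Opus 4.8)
The plan is to manipulate the left-hand side of \eqref{eq:099dhshds} by introducing and subtracting the correct telescoping quantities, and then controlling the resulting "error" terms via Cauchy–Schwarz and Young's inequality, exactly in the spirit of the bound already carried out in the proof of Theorem~\ref{thm:CS-theta=0}. First I would substitute the definition of the extrapolated point $\Bar{y}^k = y^k + \theta(y^k - y^{k-1})$ (valid for $k\geq 1$) into the inner product $2\la K^{\top}y^{k+1} - K^{\top}\Bar{y}^k, x^{k+1} - x^{\star}\ra$. This gives, by linearity,
\begin{equation*}
2\la K^{\top}y^{k+1} - K^{\top}\Bar{y}^k, x^{k+1} - x^{\star}\ra = 2\la K^{\top}y^{k+1} - K^{\top}y^{k}, x^{k+1} - x^{\star}\ra - 2\theta\la K^{\top}y^{k} - K^{\top}y^{k-1}, x^{k+1} - x^{\star}\ra.
\end{equation*}
So the first term on the right of \eqref{eq:099dhshds} already appears exactly. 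The task reduces to handling the remaining term $-2\theta\la K^{\top}y^{k} - K^{\top}y^{k-1}, x^{k+1} - x^{\star}\ra$ and massaging it into the telescoped form $-2\theta\la K^{\top}y^{k} - K^{\top}y^{k-1}, x^{k} - x^{\star}\ra$ plus controllable junk.

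Next I would write $x^{k+1} - x^{\star} = (x^k - x^{\star}) + (x^{k+1} - x^k)$ inside that term, splitting it as
\begin{equation*}
-2\theta\la K^{\top}y^{k} - K^{\top}y^{k-1}, x^{k+1} - x^{\star}\ra = -2\theta\la K^{\top}y^{k} - K^{\top}y^{k-1}, x^{k} - x^{\star}\ra - 2\theta\la K^{\top}y^{k} - K^{\top}y^{k-1}, x^{k+1} - x^{k}\ra.
\end{equation*}
The first piece is precisely the second term on the right-hand side of \eqref{eq:099dhshds}. For the last cross term, I would apply Cauchy–Schwarz, then use the operator-norm bound $\|K^{\top}v\| \leq L_{xy}\|v\|$ from \eqref{L_xy} to get $2\theta\|K^{\top}(y^k - y^{k-1})\|\,\|x^{k+1}-x^k\| \leq 2\theta L_{xy}\|y^k - y^{k-1}\|\,\|x^{k+1}-x^k\|$, and finally Young's inequality in the form $2ab \leq Ca^2 + C^{-1}b^2$ with $a = \|x^{k+1}-x^k\|$, $b = \|y^k-y^{k-1}\|$, yielding the bound $\theta L_{xy}\left(C\|x^{k+1}-x^k\|^2 + C^{-1}\|y^k-y^{k-1}\|^2\right)$. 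Assembling these three contributions reproduces \eqref{eq:099dhshds} exactly, with the free parameter $C>0$ carried through.

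The proof is essentially bookkeeping, so there is no serious obstacle; the one point requiring a little care is the index range. The identity for $\Bar{y}^k$ in terms of $y^k, y^{k-1}$ requires $k\geq 1$ (for $k=0$ the algorithm sets $\Bar{y}^0 = y^0$ directly), which is why the lemma is stated for $k\geq 1$; likewise the telescoping term $\la K^{\top}y^k - K^{\top}y^{k-1}, x^k - x^{\star}\ra$ is only meaningful for $k\geq 1$. I should also note that Assumption~\ref{as_strongly_convex} is not actually used in the derivation above — the inequality is purely algebraic plus Cauchy–Schwarz/Young — so its appearance in the hypothesis is presumably only for consistency with how the lemma is invoked later (when this bound is combined with the descent lemmas); I would simply carry it along as stated. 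The choice of the constant $C$ is deliberately left open here, to be optimized later when this lemma is plugged into the main recursion, analogously to the choice $C = L_{xy}/\mu_x$ made in the $\theta=0$ analysis.
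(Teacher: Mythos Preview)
Your proposal is correct and follows essentially the same steps as the paper's proof: substitute $\Bar{y}^k = y^k + \theta(y^k - y^{k-1})$, split $x^{k+1}-x^{\star} = (x^k-x^{\star}) + (x^{k+1}-x^k)$, and bound the cross term via Cauchy--Schwarz, the operator-norm bound $\|K^{\top}v\|\leq L_{xy}\|v\|$, and Young's inequality. Your observation that Assumption~\ref{as_strongly_convex} plays no role in this particular lemma is also accurate.
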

	\begin{proof} Using line 5 from Algorithm \ref{alg:Chambolle_Pock}, which reads
	$\Bar{y}^{k} = y^{k} + \theta\left(y^{k}-y^{k-1}\right)$,
	we obtain 
		\begin{eqnarray}
			2\la K^{\top}y^{k+1} -K^{\top}\Bar{y}^k, x^{k+1} - x^{\star}\ra &=&  2\la K^{\top}y^{k+1} -K^{\top}y^k, x^{k+1} - x^{\star}\ra \notag \\
			&& -2\theta\la K^{\top}y^{k} -K^{\top}y^{k-1}, x^{k+1} - x^{\star}\ra \notag \\
			&=&  2\la K^{\top}y^{k+1} -K^{\top}y^k, x^{k+1} - x^{\star}\ra \notag \\
			&& -2\theta\la K^{\top}y^{k} -K^{\top}y^{k-1}, x^{k} - x^{\star}\ra \notag \\
			&& -2\theta\la K^{\top}y^{k} -K^{\top}y^{k-1}, x^{k+1} - x^{k}\ra .\label{eq:hh-92gdd}
		\end{eqnarray}

Using the Cauchy-Schwarz inequality, the definition of $L_{xy}$ as the norm of $K$ (see \eqref{L_xy}), and applying Young's inequality, we can bound the last inner product by
		\begin{eqnarray*}
			-2\theta\la K^{\top}y^{k} -K^{\top}y^{k-1}, x^{k+1} - x^{k}\ra 
			&\leq&   2\theta\| K^{\top}y^{k} -K^{\top}y^{k-1}\|\|x^{k+1} - x^{k}\| \\
			&\leq&  2\theta L_{xy}\|y^{k} -y^{k-1}\|\|x^{k+1} - x^{k}\| \\
			&\leq&  \theta L_{xy} \left(C\|y^{k} -y^{k-1}\|^2 + \frac{1}{C}\|x^{k+1} - x^{k}\|^2\right).
		\end{eqnarray*}
It only remains to plug this inequality into 		\eqref{eq:hh-92gdd}.
		
	\end{proof}
	
We are now ready to state and prove the general theorem.

	\begin{theorem}[Complexity of the Chambolle-Pock Method in the $\theta>0$ case]
		\label{thm:Chambolle_Pock}
		Let Assumptions \ref{as_strongly_convex}, \ref{as_strong_convexity_F}  hold. Consider the Chambolle-Pock Method (Algorithm \ref{alg:Chambolle_Pock}) with the extrapolation parameter set as 
		\begin{equation}
			\label{theta_chambolle_pock}
			\theta = \max\left\{\frac{1}{1+ 2\mu_x\eta_x}, \frac{1}{1+ 2\mu_y\eta_y}\right\},
		\end{equation}		
and primal and dual stepsizes set as		
		\begin{equation}
			\label{step_sizes_chambolle_pock}
			\eta_x = \frac{1}{L_{xy}}\sqrt{\frac{\mu_y}{\mu_x}},  \qquad \eta_y=\frac{1}{L_{xy}}\sqrt{\frac{\mu_x}{\mu_y}}.
		\end{equation}
Then for the Lyapunov function defined for $k\geq 0$ via
		\begin{eqnarray}
			\label{Lyapunov_function_chambolle_pock}
			\Delta^{k+1}  &\eqdef & \left(1+2\mu_x\eta_x\right)\frac{1}{\eta_x}\|x^{k+1}-x^{\star}\|^2+ \left(1+2\mu_y\eta_y\right)\frac{1}{\eta_y}\|y^{k+1} - y^{\star}\|^2\notag\\
			&& +\frac{1}{\eta_y}\|y^{k+1}-y^{k}\|^2 
			- 2\la K^{\top}y^{k+1} - K^{\top}y^{k},x^{k+1}-x^{\star}\ra 
		\end{eqnarray}
 and for $k=0$ via \begin{equation}\Delta^0  \eqdef \left(1+2\mu_x\eta_x\right)\frac{1}{\eta_x}\|x^{0}-x^{\star}\|^2+ \left(1+2\mu_y\eta_y\right)\frac{1}{\eta_y}\|y^{0} - y^{\star}\|^2,\label{eq:ggffDDerw43-833}\end{equation}
we have
		\begin{equation}
			0 \leq \Delta^k  \leq \theta^k \Delta^0  \qquad \forall k\geq 0.
		\end{equation}

	\end{theorem}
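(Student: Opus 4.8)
The plan is to prove the one-step contraction $\Delta^{k+1} \le \theta\,\Delta^k$ for all $k \ge 1$, treat the base case $k=0$ separately (since $\Delta^0$ is defined by the simpler formula \eqref{eq:ggffDDerw43-833}, and Lemma~\ref{inner_product_Chambolle_Pock} is only available for $k\ge 1$), verify the nonnegativity $\Delta^k\ge 0$, and then unroll the recursion to obtain $\Delta^k \le \theta^k\Delta^0$.

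For the contraction step with $k\ge 1$, I would first add the inequalities of Lemma~\ref{descent_lemma_by_x_Champolle_Pock_theta_0} and Lemma~\ref{descent_lemma_by_y_prox_point}; the terms $\pm 2\langle K^\top y^\star, x^{k+1}-x^\star\rangle$ cancel and the surviving coupling term on the right is $2\langle K^\top y^{k+1}-K^\top\bar y^k, x^{k+1}-x^\star\rangle$. I would bound this via Lemma~\ref{inner_product_Chambolle_Pock}, which turns it into $2\langle K^\top y^{k+1}-K^\top y^k, x^{k+1}-x^\star\rangle$, the shifted term $-2\theta\langle K^\top y^{k}-K^\top y^{k-1}, x^{k}-x^\star\rangle$, and the two error terms $\theta L_{xy} C\|x^{k+1}-x^k\|^2$ and $\tfrac{\theta L_{xy}}{C}\|y^k-y^{k-1}\|^2$ for a free parameter $C>0$. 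Then I would add $\tfrac{1}{\eta_y}\|y^{k+1}-y^k\|^2 - 2\langle K^\top y^{k+1}-K^\top y^k, x^{k+1}-x^\star\rangle$ to both sides: on the left this assembles exactly $\Delta^{k+1}$ as in \eqref{Lyapunov_function_chambolle_pock}, while on the right the $\|y^{k+1}-y^k\|^2$ terms cancel and so do the $2\langle K^\top y^{k+1}-K^\top y^k,\cdot\rangle$ terms, leaving on the right $\tfrac{1}{\eta_x}\|x^k-x^\star\|^2 + \tfrac{1}{\eta_y}\|y^k-y^\star\|^2 - \tfrac{1}{\eta_x}\|x^{k+1}-x^k\|^2 + \theta L_{xy} C\|x^{k+1}-x^k\|^2 - 2\theta\langle K^\top y^k-K^\top y^{k-1}, x^k-x^\star\rangle + \tfrac{\theta L_{xy}}{C}\|y^k-y^{k-1}\|^2$. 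Comparing this term by term with $\theta\Delta^k$ (the shifted coupling terms match exactly), it remains to verify the four inequalities $1\le \theta(1+2\mu_x\eta_x)$, $1\le \theta(1+2\mu_y\eta_y)$, $\theta L_{xy} C\le \tfrac{1}{\eta_x}$, and $\tfrac{L_{xy}}{C}\le \tfrac{1}{\eta_y}$. The first two are immediate from definition \eqref{theta_chambolle_pock} of $\theta$. The last two hold simultaneously precisely when $L_{xy}\eta_y\le C\le \tfrac{1}{\theta L_{xy}\eta_x}$, and such a $C$ exists because the stepsizes \eqref{step_sizes_chambolle_pock} satisfy $\eta_x\eta_y = L_{xy}^{-2}$ (hence $L_{xy}\eta_y = \tfrac{1}{L_{xy}\eta_x}$) together with $\theta<1$; concretely $C = L_{xy}\eta_y = \sqrt{\mu_x/\mu_y}$ works.

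For the base case $k=0$, I would repeat the first reduction using $\bar y^0 = y^0$: summing the two descent lemmas at $k=0$ and adding $\tfrac{1}{\eta_y}\|y^1-y^0\|^2 - 2\langle K^\top y^1-K^\top y^0, x^1-x^\star\rangle$ to both sides yields $\Delta^1 \le \tfrac{1}{\eta_x}\|x^0-x^\star\|^2 + \tfrac{1}{\eta_y}\|y^0-y^\star\|^2 - \tfrac{1}{\eta_x}\|x^1-x^0\|^2 \le \theta\Delta^0$, where the last inequality drops the negative term and again uses $1\le \theta(1+2\mu_x\eta_x)$ and $1\le \theta(1+2\mu_y\eta_y)$. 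For nonnegativity, I would bound the single coupling term in \eqref{Lyapunov_function_chambolle_pock} by Cauchy--Schwarz and Young with parameter $D = L_{xy}\eta_y$, giving $\Delta^{k+1}\ge \bigl((1+2\mu_x\eta_x)\tfrac{1}{\eta_x} - L_{xy}^2\eta_y\bigr)\|x^{k+1}-x^\star\|^2 + (1+2\mu_y\eta_y)\tfrac{1}{\eta_y}\|y^{k+1}-y^\star\|^2 \ge 0$, using $L_{xy}^2\eta_x\eta_y = 1 \le 1+2\mu_x\eta_x$; and $\Delta^0\ge 0$ is obvious. Unrolling $\Delta^{k+1}\le\theta\Delta^k$ from $k=0$ gives the claim.

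I expect the main obstacle to be purely organizational: tracking exactly which coupling and $\|y^{k+1}-y^k\|^2$ terms cancel when $\Delta^{k+1}$ is assembled from the summed descent inequalities plus Lemma~\ref{inner_product_Chambolle_Pock}, and checking that the single Young parameter $C$ can simultaneously annihilate the $\|x^{k+1}-x^k\|^2$ term and absorb the $\|y^k-y^{k-1}\|^2$ carry-over into $\theta\Delta^k$ --- which is exactly where the balanced stepsize choice $\eta_x\eta_y = L_{xy}^{-2}$ is needed.
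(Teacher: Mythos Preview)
Your proposal is correct and follows essentially the same route as the paper: nonnegativity via Young's inequality (the paper's parameter is $B=1/(L_{xy}\eta_y)$, which is what your stated bound actually uses --- your written value $D=L_{xy}\eta_y$ looks like a slip, since it is the reciprocal that makes the $\|y^{k+1}-y^k\|^2$ term vanish and yields the coefficient $L_{xy}^2\eta_y$ you quote), then the contraction $\Delta^{k+1}\le\theta\Delta^k$ for $k\ge 1$ by summing the two descent lemmas, applying Lemma~\ref{inner_product_Chambolle_Pock}, and choosing the Young parameter $C$ in the admissible window $[L_{xy}\eta_y,\ (\theta L_{xy}\eta_x)^{-1}]$ (you take the left endpoint, the paper the right; both work because $L_{xy}^2\eta_x\eta_y=1$ and $\theta\le 1$), and finally the base case via $\bar y^0=y^0$.
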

	
	\begin{proof}
	
	We will proceed in several steps.
	
	\paragraph{Showing that $\Delta^k\geq 0$ for all $k\geq 0$.}		
		First, let us show that $\Delta^{k}\geq 0$ for every $k$. This is clear for $k=0$ from \eqref{eq:ggffDDerw43-833}. Let us show that $\Delta^{k+1}\geq 0$ for $k\geq 0$. Using Young's inequality and \eqref{L_xy}, we obtain the inequality
		\begin{eqnarray}
			\Delta^{k+1} 
			&\overset{\eqref{Lyapunov_function_chambolle_pock}}{=}& \left(1+2\mu_x\eta_x\right)\frac{1}{\eta_x}\|x^{k+1}-x^{\star}\|^2+ \left(1+2\mu_y\eta_y\right)\frac{1}{\eta_y}\|y^{k+1} - y^{\star}\|^2\notag\\
			&& +\frac{1}{\eta_y}\|y^{k+1}-y^{k}\|^2 - 2\la K^{\top}y^{k+1} - K^{\top}y^{k},x^{k+1}-x^{\star}\ra\notag\\
			&\geq& {\color{red}\left(1+2\mu_x\eta_x\right)\frac{1}{\eta_x}\|x^{k+1}-x^{\star}\|^2} + \left(1+2\mu_y\eta_y\right)\frac{1}{\eta_y}\|y^{k+1} - y^{\star}\|^2\notag\\
			&& {\color{blue}+\frac{1}{\eta_y}\|y^{k+1}-y^{k}\|^2 - L_{xy}B\|y^{k+1} - y^{k}\|^2} {\color{red}-\frac{L_{xy}}{B}\|x^{k+1}-x^{\star}\|^2} \notag\\
&=& 		{\color{red}	\left(1+2\mu_x\eta_x - \frac{L_{xy}\eta_x}{B}\right)\frac{1}{\eta_x}\|x^{k+1}-x^{\star}\|^2} + \left(1+2\mu_y\eta_y\right)\frac{1}{\eta_y}\|y^{k+1} - y^{\star}\|^2\notag\\
&& {\color{blue}+ \left( 1 - L_{xy}B \eta_y\right)\frac{1}{\eta_y}\|y^{k+1}-y^{k}\|^2} \label{eq:88-ttgshs},
		\end{eqnarray}
which holds for all $B>0$.		Selecting $B = \frac{1}{L_{xy}\eta_y}$, the blue term is zeroed out, and using the primal and dual stepsizes \eqref{step_sizes_chambolle_pock}, we get 
		\begin{eqnarray*}
			\Delta^{k+1} &\overset{\eqref{eq:88-ttgshs}}{\geq} & 			\left(1+2\mu_x\eta_x - L_{xy}^2 \eta_x \eta_y\right)\frac{1}{\eta_x}\|x^{k+1}-x^{\star}\|^2 + \left(1+2\mu_y\eta_y\right)\frac{1}{\eta_y}\|y^{k+1} - y^{\star}\|^2\notag\\
			&\overset{\eqref{step_sizes_chambolle_pock}}{=}& \frac{2\sqrt{\mu_x \mu_y}}{L_{xy}} \|x^{k+1}-x^{\star}\|^2+ \left(1+2\mu_y\eta_y\right)\frac{1}{\eta_y}\|y^{k+1} - y^{\star}\|^2  
			\\&\geq& 0 .
		\end{eqnarray*}
		
\paragraph{Establishing technical bounds.}		
Denote
		\begin{equation}
			\mathcal{W}^{k+1} \eqdef \left(1+2\mu_x\eta_x\right)\frac{1}{\eta_x}\|x^{k+1}-x^{\star}\|^2+ \left(1+2\mu_y\eta_y\right)\frac{1}{\eta_y}\|y^{k+1} - y^{\star}\|^2  . \label{eq:00-77-23-8h9fd}
		\end{equation}
		Combining Lemma~\ref{descent_lemma_by_x_Champolle_Pock_theta_0}, which provides a bound on $\|x^{k+1}-x^\star\|^2$, and Lemma~\ref{descent_lemma_by_y_prox_point}, which provides a bound on $\|y^{k+1}-y^\star\|^2$, we obtain 
		\begin{eqnarray}
			\mathcal{W}^{k+1}
			&\leq&\frac{1}{\eta_x}\|x^k-x^{\star}\|^2 - \frac{1}{\eta_x}\|x^{k+1}-x^{k}\|^2  -2\la K^{\top}\Bar{y}^k - K^{\top}y^{\star},x^{k+1}-x^{\star}\ra \notag \\
			&&+\frac{1}{\eta_y}\|y^k-y^{\star}\|^2 - \frac{1}{\eta_y}\|y^{k+1}-y^{k}\|^2 +2\la K^{\top}y^{k+1}- K^{\top}y^{\star},x^{k+1}-x^{\star}\ra \notag \\
			&\leq& \frac{1}{\eta_x}\|x^k-x^{\star}\|^2 - \frac{1}{\eta_x}\|x^{k+1}-x^{k}\|^2  +  \frac{1}{\eta_y}\|y^k-y^{\star}\|^2 - \frac{1}{\eta_y}\|y^{k+1}-y^{k}\|^2\notag \\
			&&+2\la K^{\top}y^{k+1}- K^{\top}\Bar{y}^k,x^{k+1}-x^{\star}\ra .\label{eq:ggGGffFF}
		\end{eqnarray}
This is the same inequality as \eqref{eq:889900-siguyids} with the exception that $y^k$ was replaced by $\Bar{y}^k$.	
	Using Lemma \ref{inner_product_Chambolle_Pock} to bound the inner product in \eqref{eq:ggGGffFF}, we get
		\begin{eqnarray}
			\mathcal{W}^{k+1}
			&\overset{\eqref{eq:ggGGffFF}+\eqref{eq:099dhshds}}{\leq}& \frac{1}{\eta_x}\|x^k-x^{\star}\|^2 - \frac{1}{\eta_x}\|x^{k+1}-x^{k}\|^2  +  \frac{1}{\eta_y}\|y^k-y^{\star}\|^2 -\frac{1}{\eta_y}\|y^{k+1}-y^{k}\|^2\notag\\
			&& + 2\la K^{\top}y^{k+1} -K^{\top}y^k, x^{k+1} - x^{\star}\ra - 2\theta\la K^{\top}y^{k} -K^{\top}y^{k-1}, x^{k} - x^{\star}\ra \notag\\
			&& +\theta L_{xy} C\|x^{k+1} -x^k\|^2  + \frac{\theta L_{xy}}{C}\|y^k-y^{k-1}\|^2\notag\\
			&\leq& \frac{1}{\eta_x}\|x^k-x^{\star}\|^2 - \left(\frac{1}{\eta_x} - \theta L_{xy}C\right)\|x^{k+1}-x^{k}\|^2  +  \frac{1}{\eta_y}\|y^k-y^{\star}\|^2 -\frac{1}{\eta_y}\|y^{k+1}-y^{k}\|^2 \notag \\
			&& + 2\la K^{\top}y^{k+1} -K^{\top}y^k, x^{k+1} - x^{\star}\ra - 2\theta\la K^{\top}y^{k} -K^{\top}y^{k-1}, x^{k} - x^{\star}\ra \notag \\ 
			&&+ \frac{\theta L_{xy}}{C}\|y^k-y^{k-1}\|^2 \notag\\			
			&=& \frac{1}{\eta_x}\|x^k-x^{\star}\|^2  +  \frac{1}{\eta_y}\|y^k-y^{\star}\|^2 -\frac{1}{\eta_y}\|y^{k+1}-y^{k}\|^2 \notag \\ 
			&& + 2\la K^{\top}y^{k+1} -K^{\top}y^k, x^{k+1} - x^{\star}\ra - 2\theta\la K^{\top}y^{k} -K^{\top}y^{k-1}, x^{k} - x^{\star}\ra \notag \\
			&&+ \theta^2 L_{xy}^2 \eta_x \eta_y \frac{1}{\eta_y}\|y^k-y^{k-1}\|^2 			\label{eq:iuiudf-f9d8y9fd-fd8h9fd=fdf},
		\end{eqnarray}
		where in the last step we have made the choice $C \eqdef (\theta\eta_xL_{xy})^{-1}$.
				
\paragraph{Showing that $\Delta^{k+1}\leq \theta \Delta^k$ for $k\geq 1$.}				
				By combining \eqref{Lyapunov_function_chambolle_pock} and \eqref{eq:00-77-23-8h9fd}, and applying inequality \eqref{eq:iuiudf-f9d8y9fd-fd8h9fd=fdf}, for $k\geq 1$ we get
		\begin{eqnarray}
			\Delta^{k+1} &\overset{\eqref{Lyapunov_function_chambolle_pock}+\eqref{eq:00-77-23-8h9fd}}{=}& \cW^{k+1} +\frac{1}{\eta_y}\|y^{k+1}-y^{k}\|^2 
			- 2\la K^{\top}y^{k+1} - K^{\top}y^{k},x^{k+1}-x^{\star}\ra \notag\\
			&\overset{\eqref{eq:iuiudf-f9d8y9fd-fd8h9fd=fdf}}{\leq}& 
 \frac{1}{\eta_x}\|x^k-x^{\star}\|^2  +  \frac{1}{\eta_y}\|y^k-y^{\star}\|^2 {\color{blue}-\frac{1}{\eta_y}\|y^{k+1}-y^{k}\|^2 }\notag \\ 
			&& {\color{red}+ 2\la K^{\top}y^{k+1} -K^{\top}y^k, x^{k+1} - x^{\star}\ra} - 2\theta\la K^{\top}y^{k} -K^{\top}y^{k-1}, x^{k} - x^{\star}\ra \notag \\
			&& +\theta^2 L_{xy}^2 \eta_x \eta_y \frac{1}{\eta_y}\|y^k-y^{k-1}\|^2 	\notag\\
			&&	{\color{blue}+\frac{1}{\eta_y}\|y^{k+1}-y^{k}\|^2} 
			{\color{red}- 2\la K^{\top}y^{k+1} - K^{\top}y^{k},x^{k+1}-x^{\star}\ra} \notag\\		
			&\overset{(A)}{\leq} & \frac{1}{\eta_x}\|x^k-x^{\star}\|^2   +  \frac{1}{\eta_y}\|y^k-y^{\star}\|^2    - 2\theta\la K^{\top}y^{k} -K^{\top}y^{k-1}, x^{k} - x^{\star}\ra  + \theta ^2 L^2_{xy}\eta_x\eta_y\frac{1}{\eta_y}\|y^k-y^{k-1}\|^2 \notag\\
			&\overset{(B)}{\leq} & \frac{1}{\eta_x}\|x^k-x^{\star}\|^2   +  \frac{1}{\eta_y}\|y^k-y^{\star}\|^2    - 2\theta\la K^{\top}y^{k} -K^{\top}y^{k-1}, x^{k} - x^{\star}\ra  + \theta ^2 \frac{1}{\eta_y}\|y^k-y^{k-1}\|^2 \notag\\		
			&\overset{(C) }{\leq}& \theta \left((1+ 2\mu_x\eta_x)\frac{1}{\eta_x}\|x^k-x^{\star}\|^2   +  (1+ 2\mu_y\eta_y)\frac{1}{\eta_y}\|y^k-y^{\star}\|^2 \right) \notag\\
			&& + \theta \left(\frac{1}{\eta_y}\|y^k-y^{k-1}\|^2- 2\la K^{\top}y^{k} -K^{\top}y^{k-1}, x^{k} - x^{\star}\ra \right) \notag\\
			&\overset{\eqref{Lyapunov_function_chambolle_pock}}{=} & \theta \Delta^k, \label{eq:ggGGhhGGdd}
		\end{eqnarray}
where in step (A) we annihilated the red and blue terms as their sum is zero, in step (B) we used the fact that $L^2_{xy}\eta_x\eta_y \leq 1$, which follows from the stepsize choice, and in (C) we used the inequalities $1 \leq \theta(1+ 2\mu_x\eta_x)$, $1 \leq \theta(1+ 2\mu_y\eta_y)$, and $\theta\leq 1$, which follow from \eqref{theta_chambolle_pock}.

\paragraph{Showing that $\Delta^{k+1}\leq \theta \Delta^k$ for $k=0$.}
We start with inequality \eqref{eq:ggGGffFF} for $k=0$:
\begin{eqnarray}
	\mathcal{W}^{1}	
	&\overset{\eqref{eq:ggGGffFF}}{\leq}& \frac{1}{\eta_x}\|x^0-x^{\star}\|^2 - \frac{1}{\eta_x}\|x^{1}-x^{0}\|^2  +  \frac{1}{\eta_y}\|y^0-y^{\star}\|^2 - \frac{1}{\eta_y}\|y^{1}-y^{0}\|^2 \notag \\
	&&+2\la K^{\top}y^{1}- K^{\top}y^0,x^{1}-x^{\star}\ra . \label{eq:llKK-00987}
\end{eqnarray}

According to \eqref{Lyapunov_function_chambolle_pock} and \eqref{theta_chambolle_pock}, we have 
\begin{eqnarray*}
	\Delta^{1} &\overset{\eqref{Lyapunov_function_chambolle_pock}+\eqref{eq:00-77-23-8h9fd}}{=}& \cW^{1} +\frac{1}{\eta_y}\|y^{1}-y^{0}\|^2 
			- 2\la K^{\top}y^{1} - K^{\top}y^{0},x^{1}-x^{\star}\ra \notag\\
	&\overset{\eqref{eq:llKK-00987}}{\leq}& \frac{1}{\eta_x}\|x^0-x^{\star}\|^2 - \frac{1}{\eta_x}\|x^{1}-x^{0}\|^2  +  \frac{1}{\eta_y}\|y^0-y^{\star}\|^2 \\ 
	&\leq & \frac{1}{\eta_x}\|x^0-x^{\star}\|^2  +  \frac{1}{\eta_y}\|y^0-y^{\star}\|^2 \\ 
	&\leq& \theta\left((1+2\mu_x\eta_x)\frac{1}{\eta_x}\|x^0-x^{\star}\|^2   + (1+2\mu_y\eta_y) \frac{1}{\eta_y}\|y^0-y^{\star}\|^2\right)\\
	&= & \theta \Delta^0
\end{eqnarray*}
where in the  last inequality we used the inequalities $1 \leq \theta(1+ 2\mu_x\eta_x)$ and $1 \leq \theta(1+ 2\mu_y\eta_y)$, which follow from \eqref{theta_chambolle_pock}.

	\end{proof}
	
	The informal result \eqref{eq:CP-2} mentioned in Section~\ref{sec:CP-2} is a simple corollary of the above theorem. Indeed, using the definition of $\theta$, we can obtain:
	\begin{eqnarray*}
		k \geq \frac{1}{1-\theta}\log\frac{1}{\varepsilon} \qquad \Rightarrow \qquad \Delta^k \leq \varepsilon \Delta^0.
	\end{eqnarray*}
	
It remains to remark that				
	\begin{eqnarray*}		
\frac{1}{1-\theta}\log\frac{1}{\varepsilon}		&=&  \mathcal{O}\left(\left(1+\max\left\{\frac{1}{2\mu_x\eta_x},\frac{1}{2\mu_{y}\eta_y}\right\}\right)\log\frac{1}{\varepsilon}\right)\\
		&=&\mathcal{O}\left(\left(1+\frac{L_{xy}}{\sqrt{\mu_x\mu_y}}\right)\log\frac{1}{\varepsilon}\right) ,
	\end{eqnarray*}
	which is the result from \eqref{eq:CP-2}.

\clearpage		
\section{Analysis of the Accelerated Primal-Dual Algorithm (APDA; Algorithm~\ref{alg:APDA})}
	
In this section we perform convergence analysis for our new method APDA (Algorithm~\ref{alg:APDA}). We start by establishing three lemmas, followed by the proof of the main theorem.

\subsection{Three lemmas}	

 The first result is a variant of Lemma~\ref{descent_lemma_by_x_Champolle_Pock_theta_0}, offering a bound on the distance of the primal iterates from the primal optimal solution. Compared to  Lemma~\ref{descent_lemma_by_x_Champolle_Pock_theta_0}, this result is strengthened by the additional assumption of $L_x$-smoothness of $G$ (Assumption~\ref{as_smoothness}).

	\begin{lemma}
		\label{descent_lemma_by_x_alg_2}
Let Assumptions \ref{as_strongly_convex} and \ref{as_smoothness} hold and choose any $\eta_x,\eta_y>0$. Then the  iterates of APDA (Algorithm~\ref{alg:APDA}) for all $k\geq 0$ satisfy
		\begin{eqnarray*}
			\left(1+\mu_x\eta_x\right)\frac{1}{\eta_x}\|x^{k+1}-x^{\star}\|^2&\leq&
			\frac{1}{\eta_x}\|x^k-x^{\star}\|^2 - \frac{1}{\eta_x}\|x^{k+1}-x^{k}\|^2 -2\la K^{\top}\Bar{y}^k - K^{\top}y^{\star}, x^{k+1}-x^{\star}\ra  \\
			&& \quad   - \frac{1}{L_x}\|\nabla G(x^{k+1}) - \nabla G(x^{\star})\|^2.
		\end{eqnarray*}		
\end{lemma}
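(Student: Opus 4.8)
The plan is to follow essentially the same computation as in Lemma~\ref{descent_lemma_by_x_prox_point}, but starting from line 3 of Algorithm~\ref{alg:APDA} (which reads $x^{k+1} = x^k - \eta_x(\nabla G(x^{k+1}) + K^\top \bar y^k)$), and then to use the extra $L_x$-smoothness of $G$ to squeeze out the negative term $-\tfrac{1}{L_x}\|\nabla G(x^{k+1}) - \nabla G(x^\star)\|^2$. First I would expand $\tfrac{1}{\eta_x}\|x^{k+1}-x^\star\|^2$ by writing $x^{k+1} = x^k + (x^{k+1}-x^k)$, exactly as in the proof of Lemma~\ref{descent_lemma_by_x_prox_point}, arriving at
\[
\tfrac{1}{\eta_x}\|x^{k+1}-x^\star\|^2 = \tfrac{1}{\eta_x}\|x^k-x^\star\|^2 - 2\la \nabla G(x^{k+1}) + K^\top \bar y^k,\, x^{k+1}-x^\star\ra - \tfrac{1}{\eta_x}\|x^{k+1}-x^k\|^2 .
\]
Then, using the optimality condition $0 = \nabla G(x^\star) + K^\top y^\star$ (valid since $G$ is now smooth), I split the inner product into a $G$-part and a $K$-part: $-2\la \nabla G(x^{k+1}) - \nabla G(x^\star),\, x^{k+1}-x^\star\ra - 2\la K^\top \bar y^k - K^\top y^\star,\, x^{k+1}-x^\star\ra$. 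The $K$-part is already the term appearing in the statement, so all the work is in bounding the $G$-part from above by $-\mu_x\|x^{k+1}-x^\star\|^2 - \tfrac{1}{L_x}\|\nabla G(x^{k+1})-\nabla G(x^\star)\|^2$.

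The key analytic ingredient is the standard co-coercivity-type inequality for a function that is both $\mu_x$-strongly convex and $L_x$-smooth: for all $u,v$,
\[
\la \nabla G(u) - \nabla G(v),\, u - v\ra \;\ge\; \mu_x \|u-v\|^2 \;+\; \tfrac{1}{L_x}\|\nabla G(u) - \nabla G(v)\|^2 .
\]
(This is obtained, e.g., by applying strong convexity to the function $G(\cdot) - \tfrac{\mu_x}{2}\|\cdot\|^2$, whose gradient is $L_x - \mu_x \le L_x$ Lipschitz, or more simply by combining $\mu_x$-strong convexity with $\tfrac{1}{L_x}$-co-coercivity of $\nabla G$; a slightly weaker but still sufficient version just adds the two bounds with the cruder constant — I would state whichever gives the clean constant in the Lemma.) Applying this with $u = x^{k+1}$, $v = x^\star$ to the $G$-part of the inner product yields exactly the two negative terms $-2\mu_x\|x^{k+1}-x^\star\|^2 - \tfrac{2}{L_x}\|\nabla G(x^{k+1})-\nabla G(x^\star)\|^2$; keeping $-\mu_x\|x^{k+1}-x^\star\|^2$ on the right to form the factor $(1+\mu_x\eta_x)$ on the left (after multiplying through, moving one copy of the strong-convexity term over, and discarding the surplus), and keeping $-\tfrac{1}{L_x}\|\nabla G(x^{k+1})-\nabla G(x^\star)\|^2$, gives the claimed inequality. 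Concretely: from $\tfrac{1}{\eta_x}\|x^{k+1}-x^\star\|^2 \le \tfrac{1}{\eta_x}\|x^k-x^\star\|^2 - 2\mu_x\|x^{k+1}-x^\star\|^2 - \tfrac{1}{L_x}\|\nabla G(x^{k+1})-\nabla G(x^\star)\|^2 - \tfrac{1}{\eta_x}\|x^{k+1}-x^k\|^2 - 2\la K^\top \bar y^k - K^\top y^\star, x^{k+1}-x^\star\ra$, move one of the two $\mu_x$ terms to the left to get the $(1+\mu_x\eta_x)\tfrac{1}{\eta_x}$ coefficient and drop the remaining nonpositive $-\mu_x\|x^{k+1}-x^\star\|^2$.

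The main obstacle — really the only nontrivial point — is getting the constants to line up: the naive combination of strong convexity and co-coercivity of the gradient produces a factor $2$ in front of $\tfrac{1}{L_x}\|\nabla G(x^{k+1})-\nabla G(x^\star)\|^2$ and in front of $\mu_x\|x^{k+1}-x^\star\|^2$, and the statement only claims the weaker coefficient $1$ on the smoothness term and uses $(1+\mu_x\eta_x)$ rather than $(1+2\mu_x\eta_x)$. So the correct move is to use the sharp interpolation inequality for $\mu_x$-strongly-convex $L_x$-smooth functions, or simply to be generous: take the clean form above, keep coefficient $1$ where the Lemma wants $1$, and discard whatever surplus remains — all surplus terms are nonpositive on the right-hand side, so discarding them only weakens the inequality in the permitted direction. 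Everything else (the algebraic expansion, the splitting of the inner product via the optimality condition) is verbatim the Proximal-Point / Chambolle-Pock computation already carried out in Lemmas~\ref{descent_lemma_by_x_prox_point} and~\ref{descent_lemma_by_x_Champolle_Pock_theta_0}, so the proof can legitimately be written as "identical to the proof of Lemma~\ref{descent_lemma_by_x_Champolle_Pock_theta_0}, except that the single-copy strong-convexity step is replaced by the strong-convexity-plus-smoothness inequality displayed above."
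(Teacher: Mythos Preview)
Your approach is exactly the paper's: expand $\tfrac{1}{\eta_x}\|x^{k+1}-x^\star\|^2$ via line~3 of Algorithm~\ref{alg:APDA}, split the inner product using $\nabla G(x^\star)+K^\top y^\star=0$, and bound the $G$-part using strong convexity plus smoothness. The only issue is the displayed key inequality
\[
\la \nabla G(u)-\nabla G(v),\,u-v\ra \;\ge\; \mu_x\|u-v\|^2 + \tfrac{1}{L_x}\|\nabla G(u)-\nabla G(v)\|^2,
\]
which is \emph{false} as stated (take $G(x)=\tfrac{L}{2}x^2$ in one dimension, so $\mu_x=L_x=L$: the inequality becomes $L(u-v)^2 \ge 2L(u-v)^2$). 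Consequently the ``concretely'' line with $-2\mu_x\|x^{k+1}-x^\star\|^2 - \tfrac{1}{L_x}\|\nabla G(x^{k+1})-\nabla G(x^\star)\|^2$ is not achievable either, and there is no ``surplus'' to discard.

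The correct inequality (the one the paper actually invokes, citing \citep{NesterovBook}) carries a factor $2$ on the left:
\[
2\la \nabla G(u)-\nabla G(v),\,u-v\ra \;\ge\; \mu_x\|u-v\|^2 + \tfrac{1}{L_x}\|\nabla G(u)-\nabla G(v)\|^2 ,
\]
which is just the average of $\la\nabla G(u)-\nabla G(v),u-v\ra\ge\mu_x\|u-v\|^2$ and $\la\nabla G(u)-\nabla G(v),u-v\ra\ge\tfrac{1}{L_x}\|\nabla G(u)-\nabla G(v)\|^2$. Plugging this in yields $-\mu_x\|x^{k+1}-x^\star\|^2 - \tfrac{1}{L_x}\|\nabla G(x^{k+1})-\nabla G(x^\star)\|^2$ on the right, and moving the single $\mu_x$ term to the left gives the factor $(1+\mu_x\eta_x)$ exactly --- no discarding needed. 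With this one correction your proof is complete and identical to the paper's.
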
		

\begin{proof}		
By writing $x^{k+1}$ as $x^k + (x^{k+1}-x^k)$, and using line 3 of Algorithm \ref{alg:APDA}, which reads $x^{k+1} = x^k -\eta_x\left(\nabla G(x^{k+1}) + K^{\top}\Bar{y}^{k}\right)$, we get
		\begin{eqnarray*}
			\frac{1}{\eta_x}\|x^{k+1}-x^{\star}\|^2 &=& \frac{1}{\eta_x}\|x^{k}-x^{\star}\|^2 + \frac{2}{\eta_x}\la x^{k+1}-x^{k}, x^{k+1}-x^{\star}\ra -  \frac{1}{\eta_x}\|x^{k+1}-x^{k}\|^2\\
			&=& \frac{1}{\eta_x}\|x^{k}-x^{\star}\|^2 -2\la \nabla G(x^{k+1})+ K^{\top}\Bar{y}^{k}, x^{k+1}-x^{\star}\ra -  \frac{1}{\eta_x}\|x^{k+1}-x^{k}\|^2 .
		\end{eqnarray*}

We now split the inner product into two parts by applying the  optimality condition (see \eqref{opt_conditions})
$${\color{red}\nabla G(x^{\star}) + K^{\top}y^{\star} = 0},$$
obtaining		\begin{eqnarray}
			\frac{1}{\eta_x}\|x^{k+1}-x^{\star}\|^2 &=& \frac{1}{\eta_x}\|x^{k}-x^{\star}\|^2 -2\la \nabla G(x^{k+1}) {\color{red}- \nabla G(x^{\star})} , x^{k+1}-x^{\star}\ra -  \frac{1}{\eta_x}\|x^{k+1}-x^{k}\|^2 \notag \\
			&&\quad -2\la K^{\top}\Bar{y}^{k} {\color{red}- K^{\top}y^{\star}}, x^{k+1}-x^{\star}\ra . \label{eq:yyUUttRR09098d}
		\end{eqnarray}
Since $G$ is $\mu_x$-strongly-convex and $L_x$-smooth, we can lower-bound \citep{NesterovBook}  the inner product appearing in \eqref{eq:yyUUttRR09098d} as follows:
\begin{equation}\label{eq:gdig9f7d97gf))88d} 2\la \nabla G(x^{k+1}) - \nabla G(x^{\star}) , x^{k+1}-x^{\star}\ra \geq \mu_x \| x^{k+1}-x^{\star} \|^2 + \frac{1}{L_x} \|\nabla G(x^{k+1}) - \nabla G(x^{\star}) \|^2.\end{equation} Finally, by plugging \eqref{eq:gdig9f7d97gf))88d} into \eqref{eq:yyUUttRR09098d}, we get
		\begin{eqnarray*}
			\frac{1}{\eta_x}\|x^{k+1}-x^{\star}\|^2 
			&\leq& \frac{1}{\eta_x}\|x^{k}-x^{\star}\|^2 -\mu_x\|x^{k+1}-x^{\star}\|^2 -  \frac{1}{\eta_x}\|x^{k+1}-x^{k}\|^2 \\
			&&-2\la K^{\top}\Bar{y}^k - K^{\top}y^{\star}, x^{k+1}-x^{\star}\ra - \frac{1}{L_x}\|\nabla G(x^{k+1}) - \nabla G(x^{\star})\|^2.
		\end{eqnarray*}		
\end{proof}

%

The second result  is a technical lemma borrowed from \citep{kovalev2021accelerated} for lower-bounding the term $\|K^{\top}y - K^{\top}y^{\star}\|$.

	\begin{lemma}[See \cite{kovalev2021accelerated}] 
		\label{lem:convenience}
		Let Assumption \ref{as_L_xy} hold and let $(x^{\star}, y^{\star})$ be a solution of  \eqref{main_problem}. Then 		
		\begin{equation*}
			\|K^{\top}y - K^{\top}y^{\star}\| \geq \mu_{xy}\|y - y^{\star}\|, \qquad \forall y \in \R^{d_y}.
		\end{equation*}
	\end{lemma}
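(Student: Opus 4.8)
This is a purely linear-algebraic statement about the map $K$, so the plan is to reduce it to the identity $\|K^{\top}v\|^{2} = v^{\top}KK^{\top}v$ with $v = y - y^{\star}$ and then split into the two cases of Assumption~\ref{as_L_xy}, using throughout the orthogonal decomposition $\R^{d_y} = {\rm range}\,K \oplus \ker K^{\top}$ together with the standard fact ${\rm range}(KK^{\top}) = {\rm range}\,K$ (so $KK^{\top}$ is positive definite on ${\rm range}\,K$, with smallest restricted eigenvalue $\lambda^{+}_{\min}(KK^{\top})$).

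\textbf{Non-degenerate case.} In the ``otherwise'' branch of Assumption~\ref{as_L_xy} one has $0 < \mu_{xy}^{2} \le \lambda_{\min}(KK^{\top})$, which already forces $KK^{\top}$ to be positive definite on all of $\R^{d_y}$. The Rayleigh-quotient bound then gives $\|K^{\top}v\|^{2} = v^{\top}KK^{\top}v \ge \lambda_{\min}(KK^{\top})\|v\|^{2} \ge \mu_{xy}^{2}\|v\|^{2}$ for every $v$; taking $v = y - y^{\star}$ closes this case.

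\textbf{Degenerate case.} In the ``range'' branch only $\mu_{xy}^{2} \le \lambda^{+}_{\min}(KK^{\top})$ is available, which controls $K^{\top}v$ on ${\rm range}\,K$ but not on $\ker K^{\top}$. Here I would write $y - y^{\star} = v_{R} + v_{N}$ with $v_{R}\in{\rm range}\,K$, $v_{N}\in\ker K^{\top}$, observe $K^{\top}(y-y^{\star}) = K^{\top}v_{R}$, and obtain $\|K^{\top}(y-y^{\star})\|^{2} = v_{R}^{\top}KK^{\top}v_{R} \ge \lambda^{+}_{\min}(KK^{\top})\|v_{R}\|^{2} \ge \mu_{xy}^{2}\|v_{R}\|^{2}$. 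The remaining step — and the only nontrivial one — is to show $v_{N} = 0$. I would argue as follows: since every subgradient of $F^{\star}$ lies in ${\rm range}\,K$, the function $F^{\star}$ is constant along each direction of $\ker K^{\top} = ({\rm range}\,K)^{\perp}$, so the component of $y^{\star}$ in $\ker K^{\top}$ can be discarded without destroying optimality, i.e.\ we may take $y^{\star}\in{\rm range}\,K$; moreover the dual iterates produced by the algorithm stay in ${\rm range}\,K$ (induction on the $y$-update, whose increment is a combination of $\partial F^{\star}(y^{k+1})$, $Kx^{k+1}$, and $K(K^{\top}y^{k}+\nabla G(x^{k+1}))$, all in ${\rm range}\,K$, with $\Bar{y}^{0}=y^{0}$ initialized there). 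Hence $v_{N}=0$, $\|v_{R}\| = \|y - y^{\star}\|$, and the claimed bound follows.

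\textbf{Main obstacle.} The two spectral estimates are routine; the only real content — and the reason Assumption~\ref{as_L_xy} is phrased as a dichotomy — is the reduction $y - y^{\star}\in{\rm range}\,K$ in the degenerate case, which is precisely what lets one trade the generically-zero $\lambda_{\min}(KK^{\top})$ for the positive $\lambda^{+}_{\min}(KK^{\top})$, at the cost of confining the dual variables to ${\rm range}\,K$. I would therefore be careful to invoke the lemma in the later analysis only for the dual iterates (and for $y^{\star}$), for which this confinement has been established.
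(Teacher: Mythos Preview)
The paper does not provide its own proof of this lemma: it is stated with the attribution ``[See \cite{kovalev2021accelerated}]'' and explicitly described as ``a technical lemma borrowed from \citep{kovalev2021accelerated}''. So there is no in-paper argument to compare against.

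On the merits of your attempt: the non-degenerate case is correct and entirely standard. In the degenerate case you have put your finger on a real subtlety. As literally stated --- ``for all $y\in\R^{d_y}$'' --- the inequality cannot hold when $\ker K^{\top}\neq\{0\}$: taking $y=y^{\star}+w$ with $0\neq w\in\ker K^{\top}$ gives $K^{\top}(y-y^{\star})=0$ while $\|y-y^{\star}\|>0$. Your response --- prove the inequality only for $y-y^{\star}\in{\rm range}\,K$ and then argue that the iterates and a suitably chosen $y^{\star}$ live there --- is exactly the right way to salvage the statement for the analysis, and it matches how the lemma is actually invoked later (only with $y=y^{k}$).

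One small refinement: your induction on the $y$-update shows $y^{k+1}-y^{k}\in{\rm range}\,K$, hence $y^{k}-y^{0}\in{\rm range}\,K$; it does not by itself force $y^{k}\in{\rm range}\,K$ unless $y^{0}$ is initialized there, which Algorithm~\ref{alg:APDA} does not require. The cleanest fix is the one you hint at for $y^{\star}$ but should apply symmetrically: since $F^{\star}$ is constant along $\ker K^{\top}$, any $y^{\star}+w$ with $w\in\ker K^{\top}$ is also a dual solution, so one may choose the particular $y^{\star}$ whose $\ker K^{\top}$-component agrees with that of $y^{0}$. Then $y^{k}-y^{\star}\in{\rm range}\,K$ for all $k$, and your restricted inequality applies. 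With that adjustment your argument is complete for the purposes of the paper's analysis.
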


The third result offers a bound on the distance between the dual iterates and the dual optimal solution.	When analyzing the Proximal Point method and the Chambolle-Pock method, in this step  we relied on Lemma~\ref{descent_lemma_by_y_prox_point}, in the proof of which we required  $F^\star$ to be $\mu_x$-strongly convex. However, for APDA we explicitly wish to avoid using this assumption. As we shall see, in order to obtain linear convergence, it is enough will invoke Assumption \ref{as_L_xy}. The next lemma is an analogue of Lemma~\ref{descent_lemma_by_y_prox_point} without the need to assume $\mu_x$-strong-convexity of $F^\star$.
	
	\begin{lemma}
		\label{descent_lemma_by_y_alg_2}
		Let Assumptions \ref{as_smoothness}, \ref{as_convexity}, and \ref{as_L_xy} be satisfied, and choose
		\begin{equation}
			\label{beta_y_2}
			\beta_y \leq \frac{1}{2L^2_{xy}\eta_y}.
		\end{equation}
Then the iterates of APDA (Algorithm~\ref{alg:APDA}) for all $k\geq 0$ satisfy
		\begin{align*}
			\frac{1}{\eta_y}\|y^{k+1} - y^{\star}\|^2  
			&\leq (1-\mu^2_{xy}\beta_y\eta_y)\frac{1}{\eta_y}\|y^k-y^{\star}\|^2 - \frac{1}{2\eta_y}\|y^{k+1}-y^{k}\|^2 + 2\la K^{\top}y^{k+1} - K^{\top}y^{\star},x^{k+1}-x^{\star}\ra  \\
			&\quad + \beta_y\|\nabla G(x^{k+1}) - \nabla G(x^{\star})\|^2 .
		\end{align*}
	\end{lemma}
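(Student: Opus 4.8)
The plan is to follow the same template as the dual-descent estimates proved earlier for PPM and Chambolle--Pock (Lemmas~\ref{descent_lemma_by_y_prox_point} and~\ref{descent_lemma_by_x_Champolle_Pock_theta_0}), but with two modifications dictated by our setting: we must avoid invoking strong convexity of $F^{\star}$ (which we do not assume), and we must turn the new $\beta_y$-correction term of Algorithm~\ref{alg:APDA} into a genuine contraction by pairing it with Lemma~\ref{lem:convenience}. First I would expand, via the identity $\|u-w\|^2=\|v-w\|^2+2\la u-v,u-w\ra-\|u-v\|^2$ with $u=y^{k+1}$, $v=y^k$, $w=y^{\star}$,
\[
\tfrac{1}{\eta_y}\|y^{k+1}-y^{\star}\|^2=\tfrac{1}{\eta_y}\|y^k-y^{\star}\|^2-\tfrac{1}{\eta_y}\|y^{k+1}-y^k\|^2+\tfrac{2}{\eta_y}\la y^{k+1}-y^k,\,y^{k+1}-y^{\star}\ra ,
\]
and substitute line~4 of Algorithm~\ref{alg:APDA}, namely $\tfrac{1}{\eta_y}(y^{k+1}-y^k)=-(\partial F^{\star}(y^{k+1})-Kx^{k+1})-\beta_y K(K^{\top}y^k+\nabla G(x^{k+1}))$, into the inner product.

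Next I would dispose of the $F^{\star}$-part exactly as in the proof of Lemma~\ref{descent_lemma_by_y_prox_point}, except that plain monotonicity now replaces strong convexity. Using the optimality condition $0\in\partial F^{\star}(y^{\star})-Kx^{\star}$ from \eqref{opt_conditions} (so the chosen subgradient at $y^{\star}$ equals $Kx^{\star}$), split $-2\la\partial F^{\star}(y^{k+1})-Kx^{k+1},y^{k+1}-y^{\star}\ra=-2\la\partial F^{\star}(y^{k+1})-\partial F^{\star}(y^{\star}),y^{k+1}-y^{\star}\ra+2\la K^{\top}y^{k+1}-K^{\top}y^{\star},x^{k+1}-x^{\star}\ra$. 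The first inner product is nonnegative by monotonicity of the subdifferential of the convex function $F^{\star}$ (Assumption~\ref{as_convexity}), so the term $-2\la\cdots\ra$ can be bounded above by $0$; the second term is exactly the cross term appearing in the statement. Crucially, only convexity of $F^{\star}$ is needed here — this is the whole point of the $\beta_y$-modification.

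The heart of the argument is the $\beta_y$-term. Using the other optimality condition $\nabla G(x^{\star})+K^{\top}y^{\star}=0$ (here Assumption~\ref{as_smoothness} enters), write $K^{\top}y^k+\nabla G(x^{k+1})=a+b$ with $a\eqdef K^{\top}(y^k-y^{\star})$ and $b\eqdef\nabla G(x^{k+1})-\nabla G(x^{\star})$, and split $y^{k+1}-y^{\star}=(y^k-y^{\star})+(y^{k+1}-y^k)$ inside $-2\beta_y\la K(a+b),y^{k+1}-y^{\star}\ra$. For the $(y^k-y^{\star})$ part, since $K^{\top}(y^k-y^{\star})=a$, completing the square gives the exact identity $-2\beta_y\la a+b,a\ra=-\beta_y\|a\|^2-\beta_y\|a+b\|^2+\beta_y\|b\|^2$; for the $(y^{k+1}-y^k)$ part, Cauchy--Schwarz together with $\|K\|=L_{xy}$ (see \eqref{L_xy}) and Young's inequality — calibrated so that the cross term produces exactly $\beta_y\|a+b\|^2$ — give $-2\beta_y\la K(a+b),y^{k+1}-y^k\ra\le\beta_y\|a+b\|^2+\beta_y L_{xy}^2\|y^{k+1}-y^k\|^2$. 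Adding the two, the $\|a+b\|^2$ contributions cancel, leaving $-2\beta_y\la K(a+b),y^{k+1}-y^{\star}\ra\le-\beta_y\|a\|^2+\beta_y\|b\|^2+\beta_y L_{xy}^2\|y^{k+1}-y^k\|^2$. \emph{This cancellation is the one genuinely non-routine step}: the careless bound $\|a+b\|^2\le2\|a\|^2+2\|b\|^2$ would reintroduce a positive $\|a\|^2$ and destroy the contraction, so the weight in Young's inequality must be tuned precisely to match the $-\beta_y\|a+b\|^2$ coming from completing the square.

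It then remains to close the loop. Condition \eqref{beta_y_2} gives $\beta_y L_{xy}^2\le\tfrac{1}{2\eta_y}$, so the stray $\|y^{k+1}-y^k\|^2$ term consumes only half of the $-\tfrac{1}{\eta_y}\|y^{k+1}-y^k\|^2$ available from the expansion, leaving $-\tfrac{1}{2\eta_y}\|y^{k+1}-y^k\|^2$. Lemma~\ref{lem:convenience} (via Assumption~\ref{as_L_xy}) yields $\|a\|^2=\|K^{\top}y^k-K^{\top}y^{\star}\|^2\ge\mu_{xy}^2\|y^k-y^{\star}\|^2$, hence $-\beta_y\|a\|^2\le-\mu_{xy}^2\beta_y\|y^k-y^{\star}\|^2$, which combines with the $\tfrac{1}{\eta_y}\|y^k-y^{\star}\|^2$ term to produce the factor $\tfrac{1}{\eta_y}(1-\mu_{xy}^2\beta_y\eta_y)$; and $\beta_y\|b\|^2=\beta_y\|\nabla G(x^{k+1})-\nabla G(x^{\star})\|^2$ is exactly the residual gradient term in the statement (which is later annihilated against its counterpart in Lemma~\ref{descent_lemma_by_x_alg_2} when assembling the Lyapunov descent). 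Collecting all terms gives the claimed inequality; apart from the $\|a+b\|^2$ cancellation, everything is bookkeeping analogous to Lemma~\ref{descent_lemma_by_y_prox_point}.
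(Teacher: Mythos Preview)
Your proof is correct and matches the paper's argument step for step on the $F^{\star}$ part and the final collection of terms. The one place where you diverge is the handling of the $\beta_y$-term $B^k\eqdef-2\beta_y\la K^{\top}y^k+\nabla G(x^{k+1}),\,K^{\top}y^{k+1}-K^{\top}y^{\star}\ra$.

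The paper does not split $y^{k+1}-y^{\star}$; instead it applies a single polarization (``parallelogram'') identity $2\la a+b,c-d\ra=-\|b+d\|^2+\|a-d\|^2-\|a-c\|^2+\|b+c\|^2$ directly to $B^k$ with $a=K^{\top}y^k$, $b=\nabla G(x^{k+1})$, $c=K^{\top}y^{k+1}$, $d=K^{\top}y^{\star}$. After invoking $\nabla G(x^{\star})+K^{\top}y^{\star}=0$, this yields the \emph{exact} decomposition
\[
B^k=\beta_y\|\nabla G(x^{k+1})-\nabla G(x^{\star})\|^2-\beta_y\|K^{\top}y^k-K^{\top}y^{\star}\|^2+\beta_y\|K^{\top}(y^k-y^{k+1})\|^2-\beta_y\|\nabla G(x^{k+1})+K^{\top}y^{k+1}\|^2,
\]
and then simply drops the last (nonpositive) square and bounds $\|K^{\top}(y^k-y^{k+1})\|^2\le L_{xy}^2\|y^k-y^{k+1}\|^2$. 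This lands on exactly the same inequality $-\beta_y\|a\|^2+\beta_y\|b\|^2+\beta_y L_{xy}^2\|y^{k+1}-y^k\|^2$ that you obtain, but without the need to tune a Young constant or engineer the $\|a+b\|^2$ cancellation that you (correctly) flag as the delicate step. Your route via splitting $y^{k+1}-y^{\star}$, completing the square on the $(y^k-y^{\star})$ piece, and a calibrated Young inequality on the $(y^{k+1}-y^k)$ piece is perfectly valid and perhaps more transparent about where each term comes from; the paper's identity is a bit slicker and makes the cancellation automatic. Either way, the rest of the proof (Lemma~\ref{lem:convenience} for the contraction and \eqref{beta_y_2} to absorb the $\|y^{k+1}-y^k\|^2$ term) is identical.
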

	\begin{proof} By writing $y^{k+1}$ as $y^k + (y^{k+1}-y^k)$, and using line 4  from Algorithm~\ref{alg:APDA}, which reads $$y^{k+1} = y^k -\eta_y\left(\partial F^{\star}(y^{k+1}) - K x^{k+1} \right) - \eta_y\beta_y K\left(K^{\top}y^k+\nabla G(x^{k+1})\right)
,$$ we get
		\begin{eqnarray}
			\frac{1}{\eta_y}\|y^{k+1}-y^{\star}\|^2 &=& \frac{1}{\eta_y}\|y^{k}-y^{\star}\|^2 + \frac{2}{\eta_y}\la y^{k+1}-y^k,y^{k+1}-y^{\star}\ra - \frac{1}{\eta_y}\|y^{k+1}-y^{k}\|^2 \notag \\
			&=& \underbrace{\frac{1}{\eta_y}\|y^{k}-y^{\star}\|^2 - 2\la \partial F^{\star}(y^{k+1}) - Kx^{k+1},y^{k+1}-y^{\star}\ra - \frac{1}{\eta_y}\|y^{k+1}-y^{k}\|^2}_{A^k} \notag \\
			&&\qquad \underbrace{ - 2 \beta_y \la K^{\top}y^k+\nabla G(x^{k+1}),K^{\top}y^{k+1}-K^{\top}y^{\star}\ra }_{B^k}.\label{eq:A^k_and_B^k}
		\end{eqnarray}

We split the inner product appearing in $A^k$ into two parts by applying the optimality condition (see \eqref{opt_conditions})
\[{\color{red} 0 \in \partial F^\star(y^\star) - K x^\star},\]
obtaining
\begin{eqnarray}
A^k &=& \frac{1}{\eta_y}\|y^{k}-y^{\star}\|^2 - 2\la \partial F^{\star}(y^{k+1}) {\color{red}- \partial F^{\star}(y^{\star})},y^{k+1}-y^{\star}\ra- \frac{1}{\eta_y}\|y^{k+1}-y^{k}\|^2 \notag \\
			&& \quad + 2\la  Kx^{k+1} - {\color{red}K x^{\star}},y^{k+1}-y^{\star}\ra . \label{eq:A^k}
\end{eqnarray}			
Applying  the parallelogram identity\footnote{Here we refer to the identity $2\la a+b, c-d\ra = -\|b+d\|^2 + \|a-d\|^2 - \|a-c\|^2 + \|b+c\|^2$ which holds for all $a,b,c,d \in \R^{d_x}$.}
to $B^k$, and 
using the optimality condition (see \eqref{opt_conditions}),
\begin{equation}\label{eq:opt_g9798fd98fd}{\color{blue} \nabla G(x^\star) + K^\top y^\star = 0},\end{equation}
we can write
		\begin{eqnarray}
			B^k &=& \beta_y \|\nabla G(x^{k+1}) + {\color{blue}K^{\top}y^{\star}}\|^2  - \beta_y \|K^{\top}y^k - K^{\top}y^{\star}\|^2\notag \\ 
			&& \quad + \beta_y \|K^{\top}y^k -K^{\top}y^{k+1}\|^2 - \beta_y \|\nabla G(x^{k+1}) + K^{\top}y^{k+1}\|^2 \notag \\
&\overset{\eqref{eq:opt_g9798fd98fd}}{=}&	\beta_y \|\nabla G(x^{k+1})  {\color{blue}- \nabla G(x^\star)}\|^2  - \beta_y \|K^{\top}y^k - K^{\top}y^{\star}\|^2\notag \\ 
			&& \quad + \beta_y \|K^{\top}y^k -K^{\top}y^{k+1}\|^2 - \beta_y \|\nabla G(x^{k+1}) + K^{\top}y^{k+1}\|^2	.	
			\label{eq:B^k}
		\end{eqnarray}
Plugging \eqref{eq:A^k} and \eqref{eq:B^k} back into \eqref{eq:A^k_and_B^k}, and using convexity of $F^\star$, we get		
\begin{eqnarray}
			\frac{1}{\eta_y}\|y^{k+1}-y^{\star}\|^2 
			&\overset{\eqref{eq:A^k_and_B^k}}{=}& A^k + B^k \notag \\
&\overset{\eqref{eq:A^k}+\eqref{eq:B^k}}{=}&			\frac{1}{\eta_y}\|y^{k}-y^{\star}\|^2 \underbrace{- 2\la \partial F^{\star}(y^{k+1}) {\color{red}- \partial F^{\star}(y^{\star})},y^{k+1}-y^{\star}\ra}_{\leq 0}- \frac{1}{\eta_y}\|y^{k+1}-y^{k}\|^2 \notag \\
			&& + 2\la  Kx^{k+1} - {\color{red}K x^{\star}},y^{k+1}-y^{\star}\ra \notag \\
			&& + \beta_y \|\nabla G(x^{k+1}) {\color{blue}- \nabla G(x^{\star})}\|^2 - \beta_y \|K^{\top}y^k - K^{\top}y^{\star}\|^2 \notag \\
			&&  + \beta_y \|K^{\top}y^k -K^{\top}y^{k+1}\|^2 \underbrace{- \beta_y \|\nabla G(x^{k+1}) + K^{\top}y^{k+1}\|^2}_{\leq 0} \notag \\
			&\leq &
		\frac{1}{\eta_y}\|y^{k}-y^{\star}\|^2- \frac{1}{\eta_y}\|y^{k+1}-y^{k}\|^2\notag \\
			&& + 2\la  Kx^{k+1} - {\color{red}K x^{\star}},y^{k+1}-y^{\star}\ra \notag \\
			&& + \beta_y \|\nabla G(x^{k+1}) {\color{blue}- \nabla G(x^{\star})}\|^2 - \beta_y \|K^{\top}y^k - K^{\top}y^{\star}\|^2 \notag \\
			&&  + \beta_y \|K^{\top}y^k -K^{\top}y^{k+1}\|^2 .	\label{eq:hhHHggvtyd555d}
		\end{eqnarray}
		
It now only remains to plug  the bounds 
$\|K^{\top}y^k - K^{\top}y^{\star}\|^2 \geq \mu_{xy}^2\|y^k-y^{\star}\|^2$ (see Lemma~\ref{lem:convenience}) and $\|K^{\top}y^k -K^{\top}y^{k+1}\|^2 \leq L_{xy}^2 \|y^k-y^{k+1}\|^2$
into \eqref{eq:hhHHggvtyd555d}, and apply the restriction \eqref{beta_y_2} on $\beta_y$.


			\end{proof}
	
\subsection{Main result}	

We are now ready to state the formal version of Theorem~\ref{thm:APDA-informal}.
	
	\begin{theorem}[Covergence of APDA; formal]
		\label{thm:APDA}
		Let Assumptions \ref{as_strongly_convex}, \ref{as_smoothness}, \ref{as_convexity}, and \ref{as_L_xy}  hold and choose the various parameters of the method as follows:
		\begin{equation}
			\label{def_beta_y_2}
			\beta_y = \min\left\{\frac{1}{L_x}, \frac{1}{2L^2_{xy}\eta_y}\right\},
		\end{equation}
		\begin{equation}
			\label{eta_x_2}
			\eta_x = \frac{1}{2\sqrt{L_x\mu_x}}\frac{\mu_{xy}}{L_{xy}},
		\end{equation}
		\begin{equation}
			\label{eta_y_2}
			   \eta_y = \frac{\sqrt{L_x\mu_x}}{L_{xy}\mu_{xy}},
		\end{equation}
		\begin{equation}
			\label{theta_2}
			\theta = \max\left\{\frac{1}{1+ \mu_x\eta_x}, 1 - \mu^2_{xy}\beta_y\eta_y\right\}.
		\end{equation}
		Then for the Lyapunov function  defined for $k\geq 0$ via
		\begin{eqnarray}
			\label{Lyapunov_function_alg_2}
			\Delta^{k+1} &\eqdef & \left(1+\mu_x\eta_x\right)\frac{1}{\eta_x}\|x^{k+1}-x^{\star}\|^2+ \frac{1}{\eta_y}\|y^{k+1} - y^{\star}\|^2\notag\\
			&& +\frac{1}{2\eta_y}\|y^{k+1}-y^{k}\|^2 - 2\la K^{\top}y^{k+1} - K^{\top}y^{k},x^{k+1}-x^{\star}\ra
		\end{eqnarray}
		and for $k=0$ via
		$$\Delta^0 \eqdef \left(1+\mu_x\eta_x\right)\frac{1}{\eta_x}\|x^{0}-x^{\star}\|^2+ \frac{1}{\eta_y}\|y^{0} - y^{\star}\|^2, $$
we have
		\begin{equation}\label{eq:jdd8t*gduI*_T*}
			0\leq \mu_x\|x^{k+1}-x^{\star}\|^2+ \frac{1}{\eta_y}\|y^{k+1} - y^{\star}\|^2   \leq \Delta^{k}\leq \theta^k\Delta^0 \qquad \forall k\geq 0.
		\end{equation}
	\end{theorem}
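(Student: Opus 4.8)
The plan is to follow the same skeleton as the Chambolle--Pock analysis in the $\theta>0$ case (Theorem~\ref{thm:Chambolle_Pock}), but to replace the role played there by strong convexity of $F^\star$ (Lemma~\ref{descent_lemma_by_y_prox_point}) with the combination of Lemma~\ref{lem:convenience} and the modified dual step, which is exactly what Lemma~\ref{descent_lemma_by_y_alg_2} packages. Two features are new: the primal descent bound now carries an extra negative term $-\tfrac1{L_x}\|\nabla G(x^{k+1})-\nabla G(x^\star)\|^2$ (Lemma~\ref{descent_lemma_by_x_alg_2}, using $L_x$-smoothness), and the dual descent bound carries a matching positive term $+\beta_y\|\nabla G(x^{k+1})-\nabla G(x^\star)\|^2$; the choice \eqref{def_beta_y_2} caps $\beta_y\le \tfrac1{L_x}$ precisely so that these cancel. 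Throughout I will use the stepsize identity $L_{xy}^2\eta_x\eta_y=\tfrac12$ (immediate from \eqref{eta_x_2}--\eqref{eta_y_2}), equivalently $2\eta_yL_{xy}^2=\tfrac1{\eta_x}$.

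\textbf{Step 1: combine the primal and dual lemmas.} Add the inequality of Lemma~\ref{descent_lemma_by_x_alg_2} to that of Lemma~\ref{descent_lemma_by_y_alg_2}. The two inner products involving $K^\top y^\star$ merge into $2\la K^\top y^{k+1}-K^\top\Bar y^k,\,x^{k+1}-x^\star\ra$, and the gradient-norm terms combine into $(\beta_y-\tfrac1{L_x})\|\nabla G(x^{k+1})-\nabla G(x^\star)\|^2\le 0$ since $\beta_y\le\tfrac1{L_x}$. This yields, for $k\ge 0$,
\[
(1+\mu_x\eta_x)\tfrac1{\eta_x}\|x^{k+1}-x^\star\|^2+\tfrac1{\eta_y}\|y^{k+1}-y^\star\|^2\le\tfrac1{\eta_x}\|x^k-x^\star\|^2+(1-\mu_{xy}^2\beta_y\eta_y)\tfrac1{\eta_y}\|y^k-y^\star\|^2-\tfrac1{\eta_x}\|x^{k+1}-x^k\|^2-\tfrac1{2\eta_y}\|y^{k+1}-y^k\|^2+2\la K^\top y^{k+1}-K^\top\Bar y^k,\,x^{k+1}-x^\star\ra.
\]

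\textbf{Step 2: split the cross term via extrapolation, and contract.} For $k\ge1$ apply Lemma~\ref{inner_product_Chambolle_Pock} (whose proof uses only the extrapolation line $\Bar y^k=y^k+\theta(y^k-y^{k-1})$, which is identical in APDA, plus Cauchy--Schwarz and Young, so it applies verbatim) with $C\eqdef(\theta\eta_xL_{xy})^{-1}$. Then $\theta L_{xy}C\|x^{k+1}-x^k\|^2=\tfrac1{\eta_x}\|x^{k+1}-x^k\|^2$ cancels the negative $x$-term above, and $\tfrac{\theta L_{xy}}{C}\|y^k-y^{k-1}\|^2=\theta^2L_{xy}^2\eta_x\eta_y\cdot\tfrac1{\eta_y}\|y^k-y^{k-1}\|^2=\tfrac{\theta^2}{2}\cdot\tfrac1{\eta_y}\|y^k-y^{k-1}\|^2$. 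Adding $\tfrac1{2\eta_y}\|y^{k+1}-y^k\|^2-2\la K^\top y^{k+1}-K^\top y^k,\,x^{k+1}-x^\star\ra$ to both sides makes the left-hand side exactly $\Delta^{k+1}$ (def. \eqref{Lyapunov_function_alg_2}), and bounds it by
\[
\Delta^{k+1}\le\tfrac1{\eta_x}\|x^k-x^\star\|^2+(1-\mu_{xy}^2\beta_y\eta_y)\tfrac1{\eta_y}\|y^k-y^\star\|^2-2\theta\la K^\top y^k-K^\top y^{k-1},\,x^k-x^\star\ra+\tfrac{\theta^2}{2\eta_y}\|y^k-y^{k-1}\|^2.
\]
Term by term this is $\le\theta\Delta^k$: use $1\le\theta(1+\mu_x\eta_x)$ and $1-\mu_{xy}^2\beta_y\eta_y\le\theta$ (both from \eqref{theta_2}), $\theta^2\le\theta$ (i.e.\ $\theta\le1$, which also needs $\mu_{xy}^2\beta_y\eta_y\le\tfrac12<1$, valid since $\beta_y\le\tfrac1{2L_{xy}^2\eta_y}$ and $\mu_{xy}\le L_{xy}$), and equality on the last inner product. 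Hence $\Delta^{k+1}\le\theta\Delta^k$ for $k\ge1$. For $k=0$ there is no extrapolation ($\Bar y^0=y^0$), so Step~1 already gives, after adding the same quantity and discarding $-\tfrac1{\eta_x}\|x^1-x^0\|^2\le0$, that $\Delta^1\le\tfrac1{\eta_x}\|x^0-x^\star\|^2+(1-\mu_{xy}^2\beta_y\eta_y)\tfrac1{\eta_y}\|y^0-y^\star\|^2\le\theta\Delta^0$.

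\textbf{Step 3: nonnegativity and conclusion.} To lower-bound $\Delta^{k+1}$, bound the cross term in \eqref{Lyapunov_function_alg_2} by Cauchy--Schwarz, $\|K\|\le L_{xy}$ and Young: $-2\la K^\top(y^{k+1}-y^k),x^{k+1}-x^\star\ra\ge-\tfrac1{2\eta_y}\|y^{k+1}-y^k\|^2-2\eta_yL_{xy}^2\|x^{k+1}-x^\star\|^2$, which absorbs the $\tfrac1{2\eta_y}\|y^{k+1}-y^k\|^2$ term and leaves $\Delta^{k+1}\ge\big((1+\mu_x\eta_x)\tfrac1{\eta_x}-2\eta_yL_{xy}^2\big)\|x^{k+1}-x^\star\|^2+\tfrac1{\eta_y}\|y^{k+1}-y^\star\|^2=\mu_x\|x^{k+1}-x^\star\|^2+\tfrac1{\eta_y}\|y^{k+1}-y^\star\|^2\ge0$, using $2\eta_yL_{xy}^2=\tfrac1{\eta_x}$; and $\Delta^0\ge0$ trivially. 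Iterating $\Delta^{k+1}\le\theta\Delta^k$ gives $\Delta^k\le\theta^k\Delta^0$, and since $\theta\le1$ we get $\mu_x\|x^{k+1}-x^\star\|^2+\tfrac1{\eta_y}\|y^{k+1}-y^\star\|^2\le\Delta^{k+1}\le\theta\Delta^k\le\Delta^k\le\theta^k\Delta^0$, which is \eqref{eq:jdd8t*gduI*_T*}.

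\textbf{Main obstacle.} The delicate points are (i) the cancellation of the gradient-norm terms, which is the sole reason $\beta_y$ is capped at $1/L_x$ and explains why $L_x$-smoothness of $G$ is now needed (Chambolle--Pock did not need it), and (ii) the bookkeeping that turns the raw estimate of Step~1 into the self-referential recursion $\Delta^{k+1}\le\theta\Delta^k$: one must choose the Young constant $C=(\theta\eta_xL_{xy})^{-1}$ so the $\|x^{k+1}-x^k\|^2$ terms cancel exactly and the residual $\|y^k-y^{k-1}\|^2$ term fits under the $\tfrac1{2\eta_y}$ coefficient in $\Delta^k$ — both hinge on the identity $L_{xy}^2\eta_x\eta_y=\tfrac12$ — and likewise the nonnegativity of $\Delta^k$ in Step~3 relies on the same identity.
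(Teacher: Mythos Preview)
Your proof is correct and follows essentially the same route as the paper's own argument: combine Lemmas~\ref{descent_lemma_by_x_alg_2} and~\ref{descent_lemma_by_y_alg_2}, use $\beta_y\le 1/L_x$ to cancel the gradient-norm terms, handle the extrapolated cross term via the Young-inequality splitting of Lemma~\ref{inner_product_Chambolle_Pock} with the constant chosen so that the $\|x^{k+1}-x^k\|^2$ terms cancel and the residual $\|y^k-y^{k-1}\|^2$ term fits under the $\tfrac{1}{2\eta_y}$ coefficient (both relying on $2L_{xy}^2\eta_x\eta_y=1$), treat $k=0$ separately using $\bar y^0=y^0$, and establish nonnegativity of $\Delta^{k+1}$ by the same Young bound on the cross term. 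The only cosmetic differences are the order of the steps (the paper proves nonnegativity first) and that the paper re-derives the cross-term splitting inline rather than invoking Lemma~\ref{inner_product_Chambolle_Pock}.
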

	
	\begin{proof}
We will proceed in several steps.
\paragraph{Showing that $\Delta^k\geq 0$ for all $k\geq 0$.}		
		First, we will show that $\Delta^{k}\geq 0$ for every $k$. This is clear for $k=0$. Let us show that $\Delta^{k+1}\geq 0$ for every $k\geq 0$. Using Young's inequality with any parameter $B>0$, and the definition of $L_{xy}$ from \eqref{L_xy},  we obtain	
			\begin{eqnarray}
			\Delta^{k+1} 
			&\overset{\eqref{Lyapunov_function_alg_2}}{=}& \left(1+\mu_x\eta_x\right)\frac{1}{\eta_x}\|x^{k+1}-x^{\star}\|^2+ \frac{1}{\eta_y}\|y^{k+1} - y^{\star}\|^2 +\frac{1}{2\eta_y}\|y^{k+1}-y^{k}\|^2\notag\\
			&&- 2\la K^{\top}y^{k+1} - K^{\top}y^{k},x^{k+1}-x^{\star}\ra\notag\\
&\geq &  \left(1+\mu_x\eta_x\right)\frac{1}{\eta_x}\|x^{k+1}-x^{\star}\|^2+ \frac{1}{\eta_y}\|y^{k+1} - y^{\star}\|^2 +\frac{1}{2\eta_y}\|y^{k+1}-y^{k}\|^2 \notag\\
			&& - L_{xy}B\|y^{k+1}-y^{k}\|^2  -\frac{L_{xy}}{B}\|x^{k+1}-x^{\star}\|^2 \notag\\		
			& =& \left(1+\mu_x\eta_x\right)\frac{1}{\eta_x}\|x^{k+1}-x^{\star}\|^2+ \frac{1}{\eta_y}\|y^{k+1} - y^{\star}\|^2 + \left(\frac{1}{2} - L_{xy}B\eta_y\right)\frac{1}{\eta_y}\|y^{k+1}-y^{k}\|^2 \notag\\
			&& -\frac{L_{xy}}{B}\|x^{k+1}-x^{\star}\|^2.\label{eq:h0d9h0fd-09y886FF}
		\end{eqnarray}
Using \eqref{eq:h0d9h0fd-09y886FF} with 	 $B = \frac{1}{2L_{xy}\eta_y}$, and noticing that  $2L_{xy}^2\eta_x\eta_y =1$ (this follows from \eqref{eta_x_2} and \eqref{eta_y_2}), we get		\begin{eqnarray*}
			\Delta^{k+1} 
			&\geq& \left(1+\mu_x\eta_x\right)\frac{1}{\eta_x}\|x^{k+1}-x^{\star}\|^2+ \frac{1}{\eta_y}\|y^{k+1} - y^{\star}\|^2  -\frac{2L_{xy}^2\eta_x\eta_y}{\eta_x}\|x^{k+1}-x^{\star}\|^2\\
			&=& \mu_x\|x^{k+1}-x^{\star}\|^2+ \frac{1}{\eta_y}\|y^{k+1} - y^{\star}\|^2  
			\\&\geq& 0.
		\end{eqnarray*}

\paragraph{Establishing technical bounds.}	
Denote 		\begin{equation}
			\mathcal{W}^{k+1} \eqdef \left(1+\mu_x\eta_x\right)\frac{1}{\eta_x}\|x^{k+1}-x^{\star}\|^2+ \frac{1}{\eta_y}\|y^{k+1} - y^{\star}\|^2 .\label{eq:W^k-APDA-999}
		\end{equation}
By adding the inequalities from Lemmas \ref{descent_lemma_by_x_alg_2} and \ref{descent_lemma_by_y_alg_2}, we obtain 
		\begin{eqnarray}
			\mathcal{W}^{k+1}
			&\leq& \frac{1}{\eta_x}\|x^k-x^{\star}\|^2 - \frac{1}{\eta_x}\|x^{k+1}-x^{k}\|^2 - \frac{1}{L_x}\|\nabla G(x^{k+1}) - \nabla G(x^{\star})\|^2 \notag \\
			&&+ (1-\mu^2_{xy}\beta_y\eta_y)\frac{1}{\eta_y}\|y^k-y^{\star}\|^2 - \frac{1}{2\eta_y}\|y^{k+1}-y^{k}\|^2  + \beta_y\|\nabla G(x^{k+1}) - \nabla G(x^{\star})\|^2\notag \\
			&&+ 2\la K^{\top}y^{k+1} - K^{\top}y^{\star},x^{k+1}-x^{\star}\ra -2\la K^{\top}\Bar{y}^k - K^{\top}y^{\star},x^{k+1}-x^{\star}\ra\notag \\
			&\leq& \frac{1}{\eta_x}\|x^k-x^{\star}\|^2 - \frac{1}{\eta_x}\|x^{k+1}-x^{k}\|^2 + (1-\mu^2_{xy}\beta_y\eta_y)\frac{1}{\eta_y}\|y^k-y^{\star}\|^2 - \frac{1}{2\eta_y}\|y^{k+1}-y^{k}\|^2 \notag \\
			&&+ 2\la K^{\top}y^{k+1} - K^{\top}\Bar{y}^k,x^{k+1}-x^{\star}\ra , \label{eq:pp-00o-88d}
		\end{eqnarray}
where in the last step we used the bound 	$$\beta_y\|\nabla G(x^{k+1}) - \nabla G(x^{\star})\|^2- \frac{1}{L_x}\|\nabla G(x^{k+1}) - \nabla G(x^{\star})\|^2\leq 0,$$ which follows from the restriction $\beta_y \leq \frac{1}{L_x}$; see \eqref{def_beta_y_2}.	Using line 5 from Algorithm \ref{alg:APDA}, which says
\begin{equation}\label{eq:biug9f7dgidf-989fd}\bar{y}^{k+1} = y^{k+1} + \theta (y^{k+1}-y^k),\end{equation}
applying Cauchy-Schwarz inequality, and subsequently using Young's inequality with constant $C>0$, the inner product from \eqref{eq:pp-00o-88d} can for $k\geq 1$ be further bounded as follows		
		\begin{eqnarray}
		2\la K^{\top}y^{k+1} - K^{\top}\Bar{y}^k,x^{k+1}-x^{\star}\ra
			&\overset{\eqref{eq:biug9f7dgidf-989fd}}{=}& 2\la K^{\top}y^{k+1} - K^{\top}y^k,x^{k+1}-x^{\star}\ra \notag \\ 
			&& - 2\theta\la K^{\top}y^{k} - K^{\top}y^{k-1},x^{k}-x^{\star}\ra \notag \\
			&&+  2\theta\la K^{\top}y^{k} - K^{\top}y^{k-1},x^{k+1}-x^{k}\ra\notag \\
			&\leq& 		 2\la K^{\top}y^{k+1} - K^{\top}y^k,x^{k+1}-x^{\star}\ra \notag \\
			&& - 2\theta\la K^{\top}y^{k} - K^{\top}y^{k-1},x^{k}-x^{\star}\ra \notag \\
			&&+  2\theta L_{xy}\|y^{k} - y^{k-1}\|\|x^{k+1}-x^{k}\|\notag \\
			&\leq& + 2\la K^{\top}y^{k+1} - K^{\top}y^k,x^{k+1}-x^{\star}\ra \notag \\
			&&- 2\theta\la K^{\top}y^{k} - K^{\top}y^{k-1},x^{k}-x^{\star}\ra \notag \\
			&&+  \theta L_{xy}C\|y^{k} - y^{k-1}\|^2+\frac{\theta L_{xy}}{C}\|x^{k+1}-x^{k}\|^2.\label{eq:kjijhids-98ygfduf-dd}
		\end{eqnarray}

\paragraph{Showing that $\Delta^{k+1} \leq \theta \Delta^k$ for all $k\geq 1$.}	
Plugging the bounds  \eqref{eq:W^k-APDA-999} and \eqref{eq:kjijhids-98ygfduf-dd} into		
the Lyapunov function \eqref{Lyapunov_function_alg_2}, for any $k\geq 1$  we get 
		\begin{eqnarray*}
			\Delta^{k+1}
			&\overset{\eqref{Lyapunov_function_alg_2}+\eqref{eq:W^k-APDA-999}+\eqref{eq:kjijhids-98ygfduf-dd}}{\leq}& \frac{1}{\eta_x}\|x^k-x^{\star}\|^2 - \left(\frac{1}{\eta_x}-\frac{\theta L_{xy}}{C}\right)\|x^{k+1}-x^{k}\|^2 + (1-\mu^2_{xy}\beta_y\eta_y)\frac{1}{\eta_y}\|y^k-y^{\star}\|^2\\
			&& - 2\theta\la K^{\top}y^{k} - K^{\top}y^{k-1},x^{k}-x^{\star}\ra +  \theta L_{xy}C\|y^{k} - y^{k-1}\|^2\\
			&\overset{C = \theta\eta_xL_{xy}}{\leq}& \frac{1}{\eta_x}\|x^k-x^{\star}\|^2 + (1-\mu^2_{xy}\beta_y\eta_y)\frac{1}{\eta_y}\|y^k-y^{\star}\|^2\\
			&&- 2\theta\la K^{\top}y^{k} - K^{\top}y^{k-1},x^{k}-x^{\star}\ra +  \theta^2 L^2_{xy}\eta_x\eta_y\frac{1}{\eta_y}\|y^{k} - y^{k-1}\|^2\\
			&\overset{\eqref{eta_x_2}+\eqref{eta_y_2} \; \& \; \theta \leq 1}{\leq}& \frac{1}{\eta_x}\|x^k-x^{\star}\|^2 + (1-\mu^2_{xy}\beta_y\eta_y)\frac{1}{\eta_y}\|y^k-y^{\star}\|^2\\
			&&- 2\theta\la K^{\top}y^{k} - K^{\top}y^{k-1},x^{k}-x^{\star}\ra +  \theta\frac{1}{2\eta_y}\|y^{k} - y^{k-1}\|^2\\
			&\leq& \max\left\{\frac{1}{1+\mu_x\eta_x}, 1-\mu^2_{xy}\beta_y\eta_y\right\}\left((1+\mu_x\eta_x)\frac{1}{\eta_x}\|x^k-x^{\star}\|^2 + \frac{1}{\eta_y}\|y^k-y^{\star}\|^2\right)\\
			&&+ \theta\left(\frac{1}{2\eta_y}\|y^{k} - y^{k-1}\|^2- 2\la K^{\top}y^{k} - K^{\top}y^{k-1},x^{k}-x^{\star}\ra\right) \\
			&\overset{\eqref{theta_2}+\eqref{Lyapunov_function_alg_2}}{=}& \theta \Delta^k.
		\end{eqnarray*}
		
\paragraph{Showing that $\Delta^{k+1}\leq \theta \Delta^k$ for $k=0$.}	
This can be done using similar arguments as those used in the proof of Theorem~\ref{thm:Chambolle_Pock}.

	\end{proof}
	
\subsection{Proof of Theorem~\ref{thm:APDA-informal} (informal)}

We now provide the iteration  complexity of Algorithm \ref{alg:APDA} as a corollary of Theorem~\ref{thm:APDA}. 
Note that in view of \eqref{eq:jdd8t*gduI*_T*}, we get
\begin{eqnarray*}
		k
		\geq  \mathcal{O}\left(\frac{1}{1-\theta}\log\frac{1}{\varepsilon}\right) \qquad \Rightarrow \qquad \Delta^{k} \leq \varepsilon \Delta^{0}.	\end{eqnarray*}	

Using the definition of $\theta$ given in \eqref{theta_2}, we  have
	\begin{eqnarray*}
		\mathcal{O}\left(\frac{1}{1-\theta}\log\frac{1}{\varepsilon}\right)
		&=&  \mathcal{O}\left(\max\left\{1+\frac{1}{\mu_x\eta_x},\frac{1}{\mu^2_{xy}\beta_y\eta_y}\right\}\log\frac{1}{\varepsilon}\right)\\
		&=& \mathcal{O}\left(\max\left\{1+\frac{1}{\mu_x\eta_x},\frac{L_{x}}{\mu^2_{xy}\eta_y},\frac{L^2_{xy}}{\mu^2_{xy}}\right\}\log\frac{1}{\varepsilon}\right) \\
		&=& \mathcal{O}\left(\max\left\{1+\sqrt{\frac{L_x}{\mu_x}}\frac{L_{xy}}{\mu_{xy}},\frac{L^2_{xy}}{\mu^2_{xy}}\right\}\log\frac{1}{\varepsilon}\right).
	\end{eqnarray*}	
This proves the statement of Theorem~\ref{thm:APDA-informal} mentioned in Section~\ref{sec:APDA} in the main body of the paper.

\clearpage		
\section{Analysis of the Accelerated Primal-Dual Algorithm with Inexact Prox (Algorithm~\ref{alg:APDA-Inex})}
	
	In this section we provide convergence analysis for our second new method, APDA with Inexact Prox (Algorithm~\ref{alg:APDA-Inex}). We start with statements of three lemmas, followed by the proof of the main theorem.

\subsection{Three Lemmas}	
	The first result is essentially a modification of Lemma \ref{descent_lemma_by_x_alg_2}, providing a bound on the distance of the primal iterates from the primal optimal solution. Compare to Lemma  \ref{descent_lemma_by_x_alg_2}, these changes consist in using the definition of function $\Psi^k$ (see the problem \eqref{auxiliary_problem}) to prove this key fact.
	
	\begin{lemma}
		\label{descent_lemma_by_x}
		Let $w^{\star k}$ be an exact solution to the problem \eqref{auxiliary_problem}. Then under Assumptions \ref{as_strongly_convex}, \ref{as_smoothness}, we have		\begin{eqnarray*}
			\left(1+\frac{\mu_x\eta_x}{2}\right)\frac{1}{\eta_x}\|x^{k+1}-x^{\star}\|^2&\leq&
			\frac{1}{\eta_x}\|x^k-x^{\star}\|^2 + (2\eta_x+\mu_x\eta^2_x)\|\nabla\Psi^k(\hat{x}^k)\|^2 - \frac{1}{2\eta_x}\|x^k-w^{\star k}\|^2\\
			&& -2\la K^{\top}\Bar{y}^k - K^{\top}y^{\star},\hat{x}^k-x^{\star}\ra - \frac{1}{L_x}\|\nabla G(\hat{x}^k) - \nabla G(x^{\star})\|^2.
		\end{eqnarray*}
	\end{lemma}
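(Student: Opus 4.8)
The plan is to follow the structure of the proof of Lemma~\ref{descent_lemma_by_x_alg_2}, but carefully track the discrepancy between the inexact prox iterate $\hat{x}^k$ and the exact minimizer $w^{\star k}$ of $\Psi^k$. The key preliminary observation is that
$$\nabla\Psi^k(x) = \nabla G(x) + K^{\top}\Bar{y}^k + \tfrac{1}{\eta_x}(x-x^k),$$
so line~4 of Algorithm~\ref{alg:APDA-Inex} can be rewritten as $x^{k+1} = \hat{x}^k - \eta_x\nabla\Psi^k(\hat{x}^k)$ (and, consistently, the \emph{exact} prox step with $\nabla\Psi^k(w^{\star k})=0$ would produce $x^{k+1}=w^{\star k}$). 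Moreover, $\Psi^k$ is $(\mu_x+\eta_x^{-1})$-strongly convex, so by strong monotonicity of $\nabla\Psi^k$, the identity $\nabla\Psi^k(w^{\star k})=0$, and Cauchy--Schwarz we get the ``inexactness'' bound $\|\hat{x}^k-w^{\star k}\| \le \frac{\eta_x}{1+\mu_x\eta_x}\|\nabla\Psi^k(\hat{x}^k)\| \le \eta_x\|\nabla\Psi^k(\hat{x}^k)\|$.

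First I would expand $\|x^{k+1}-x^{\star}\|^2 = \|\hat{x}^k - \eta_x\nabla\Psi^k(\hat{x}^k) - x^{\star}\|^2$ exactly, substitute the formula for $\nabla\Psi^k(\hat{x}^k)$, and apply the three-point identity $2\langle \hat{x}^k-x^k,\hat{x}^k-x^{\star}\rangle = \|\hat{x}^k-x^k\|^2 + \|\hat{x}^k-x^{\star}\|^2 - \|x^k-x^{\star}\|^2$. After dividing by $\eta_x$ this yields the key identity
$$\tfrac{1}{\eta_x}\|x^{k+1}-x^{\star}\|^2 = \tfrac{1}{\eta_x}\|x^k-x^{\star}\|^2 - 2\langle \nabla G(\hat{x}^k)+K^{\top}\Bar{y}^k,\,\hat{x}^k-x^{\star}\rangle - \tfrac{1}{\eta_x}\|\hat{x}^k-x^k\|^2 + \eta_x\|\nabla\Psi^k(\hat{x}^k)\|^2.$$
Exactly as in Lemma~\ref{descent_lemma_by_x_alg_2}, I would then split the inner product via the optimality condition $\nabla G(x^{\star})+K^{\top}y^{\star}=0$ from \eqref{opt_conditions}, and lower-bound $2\langle \nabla G(\hat{x}^k)-\nabla G(x^{\star}),\hat{x}^k-x^{\star}\rangle \ge \mu_x\|\hat{x}^k-x^{\star}\|^2 + \tfrac{1}{L_x}\|\nabla G(\hat{x}^k)-\nabla G(x^{\star})\|^2$ using $\mu_x$-strong convexity together with $L_x$-smoothness of $G$ (Assumptions~\ref{as_strongly_convex} and~\ref{as_smoothness}).

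Next I would convert the two ``wrong'' quantities on the right-hand side into the desired form. Since $\|x^{k+1}-\hat{x}^k\| = \eta_x\|\nabla\Psi^k(\hat{x}^k)\|$, Young's inequality gives $\|\hat{x}^k-x^{\star}\|^2 \ge \tfrac12\|x^{k+1}-x^{\star}\|^2 - \eta_x^2\|\nabla\Psi^k(\hat{x}^k)\|^2$, hence $-\mu_x\|\hat{x}^k-x^{\star}\|^2 \le -\tfrac{\mu_x}{2}\|x^{k+1}-x^{\star}\|^2 + \mu_x\eta_x^2\|\nabla\Psi^k(\hat{x}^k)\|^2$; this is precisely what degrades the curvature factor from $1+\mu_x\eta_x$ to $1+\tfrac{\mu_x\eta_x}{2}$. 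For the second term, $\|x^k-w^{\star k}\|^2 \le 2\|x^k-\hat{x}^k\|^2 + 2\|\hat{x}^k-w^{\star k}\|^2$ gives $-\tfrac{1}{\eta_x}\|\hat{x}^k-x^k\|^2 \le -\tfrac{1}{2\eta_x}\|x^k-w^{\star k}\|^2 + \tfrac{1}{\eta_x}\|\hat{x}^k-w^{\star k}\|^2$, and the inexactness bound above yields $\tfrac{1}{\eta_x}\|\hat{x}^k-w^{\star k}\|^2 \le \eta_x\|\nabla\Psi^k(\hat{x}^k)\|^2$. Collecting the three contributions to the gradient-norm coefficient ($\eta_x$ from the key identity, $\mu_x\eta_x^2$ from the $\|\hat{x}^k-x^{\star}\|^2$ step, and $\eta_x$ from the $\|\hat{x}^k-w^{\star k}\|^2$ step) gives exactly $2\eta_x+\mu_x\eta_x^2$; moving $\tfrac{\mu_x}{2}\|x^{k+1}-x^{\star}\|^2$ to the left and using $\tfrac{1}{\eta_x}+\tfrac{\mu_x}{2} = \tfrac{1}{\eta_x}(1+\tfrac{\mu_x\eta_x}{2})$ produces the claimed inequality.

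The routine manipulations are all elementary; the only points requiring care are (i) making the gradient-norm bookkeeping land on exactly $2\eta_x+\mu_x\eta_x^2$ rather than a slightly larger constant, which dictates the particular splittings chosen above, and (ii) remembering that $\|\hat{x}^k-w^{\star k}\|$ must be controlled through strong convexity of $\Psi^k$ (with modulus $\mu_x+\eta_x^{-1}$), not of $G$ alone — which is exactly why the $\eta_x^{-1}$ in the proximal quadratic term of \eqref{auxiliary_problem} is what makes the argument work.
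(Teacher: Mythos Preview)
Your proposal is correct and follows essentially the same route as the paper's proof: the key identity you derive coincides with the paper's equation obtained after their polarization step (since $\|x^{k+1}-\hat{x}^k\|^2=\eta_x^2\|\nabla\Psi^k(\hat{x}^k)\|^2$), and the subsequent uses of the optimality condition, the $\mu_x$--$L_x$ inequality, the two Young-type splittings, and $\eta_x^{-1}$-strong convexity of $\Psi^k$ are identical. The only cosmetic difference is that you reach the key identity more directly by first rewriting line~4 as $x^{k+1}=\hat{x}^k-\eta_x\nabla\Psi^k(\hat{x}^k)$, whereas the paper expands around $x^k$ and then applies $-2\la a,b\ra=\|a\|^2+\|b\|^2-\|a+b\|^2$; both computations are equivalent.
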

	\begin{proof} In view of line 4 of  Algorithm~\ref{alg:APDA-Inex}, which reads \begin{equation}\label{eq:line4-98y98yfd}\color{blue}x^{k+1} = x^k -\eta_x\left(\nabla G(\hat{x}^k) + K^{\top}\Bar{y}^k\right),\end{equation} and writing $x^{k+1}$ as $x^{k}+ (x^{k+1}-x^k)$,  we get
		\begin{eqnarray*}
			\frac{1}{\eta_x}\|x^{k+1}-x^{\star}\|^2 &=& \frac{1}{\eta_x}\|x^{k}-x^{\star}\|^2 + \frac{2}{\eta_x}\la {\color{blue}x^{k+1}-x^{k}}, x^{k+1}-x^{\star}\ra -  \frac{1}{\eta_x}\|{\color{blue}x^{k+1}-x^{k}}\|^2\\
			&\overset{\eqref{eq:line4-98y98yfd}}{=}& \frac{1}{\eta_x}\|x^{k}-x^{\star}\|^2 -\frac{2}{\eta_x}\la {\color{blue}\eta_x\left(\nabla G(\hat{x}^k)+K^{\top}\Bar{y}^k\right)}, x^{k+1}-x^{\star}\ra -  \frac{1}{\eta_x}\left\|{\color{blue}\eta_x\left(\nabla G(\hat{x}^k) + K^{\top}\Bar{y}^k\right)} \right\|^2\\
			&=&  \frac{1}{\eta_x}\|x^{k}-x^{\star}\|^2 -2\la \nabla G(\hat{x}^k)+K^{\top}\Bar{y}^k, \hat{x}^{k}-x^{\star}\ra -  \eta_x \left\| \nabla G(\hat{x}^k)+K^{\top}\Bar{y}^k \right\|^2\\
			&& - \frac{2}{\eta_x} \left\la \eta_x\left(\nabla G(\hat{x}^k)+K^{\top}\Bar{y}^k\right), x^{k+1}-\hat{x}^{k} \right\ra.
		\end{eqnarray*}
		
		Using the identity $-2\la a,b\ra = \|a\|^2 +\|b\|^2 - \|a+b\|^2$ to rewrite the second inner product from the previous equation, we get
		\begin{eqnarray}
			\label{eq:12345}
			\frac{1}{\eta_x}\|x^{k+1}-x^{\star}\|^2 &=&
			\frac{1}{\eta_x}\|x^{k}-x^{\star}\|^2 -2\la \nabla G(\hat{x}^k)+K^{\top}\Bar{y}^k, \hat{x}^{k}-x^{\star}\ra -  \eta_x\| \nabla G(\hat{x}^k)+K^{\top}\Bar{y}^k\|^2\notag\\
			&& +\frac{1}{\eta_x}\left( \left\|\eta_x\left(\nabla G(\hat{x}^k)+K^{\top}\Bar{y}^k\right)\right\|^2 + \|x^{k+1}-\hat{x}^{k}\|^2\right)\notag\\
			&& -\frac{1}{\eta_x} \left\| \eta_x\left(\nabla G(\hat{x}^k)+K^{\top}\Bar{y}^k\right)+ x^{k+1}-\hat{x}^{k} \right\|^2\notag\\
			&\overset{\eqref{eq:line4-98y98yfd}}{=}& \frac{1}{\eta_x}\|x^{k}-x^{\star}\|^2 -2\la \nabla G(\hat{x}^k) - {\color{red} \nabla G(x^{\star})}, \hat{x}^{k}-x^{\star}\ra \notag\\
			&&-2\la K^{\top}\Bar{y}^k - {\color{red}K^{\top}y^{\star}}, \hat{x}^{k}-x^{\star}\ra   +\frac{1}{\eta_x}\|x^{k+1}-\hat{x}^{k}\|^2 -\frac{1}{\eta_x}\| x^k-\hat{x}^{k}\|^2,
		\end{eqnarray}
	where in the last equation we applied the optimality condition (see  \eqref{opt_conditions})
	$${\color{red} \nabla G(x^{\star})+ K^{\top}y^{\star} = 0}.$$

		Due to the fact that $G$ is  $\mu_x$-strongly-convex and $L_x$-smooth, we can estimate the inner product $2\la \nabla G(\hat{x}^k)- \nabla G(x^{\star}), \hat{x}^{k}-x^{\star}\ra $ from below similarly  as in the proof of Lemma \ref{descent_lemma_by_x_alg_2}:
		\begin{equation}
			\label{eq:uation1}
			2\la \nabla G(\hat{x}^{k}) - \nabla G(x^{\star}) , \hat{x}^{k}-x^{\star}\ra \geq \mu_x \| \hat{x}^{k}-x^{\star} \|^2 + \frac{1}{L_x} \|\nabla G(\hat{x}^{k}) - \nabla G(x^{\star}) \|^2.
		\end{equation}
	
		Now, by plugging \eqref{eq:uation1} into \eqref{eq:12345}, we obtain 
		\begin{eqnarray*}
			\frac{1}{\eta_x}\|x^{k+1}-x^{\star}\|^2 
			&\leq& \frac{1}{\eta_x}\|x^{k}-x^{\star}\|^2  - \mu_x\|\hat{x}^{k}-x^{\star}\|^2 - \frac{1}{L_x}\|\nabla G(\hat{x}^k)+\nabla G(x^{\star})\|^2 \\
			&& +\frac{1}{\eta_x}\|x^{k+1}-\hat{x}^{k}\|^2 -\frac{1}{\eta_x}\|x^k-\hat{x}^{k}\|^2 -2\la K^{\top}\Bar{y}^k - K^{\top}y^{\star}, \hat{x}^{k}-x^{\star}\ra.
		\end{eqnarray*}
		
		Applying the inequality $\|a-b\|^2 \geq \frac{1}{2}\|a-c\|^2 - \|b-c\|^2$ to  $\|\hat{x}^{k}-x^{\star}\|^2$ and taking $c = x^{k+1}$, we obtain
		\begin{eqnarray}
			\label{eq:tryruir}
			\frac{1}{\eta_x}\|x^{k+1}-x^{\star}\|^2 
			&\leq& \frac{1}{\eta_x}\|x^{k}-x^{\star}\|^2  - \frac{\mu_x}{2}\|x^{k+1}-x^{\star}\|^2 - \frac{1}{L_x}\|\nabla G(\hat{x}^k)-\nabla G(x^{\star})\|^2\notag \\
			&& +\left(1+\mu_x\eta_x\right)\frac{1}{\eta_x}\|x^{k+1}-\hat{x}^{k}\|^2-\frac{1}{\eta_x}\|x^k-\hat{x}^{k}\|^2 \notag\\
			&&-2\la K^{\top}\Bar{y}^k - K^{\top}y^{\star}, \hat{x}^{k}-x^{\star}\ra \notag\\
			&\leq& \frac{1}{\eta_x}\|x^{k}-x^{\star}\|^2 -\frac{\mu_x}{2}\|x^{k+1}-x^{\star}\|^2 - \frac{1}{L_x}\|\nabla G(\hat{x}^k)-\nabla G(x^{\star})\|^2 \notag \\
			&& +\left(1+\mu_x\eta_x\right)\frac{1}{\eta_x}\|x^{k+1}-\hat{x}^{k}\|^2 -\frac{1}{2\eta_x}\|x^k-w^{\star k}\|^2 + \frac{1}{\eta_x}\|\hat{x}^{k}-w^{\star k}\|^2 \notag\\
			&&-2\la K^{\top}\Bar{y}^k - K^{\top}y^{\star}, \hat{x}^{k}-x^{\star}\ra,
		\end{eqnarray}
		where in the  last inequality we used the bound $\|a-b\|^2 \geq \frac{1}{2}\|a-c\|^2 - \|b-c\|^2$ to estimate  $\frac{1}{\eta_x}\|x^k-\hat{x}^{k}\|^2$. From line 3 of Algorithm~\ref{alg:APDA-Inex}, according to the definition of function $\Psi^k(x)$, we get
		\begin{eqnarray}
			\label{eq:ghvdscg}
			\nabla \Psi^k(\hat{x}^k) &=& \nabla G(\hat{x}^k) + K^{\top}\bar{y}^k + \frac{1}{\eta_x}\left(\hat{x}^k -x^k\right)\notag\\
			&=&\nabla G(\hat{x}^k) + K^{\top}\bar{y}^k + \frac{1}{\eta_x}\left(\hat{x}^k -x^{k+1}\right) + \frac{1}{\eta_x}\left(x^{k+1} -x^k\right) \notag\\
			&\overset{\eqref{eq:line4-98y98yfd}}{=}&  \frac{1}{\eta_x}\left(\hat{x}^k -x^{k+1}\right) .
		\end{eqnarray}
		
	 	Finally, substituting \eqref{eq:ghvdscg} into \eqref{eq:tryruir}, we gain
		\begin{eqnarray*}
			\frac{1}{\eta_x}\|x^{k+1}-x^{\star}\|^2 
			&=& \frac{1}{\eta_x}\|x^{k}-x^{\star}\|^2 -\frac{\mu_x}{2}\|x^{k+1}-x^{\star}\|^2 - \frac{1}{L_x}\|\nabla G(\hat{x}^k)-\nabla G(x^{\star})\|^2 \\
			&& +\left(2\eta_x+\mu_x\eta^2_x\right)\|\nabla\Psi^k(\hat{x}^k)\|^2 -\frac{1}{2\eta_x}\|x^k-w^{\star k}\|^2 \\
			&&-2\la K^{\top}\Bar{y}^k - K^{\top}y^{\star}, \hat{x}^{k}-x^{\star}\ra,
		\end{eqnarray*}
	where we also use  $\frac{1}{\eta_x}$-strong convexity of function $\Psi^k(x)$ (see problem \eqref{auxiliary_problem}). 
	\end{proof}

	The second result offers a bound on the distance between the dual itarates and the dual optimal solution as Lemma~\ref{descent_lemma_by_y_alg_2}. Since the line 4 of Algorithm~\ref{alg:APDA} coincides with the line 5 of Algorithm~\ref{alg:APDA-Inex}, the statement and proof of the following lemma coincides with Lemma~\ref{descent_lemma_by_y_alg_2} and its proof with the only change  of replacing $x^{k+1}$ with $\hat{x}^{k}$.  
	\begin{lemma}
		\label{descent_lemma_by_y}
Then under Assumptions~\ref{as_smoothness}, \ref{as_convexity}, and \ref{as_L_xy} be satisfied, and choose
		\begin{equation}
			\label{beta_y}
			\beta_y \leq \frac{1}{2L^2_{xy}\eta_y}.
		\end{equation}
Then the iterates of APDA with Inexact Prox (Algorithm~\ref{alg:APDA-Inex}) for all $k\geq 0$ satisfy		
				\begin{align*}
			\frac{1}{\eta_y}\|y^{k+1} - y^{\star}\|^2  
			&\leq (1-\mu^2_{xy}\beta_y\eta_y)\frac{1}{\eta_y}\|y^k-y^{\star}\|^2 - \frac{1}{2\eta_y}\|y^{k+1}-y^{k}\|^2 + 2\la K^{\top}y^{k+1} - K^{\top}y^{\star},\hat{x}^k-x^{\star}\ra \\
			& \quad + \beta_y\|\nabla G(\hat{x}^k) - \nabla G(x^{\star})\|^2.
		\end{align*}
	\end{lemma}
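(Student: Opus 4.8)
The plan is to mirror the proof of Lemma~\ref{descent_lemma_by_y_alg_2} essentially verbatim, since line~5 of Algorithm~\ref{alg:APDA-Inex} has exactly the same form as line~4 of Algorithm~\ref{alg:APDA} with the single substitution of $x^{k+1}$ by $\hat{x}^k$ (the inexact prox point). First I would write $y^{k+1} = y^k + (y^{k+1}-y^k)$ and expand $\tfrac{1}{\eta_y}\|y^{k+1}-y^\star\|^2$, then use line~5, which reads
\[
y^{k+1} = y^k - \eta_y\left(\partial F^{\star}(y^{k+1}) - K\hat{x}^k\right) - \eta_y\beta_y K\left(K^{\top}y^k + \nabla G(\hat{x}^k)\right),
\]
to split the expansion into the two pieces $A^k$ and $B^k$ as in \eqref{eq:A^k_and_B^k}, but with $\hat{x}^k$ in place of $x^{k+1}$.

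Next I would handle $A^k$ exactly as before: insert the optimality condition $0 \in \partial F^\star(y^\star) - Kx^\star$ to obtain $A^k$ in the form of \eqref{eq:A^k} (again with $\hat{x}^k$ instead of $x^{k+1}$), and later drop the term $-2\langle \partial F^\star(y^{k+1}) - \partial F^\star(y^\star), y^{k+1}-y^\star\rangle \le 0$ using convexity of $F^\star$ (Assumption~\ref{as_convexity}). For $B^k$ I would apply the same parallelogram identity as in \eqref{eq:B^k} together with the optimality condition $\nabla G(x^\star) + K^\top y^\star = 0$, yielding
\[
B^k = \beta_y\|\nabla G(\hat{x}^k) - \nabla G(x^\star)\|^2 - \beta_y\|K^\top y^k - K^\top y^\star\|^2 + \beta_y\|K^\top y^k - K^\top y^{k+1}\|^2 - \beta_y\|\nabla G(\hat{x}^k) + K^\top y^{k+1}\|^2,
\]
and then discard the last (nonpositive) term.

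Finally I would plug $A^k$ and $B^k$ back in, landing at the analogue of \eqref{eq:hhHHggvtyd555d}, and close the argument by invoking Lemma~\ref{lem:convenience} to lower-bound $\|K^\top y^k - K^\top y^\star\|^2 \ge \mu_{xy}^2\|y^k - y^\star\|^2$, the bound $\|K^\top y^k - K^\top y^{k+1}\|^2 \le L_{xy}^2\|y^k - y^{k+1}\|^2$ coming from \eqref{L_xy}, and the stepsize restriction \eqref{beta_y} $\beta_y \le \tfrac{1}{2L_{xy}^2\eta_y}$, which converts the $+\beta_y\|K^\top y^k - K^\top y^{k+1}\|^2$ term into at most $+\tfrac{1}{2\eta_y}\|y^{k+1}-y^k\|^2$, leaving the required $-\tfrac{1}{2\eta_y}\|y^{k+1}-y^k\|^2$ on the right-hand side. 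There is essentially no obstacle here — the only thing to be careful about is the bookkeeping of which iterate ($\hat{x}^k$ versus $x^{k+1}$) appears where, since in Algorithm~\ref{alg:APDA-Inex} the dual update references $\hat{x}^k$ while the companion primal Lemma~\ref{descent_lemma_by_x} also produces inner products in $\hat{x}^k - x^\star$, so the two lemmas are designed to combine cleanly. Hence the proof is: "Identical to the proof of Lemma~\ref{descent_lemma_by_y_alg_2}, replacing $x^{k+1}$ by $\hat{x}^k$ throughout."
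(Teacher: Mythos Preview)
Your proposal is correct and matches the paper's own proof exactly: the paper simply states that the proof is identical to that of Lemma~\ref{descent_lemma_by_y_alg_2}, with $x^{k+1}$ replaced everywhere by $\hat{x}^k$. Your walkthrough of the $A^k$/$B^k$ decomposition, the optimality conditions, the parallelogram identity, Lemma~\ref{lem:convenience}, and the use of \eqref{beta_y} is precisely what that replacement entails.
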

	
	\begin{proof} 
The proof is identical to the proof of Lemma~\ref{descent_lemma_by_y_alg_2}, with the sole distinction that $x^{k+1}$ should be replaced everywhere by $\hat{x}^k$.
	\end{proof}
	
	The third result is a technical lemma, which offers a bound on an inner product . 
	\begin{lemma}
		\label{inner_product}
		Let $w^{\star k}$ be an exact solution to the problem \eqref{auxiliary_problem}. Then the following inequality holds
		\begin{eqnarray*}
			-2\theta\la K^{\top}y^k - K^{\top}y^{k-1}, \hat{x}^k -\hat{x}^{k-1}\ra &\leq& 
			16L^2_{xy}\theta\eta_x \|y^k -y^{k-1}\|^2 + \frac{\theta}{4\eta_x}\|\hat{x}^k -w^{\star k}\|^2 \\
			&&+\frac{\theta}{4\eta_x}\|x^k - w^{\star k}\|^2+ \frac{\theta\eta_x}{8}\|\nabla \Psi^{k-1}(\hat{x}^{k-1})\|^2.
		\end{eqnarray*}
	\end{lemma}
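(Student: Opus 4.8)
\textbf{Proof proposal for Lemma~\ref{inner_product}.}

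The plan is to bound the inner product $-2\theta\la K^\top y^k - K^\top y^{k-1}, \hat{x}^k - \hat{x}^{k-1}\ra$ by first applying Cauchy--Schwarz together with the definition \eqref{L_xy} of $L_{xy}$, obtaining the estimate $2\theta L_{xy}\|y^k - y^{k-1}\|\,\|\hat{x}^k - \hat{x}^{k-1}\|$, and then controlling $\|\hat{x}^k - \hat{x}^{k-1}\|$ in terms of quantities that appear on the right-hand side of the claimed bound. The key observation is that $\hat{x}^k$ is close to $x^k$ (it is an approximate minimizer of $\Psi^k$ started at $x^k$), and likewise $\hat{x}^{k-1}$ is close to $x^{k-1}$; and the gap $x^k - x^{k-1}$ itself can be related to the previous inexact step via line~4 of Algorithm~\ref{alg:APDA-Inex}. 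More precisely, I would use the triangle inequality $\|\hat{x}^k - \hat{x}^{k-1}\| \le \|\hat{x}^k - x^k\| + \|x^k - \hat{x}^{k-1}\|$, and then, invoking \eqref{eq:ghvdscg} in the form $\hat{x}^{k-1} - x^k = -\eta_x \nabla\Psi^{k-1}(\hat{x}^{k-1})$ (which follows since $x^k = x^{k-1} - \eta_x(\nabla G(\hat{x}^{k-1}) + K^\top\bar y^{k-1})$ and $\nabla\Psi^{k-1}(\hat{x}^{k-1}) = \tfrac{1}{\eta_x}(\hat{x}^{k-1} - x^k)$), replace $\|x^k - \hat{x}^{k-1}\|$ by $\eta_x\|\nabla\Psi^{k-1}(\hat{x}^{k-1})\|$. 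For the term $\|\hat{x}^k - x^k\|$, I would further split using $\|\hat x^k - x^k\| \le \|\hat x^k - w^{\star k}\| + \|w^{\star k} - x^k\|$, so that everything is expressed through $\|\hat x^k - w^{\star k}\|$, $\|x^k - w^{\star k}\|$, and $\|\nabla\Psi^{k-1}(\hat x^{k-1})\|$.

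Having arranged this, the bound becomes
\[
-2\theta\la K^\top y^k - K^\top y^{k-1}, \hat x^k - \hat x^{k-1}\ra \le 2\theta L_{xy}\|y^k - y^{k-1}\|\Big(\|\hat x^k - w^{\star k}\| + \|x^k - w^{\star k}\| + \eta_x\|\nabla\Psi^{k-1}(\hat x^{k-1})\|\Big),
\]
and the remaining work is purely an application of Young's inequality $2ab \le \gamma a^2 + \gamma^{-1}b^2$ to each of the three product terms, with the free parameters chosen so that the coefficients of $\|\hat x^k - w^{\star k}\|^2$, $\|x^k - w^{\star k}\|^2$ and $\|\nabla\Psi^{k-1}(\hat x^{k-1})\|^2$ come out to be exactly $\tfrac{\theta}{4\eta_x}$, $\tfrac{\theta}{4\eta_x}$ and $\tfrac{\theta\eta_x}{8}$, respectively. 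Concretely, for the first term one takes $\gamma = \tfrac{8\eta_x L_{xy}}{1}$-type scaling; for each product $2\theta L_{xy}\|y^k-y^{k-1}\|\cdot\|v\|$ one writes it as $2\theta\cdot(L_{xy}\|y^k-y^{k-1}\|)\cdot\|v\|$ and splits it so the $\|v\|^2$ coefficient hits the target, while the leftover contributions to $\|y^k - y^{k-1}\|^2$ accumulate; summing the three leftover pieces produces the coefficient $16 L_{xy}^2\theta\eta_x$ in front of $\|y^k - y^{k-1}\|^2$ (the numerical constant $16$ absorbing the worst-case of the three Young splits together with the factor from $\eta_x\|\nabla\Psi^{k-1}\|$).

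I expect the main (though modest) obstacle to be bookkeeping the constants so that they match the stated bound exactly: one has to make the three Young's-inequality parameters consistent so that the $\|y^k - y^{k-1}\|^2$ coefficients add up to precisely $16 L_{xy}^2\theta\eta_x$ while the other three coefficients land on $\tfrac{\theta}{4\eta_x}$, $\tfrac{\theta}{4\eta_x}$, $\tfrac{\theta\eta_x}{8}$. There is also a small subtlety in justifying $\hat x^{k-1} - x^k = -\eta_x\nabla\Psi^{k-1}(\hat x^{k-1})$, which requires identifying line~4 of Algorithm~\ref{alg:APDA-Inex} at index $k-1$ with the gradient-step interpretation of the inexact prox; this is exactly the identity \eqref{eq:ghvdscg} shifted by one index, so it is available. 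No strong convexity or smoothness of $G$ or $F^\star$ is needed here — the lemma is purely geometric, relying only on $\|K^\top v\| \le L_{xy}\|v\|$, the triangle inequality, and Young's inequality.
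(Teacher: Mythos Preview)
Your proposal is correct and follows essentially the same approach as the paper: Cauchy--Schwarz with the bound $\|K^\top v\|\le L_{xy}\|v\|$, then the split $\hat x^k-\hat x^{k-1}\to(\hat x^k-x^k)+(x^k-\hat x^{k-1})$, the further split $\hat x^k-x^k\to(\hat x^k-w^{\star k})+(w^{\star k}-x^k)$, and the identity $x^k-\hat x^{k-1}=\eta_x\nabla\Psi^{k-1}(\hat x^{k-1})$ from \eqref{eq:ghvdscg}. The only cosmetic difference is ordering: the paper applies Young's inequality once with a single parameter $C=16L_{xy}\eta_x$ and then splits the squared norm via $\|a+b\|^2\le 2\|a\|^2+2\|b\|^2$, whereas you split at the norm level first and then apply Young three times with parameters $4\eta_x,4\eta_x,8\eta_x$; both yield exactly the coefficient $16L_{xy}^2\theta\eta_x$ on $\|y^k-y^{k-1}\|^2$.
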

	\begin{proof}
		Using the Cauchy-Schwarz inequality, the definition of $L_{xy}$ as the norm of $K$ (see \eqref{L_xy}),  and applying Young's inequality, we can bound the inner product by 
		\begin{eqnarray*}
			-2\theta\la K^{\top}y^k - K^{\top}y^{k-1}, \hat{x}^k -\hat{x}^{k-1}\ra &\leq& 2\theta \|K^{\top}y^k - K^{\top}y^{k-1}\|\|\hat{x}^k -\hat{x}^{k-1}\|\\
			&\leq& L_{xy}\theta C\|y^k -y^{k-1}\|^2 + \frac{L_{xy}\theta}{C}\|\hat{x}^k -\hat{x}^{k-1}\|^2\\
			&\leq&L_{xy}\theta C\|y^k -y^{k-1}\|^2 + \frac{2L_{xy}\theta}{C}\|\hat{x}^k -x^k\|^2\\
			&& + \frac{2L_{xy}\theta}{C}\|x^k -\hat{x}^{k-1}\|^2
		\end{eqnarray*}
	for any $C > 0 $. 
	Taking $C = 16L_{xy}\eta_x$, we obtain
	\begin{eqnarray*}
		-2\theta\la K^{\top}y^k - K^{\top}y^{k-1}, \hat{x}^k -\hat{x}^{k-1}\ra &\leq&L_{xy}\theta C\|y^k -y^{k-1}\|^2 + \frac{4L_{xy}\theta}{C}\|\hat{x}^k -w^{\star k}\|^2\\
		&& \frac{4L_{xy}\theta}{C}\|\hat{x}^k -w^{\star k}\|^2+ \frac{2L_{xy}\theta}{C}\|x^k -\hat{x}^{k-1}\|^2\\
		&=&16L^2_{xy}\theta\eta_x \|y^k -y^{k-1}\|^2 + \frac{\theta}{4\eta_x}\|\hat{x}^k -w^{\star k}\|^2 \\
		&&+\frac{\theta}{4\eta_x}\|x^k - w^{\star k}\|^2+ \frac{\theta\eta_x}{8}\|\nabla \Psi^{k-1}(\hat{x}^{k-1})\|^2,
	\end{eqnarray*}
		where in the last equation we use \eqref{eq:ghvdscg} for $k-1$, which reads $\nabla \Psi^k(\hat{x}^{k-1}) = \frac{1}{\eta_x}\left(\hat{x}^{k-1} -x^{k}\right) $.
	\end{proof}

\subsection{Detailed theorem}	
	\begin{theorem}[Convergence of APDA with Inexact Prox; formal]
		\label{th_conv_sc_1}
		Let Assumptions \ref{as_strongly_convex}, \ref{as_smoothness}, \ref{as_convexity} and \ref{as_L_xy} hold and  select the various parameters of the method as follows: 
		\begin{equation}
			\label{def_beta_y}
			\beta_y = \min\left\{\frac{1}{L_x}, \frac{1}{2L^2_{xy}\eta_y}\right\};
		\end{equation}
		\begin{equation}
			\label{local_iterations}
			T = \sqrt[\alpha]{20A}\left(1+ \sqrt{\frac{L_x}{\mu_x}}\right)^{\nicefrac{2}{\alpha}};
		\end{equation} 
		\begin{equation}
			\label{eta_x}
			\eta_x = \frac{1}{4\sqrt{L_x\mu_x}}\frac{\mu_{xy}}{L_{xy}};
		\end{equation}
		\begin{equation}
			\label{eta_y}
			\eta_y = \frac{\sqrt{L_x\mu_x}}{8L_{xy}\mu_{xy}};
		\end{equation}
		\begin{equation}
			\label{theta}
			\theta = \max\left\{\frac{2}{2+ \mu_x\eta_x}, 1 - \mu^2_{xy}\beta_y\eta_y\right\},
		\end{equation}
		Then for the Lyapunov function  defined for $k \geq 0$ via
		\begin{eqnarray}
			\label{Lyapunov_function}
			\Delta^{k+1} &\eqdef & \left(1+\frac{\mu_x\eta_x}{2}\right)\frac{1}{\eta_x}\|x^{k+1}-x^{\star}\|^2+ \frac{1}{\eta_y}\|y^{k+1} - y^{\star}\|^2 +\frac{1}{2\eta_y}\|y^{k+1}-y^{k}\|^2 \notag\\
			&&+ \frac{1}{8\eta_x}\|x^k-w^{\star k}\|^2 - 2\la K^{\top}y^{k+1} - K^{\top}y^{k},\hat{x}^k-x^{\star}\ra,
		\end{eqnarray}
	and for $k = 0$ via 
	\begin{equation}
		\label{Lyapunov_function_0}
		\Delta^0 \eqdef \left(1+\frac{\mu_x\eta_x}{2}\right)\frac{1}{\eta_x}\|x^{0}-x^{\star}\|^2+ \frac{1}{\eta_y}\|y^{0} - y^{\star}\|^2,
	\end{equation}
		we have
		\begin{equation}
			 \frac{1}{2\eta_x}\|x^{k+1}-x^{\star}\|^2+\frac{1}{\eta_y}\|y^{k+1} - y^{\star}\|^2 \leq \Delta^{k}\leq \theta^k\Delta^0,\quad \forall~k\geq 0.
		\end{equation}
	\end{theorem}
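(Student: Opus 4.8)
The plan is to follow the same three-phase template used in the proof of Theorem~\ref{thm:APDA} (showing non-negativity of the Lyapunov function, then establishing a one-step contraction $\Delta^{k+1}\le\theta\Delta^k$ for $k\ge 1$, then handling $k=0$), but now carrying along the two extra error terms coming from the inexact prox, namely $\|\nabla\Psi^k(\hat x^k)\|^2$ and $\|x^k-w^{\star k}\|^2$. The key new ingredient compared to Theorem~\ref{thm:APDA} is that the bound \eqref{complexity_aux_problem} lets us control $\|\nabla\Psi^k(\hat x^k)\|^2$ by a small multiple of $\|x^k-w^{\star k}\|^2$, and the choice of $T$ in \eqref{local_iterations} is calibrated precisely so that the aggregate of these error contributions is dominated by the $-\frac{1}{2\eta_x}\|x^k-w^{\star k}\|^2$ term produced in Lemma~\ref{descent_lemma_by_x} together with the $+\frac{1}{8\eta_x}\|x^k-w^{\star k}\|^2$ carried in the Lyapunov function \eqref{Lyapunov_function}. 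So the first thing I would do is record, using \eqref{complexity_aux_problem} with $L=\eta_x^{-1}+L_x$, that
\[
\|\nabla\Psi^k(\hat x^k)\|^2 \le \frac{A(1+\eta_xL_x)^2}{\eta_x^2 T^{\alpha}}\|x^k-w^{\star k}\|^2 \le \frac{1}{20\eta_x^2}\|x^k-w^{\star k}\|^2,
\]
where the last inequality is exactly the definition of $T$ in \eqref{local_iterations}. This turns every occurrence of $\|\nabla\Psi^k(\hat x^k)\|^2$ into a small multiple of $\|x^k-w^{\star k}\|^2$.

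For the non-negativity part, I would mimic the argument in Theorem~\ref{thm:APDA}: apply Young's inequality to the cross term $-2\langle K^\top y^{k+1}-K^\top y^k, \hat x^k - x^\star\rangle$ with parameter $B=\frac{1}{2L_{xy}\eta_y}$, using $2L_{xy}^2\eta_x\eta_y$ — which here, by \eqref{eta_x} and \eqref{eta_y}, equals $\frac14$ (or some constant $\le 1$) — to absorb the resulting $\|x^{k+1}-x^\star\|^2$ term into the $(1+\frac{\mu_x\eta_x}{2})\frac{1}{\eta_x}\|x^{k+1}-x^\star\|^2$ term, leaving a non-negative residual that is at least $\frac{1}{2\eta_x}\|x^{k+1}-x^\star\|^2 + \frac{1}{\eta_y}\|y^{k+1}-y^\star\|^2$; note also that $\hat x^k$ rather than $x^{k+1}$ appears, but $\hat x^k - x^{k+1} = \eta_x\nabla\Psi^k(\hat x^k)$ by \eqref{eq:ghvdscg}, so one converts freely between the two at the price of another small $\|x^k-w^{\star k}\|^2$ term. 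The non-negative lower bound $\frac{1}{2\eta_x}\|x^{k+1}-x^\star\|^2+\frac1{\eta_y}\|y^{k+1}-y^\star\|^2 \le \Delta^k$ claimed in the theorem comes out of this step.

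The contraction step for $k\ge 1$ is the main work and the main obstacle. I would add the inequalities from Lemma~\ref{descent_lemma_by_x} and Lemma~\ref{descent_lemma_by_y}; the $\|\nabla G(\hat x^k)-\nabla G(x^\star)\|^2$ terms cancel against each other because $\beta_y\le \frac1{L_x}$ (as in Theorem~\ref{thm:APDA}), the dual terms combine to give the factor $(1-\mu_{xy}^2\beta_y\eta_y)$, and the primal terms combine to give the factor $\frac{1}{1+\mu_x\eta_x/2}$ on $\frac{1}{\eta_x}\|x^k-x^\star\|^2$. The cross term $-2\langle K^\top\bar y^k - K^\top y^\star, \hat x^k - x^\star\rangle$ is rewritten via $\bar y^k = y^k+\theta(y^k-y^{k-1})$ exactly as in \eqref{eq:kjijhids-98ygfduf-dd}, producing the telescoping pair of inner products together with a term $\theta L_{xy}C\|y^k-y^{k-1}\|^2 + \frac{\theta L_{xy}}{C}\|\hat x^k - \hat x^{k-1}\|^2$; here, instead of the clean bound in Theorem~\ref{thm:APDA}, I would invoke Lemma~\ref{inner_product} with $C=16L_{xy}\eta_x$, which bounds $-2\theta\langle K^\top y^k - K^\top y^{k-1}, \hat x^k - \hat x^{k-1}\rangle$ by $16L_{xy}^2\theta\eta_x\|y^k-y^{k-1}\|^2$ plus three error terms of the form $\frac{\theta}{4\eta_x}\|\hat x^k - w^{\star k}\|^2$, $\frac{\theta}{4\eta_x}\|x^k-w^{\star k}\|^2$, and $\frac{\theta\eta_x}{8}\|\nabla\Psi^{k-1}(\hat x^{k-1})\|^2$. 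Using $16L_{xy}^2\theta\eta_x\eta_y \le \frac{\theta}{2}$ (again by the stepsize choices and $\theta\le1$), the $\|y^k-y^{k-1}\|^2$ coefficient becomes $\le \frac{\theta}{2\eta_y}$, which is what \eqref{Lyapunov_function} needs. The delicate accounting is then to show that the $\|x^k-w^{\star k}\|^2$ and $\|\hat x^k - w^{\star k}\|^2$ contributions — i.e., the $+\frac{\theta}{4\eta_x}\|x^k-w^{\star k}\|^2$ from Lemma~\ref{inner_product}, together with the $\|\nabla\Psi^{k-1}(\hat x^{k-1})\|^2$ and $\|\hat x^k-w^{\star k}\|^2$ pieces, all re-expressed as multiples of $\|x^k-w^{\star k}\|^2$ and $\|x^{k-1}-w^{\star(k-1)}\|^2$ via the key inequality above and via $\|\hat x^k - w^{\star k}\|^2 \le \eta_x^2\|\nabla\Psi^k(\hat x^k)\|^2 \cdot(\text{const})$ using strong convexity of $\Psi^k$ — are dominated by the $-\frac{1}{2\eta_x}\|x^k-w^{\star k}\|^2$ gained from Lemma~\ref{descent_lemma_by_x} and the $\frac{1}{8\eta_x}\|x^k-w^{\star k}\|^2$ carried in $\Delta^k$ but dropped from $\Delta^{k+1}$. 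This is where the constants $20$, $\frac14$, $\frac18$, the factor $4$ in $\eta_x$ and the factor $8$ in $\eta_y$ all have to line up, and it is the part that requires genuine care rather than copying Theorem~\ref{thm:APDA}. Once all error terms are absorbed, one reads off $\Delta^{k+1}\le\max\{\frac{1}{1+\mu_x\eta_x/2},\,1-\mu_{xy}^2\beta_y\eta_y\}\Delta^k = \theta\Delta^k$. Finally, the $k=0$ case follows as in Theorem~\ref{thm:APDA} and Theorem~\ref{thm:Chambolle_Pock}, using $\bar y^0 = y^0$ so that no extrapolation term appears and the inner-product bounds $1\le\theta(1+\mu_x\eta_x/2)$ etc. close the estimate.
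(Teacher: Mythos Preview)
Your proposal is correct and follows essentially the same approach as the paper: combine Lemmas~\ref{descent_lemma_by_x} and~\ref{descent_lemma_by_y}, handle the extrapolation cross term via Lemma~\ref{inner_product}, convert $\|\hat x^k-w^{\star k}\|^2$ into $\eta_x^2\|\nabla\Psi^k(\hat x^k)\|^2$ by strong convexity of $\Psi^k$, and then use the inner-method bound \eqref{complexity_aux_problem} together with the choice of $T$ to absorb all error terms into the $-\tfrac{1}{2\eta_x}\|x^k-w^{\star k}\|^2$ and $\tfrac{1}{8\eta_x}\|x^{k-1}-w^{\star(k-1)}\|^2$ pieces; the non-negativity and $k=0$ steps are exactly as you describe. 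One small correction: with the stepsizes \eqref{eta_x}--\eqref{eta_y} you have $32L_{xy}^2\eta_x\eta_y=1$, so $2L_{xy}^2\eta_x\eta_y=\tfrac{1}{16}$ (not $\tfrac14$) and the paper actually uses the bound $4L_{xy}^2\eta_x\eta_y\le\tfrac18$ in the non-negativity step, but your hedge ``some constant $\le 1$'' already covers this and the argument goes through.
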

	\begin{proof}
		We denote $\mathcal{V}^{k+1}$ as follows:
		\begin{equation}
			\label{fist_part_of_lyapunov_function}
			\mathcal{V}^{k+1} = \left(1+\frac{\mu_x\eta_x}{2}\right)\frac{1}{\eta_x}\|x^{k+1}-x^{\star}\|^2+ \frac{1}{\eta_y}\|y^{k+1} - y^{\star}\|^2 +\frac{1}{2\eta_y}\|y^{k+1}-y^{k}\|^2  .
		\end{equation}
		Combining Lemmas \ref{descent_lemma_by_x}, \ref{descent_lemma_by_y} and \eqref{def_beta_y}, we obtain 
		\begin{eqnarray}
			\label{eq:tyfytsdtyvs}
			\mathcal{V}^{k+1}
			&\leq& \frac{1}{\eta_x}\|x^k-x^{\star}\|^2 + (1-\mu^2_{xy}\beta_y\eta_y)\frac{1}{\eta_y}\|y^k-y^{\star}\|^2  +(2\eta_x+\mu_x\eta^2_x)\|\nabla\Psi^k(\hat{x}^k)\|^2 \notag\\
			&&- \frac{1}{2\eta_x}\|x^k-w^{\star k}\|^2  +\left(\beta_y- \frac{1}{L_x}\right)\|\nabla G(\hat{x}^k) - \nabla G(x^{\star})\|^2\notag\\
			&& -2\la K^{\top}\Bar{y}^k - K^{\top}y^{\star},\hat{x}^k-x^{\star}\ra +2\la K^{\top}y^{k+1} - K^{\top}y^{\star},\hat{x}^k-x^{\star}\ra \\
			&\leq& \frac{1}{\eta_x}\|x^k-x^{\star}\|^2 + (1-\mu^2_{xy}\beta_y\eta_y)\frac{1}{\eta_y}\|y^k-y^{\star}\|^2  +(2\eta_x+\mu_x\eta^2_x)\|\nabla\Psi^k(\hat{x}^k)\|^2\notag \\
			&& - \frac{1}{2\eta_x}\|x^k-w^{\star k}\|^2 + 2\la K^{\top}y^{k+1} - K^{\top}y^{k},\hat{x}^k-x^{\star}\ra \notag\\
			&& - 2\theta \la K^{\top}y^{k} - K^{\top}y^{k-1},\hat{x}^{k-1}-x^{\star}\ra - 2\theta \la K^{\top}y^{k} - K^{\top}y^{k-1},\hat{x}^k-\hat{x}^{k-1}\ra , \notag
		\end{eqnarray}
		where in last inequality the line 6 from Algorithm~\ref{alg:APDA-Inex} is used. Next,
		according to Lemma \ref{inner_product}, we gain
		\begin{eqnarray*}
			\mathcal{V}^{k+1}
			&\leq& \frac{1}{\eta_x}\|x^k-x^{\star}\|^2 + (1-\mu^2_{xy}\beta_y\eta_y)\frac{1}{\eta_y}\|y^k-y^{\star}\|^2  +(2\eta_x+\mu_x\eta^2_x)\|\nabla\Psi^k(\hat{x}^k)\|^2 - \frac{1}{2\eta_x}\|x^k-w^{\star k}\|^2\\
			&& + 2\la K^{\top}y^{k+1} - K^{\top}y^{k},\hat{x}^k-x^{\star}\ra - 2\theta \la K^{\top}y^{k} - K^{\top}y^{k-1},\hat{x}^{k-1}-x^{\star}\ra \\ 
			&& + 16\theta L^2_{xy}\eta_x\|y^k-y^{k-1}\|^2+ \frac{\theta}{4\eta_x} \|\hat{x}^k-w^{\star k}\|^2 + \frac{\theta}{4\eta_x}\|x^k-w^{\star k}\|^2 + \frac{\theta\eta_x}{8}\|\nabla \Psi^{k-1}_{\eta_x}(\hat{x}^{k-1})\|^2\\
			&\leq& \frac{1}{\eta_x}\|x^k-x^{\star}\|^2 + (1-\mu^2_{xy}\beta_y\eta_y)\frac{1}{\eta_y}\|y^k-y^{\star}\|^2 + 16\theta L^2_{xy}\eta_x\eta_y\frac{1}{\eta_y}\|y^k-y^{k-1}\|^2 \\
			&&+(2\eta_x+\mu_x\eta^2_x)\|\nabla\Psi^k(\hat{x}^k)\|^2 - \frac{1}{4\eta_x}\|x^k-w^{\star k}\|^2 + {\color{red}\frac{\theta}{4\eta_x} \|\hat{x}^k-w^{\star k}\|^2 } + \frac{\theta\eta_x}{8}\|\nabla \Psi^{k-1}_{\eta_x}(\hat{x}^{k-1})\|^2 \\
			&& + 2\la K^{\top}y^{k+1} - K^{\top}y^{k},\hat{x}^k-x^{\star}\ra - 2\theta \la K^{\top}y^{k} - K^{\top}y^{k-1},\hat{x}^{k-1}-x^{\star}\ra .
		\end{eqnarray*}
		
		Using definition of function $\Psi^k$ and its $\frac{1}{\eta_x}$-strong convexity in the following form $${\color{red}\|\nabla\Psi^k(\hat{x}^k)\|^2 \geq \frac{1}{\eta_x}\|\hat{x}^{k} - w^{\star k}\|^2},$$ and assuming that $32L^2_{xy}\eta_x\eta_y \leq 1$ and $0 < \theta \leq 1$, we get 
		\begin{eqnarray*}
			\mathcal{V}^{k+1}
			&\leq& \frac{1}{\eta_x}\|x^k-x^{\star}\|^2 + (1-\mu^2_{xy}\beta_y\eta_y)\frac{1}{\eta_y}\|y^k-y^{\star}\|^2 +  \frac{\theta}{2\eta_y}\|y^k-y^{k-1}\|^2 \\
			&&+(2\eta_x+\mu_x\eta^2_x)\|\nabla\Psi^k(\hat{x}^k)\|^2 - \frac{1}{4\eta_x}\|x^k-w^{\star k}\|^2 + {\color{red}\frac{\theta\eta_x}{4} \|\nabla\Psi^k(\hat{x}^k)\|^2 }+ \frac{\theta\eta_x}{8}\|\nabla \Psi^{k-1}_{\eta_x}(\hat{x}^{k-1})\|^2 \\
			&& + 2\la K^{\top}y^{k+1} - K^{\top}y^{k},\hat{x}^k-x^{\star}\ra - 2\theta \la K^{\top}y^{k} - K^{\top}y^{k-1},\hat{x}^{k-1}-x^{\star}\ra\\
			&\overset{0 < \theta \leq 1}{\leq}& \frac{1}{\eta_x}\|x^k-x^{\star}\|^2 + (1-\mu^2_{xy}\beta_y\eta_y)\frac{1}{\eta_y}\|y^k-y^{\star}\|^2 +  \frac{\theta}{2\eta_y}\|y^k-y^{k-1}\|^2 \\
			&&+\left(\frac{9}{4}\eta_x+\mu_x\eta^2_x\right){\color{blue}\|\nabla\Psi^k(\hat{x}^k)\|^2 }- \frac{1}{4\eta_x}\|x^k-w^{\star k}\|^2   + \frac{\theta\eta_x}{8}{\color{blue}\|\nabla \Psi^{k-1}_{\eta_x}(\hat{x}^{k-1})\|^2} \\
			&& + 2\la K^{\top}y^{k+1} - K^{\top}y^{k},\hat{x}^k-x^{\star}\ra - 2\theta \la K^{\top}y^{k} - K^{\top}y^{k-1},\hat{x}^{k-1}-x^{\star}\ra .
		\end{eqnarray*}
		
		To estimate $\|\nabla \Psi^k(\hat{x}^{k})\|^2$ we apply Lemma \ref{convergence_norm_grad} for the problem~\eqref{auxiliary_problem} as follows (see \eqref{complexity_aux_problem}):
		$${\color{blue}\|\nabla \Psi^k(\hat{x}^{k})\|^2 \leq \frac{A\left(1+ \eta_x L_x\right)^2\|x^k-w^{\star k}\|^2}{\eta^2_x T^{\alpha}}}.$$
		
		According to \eqref{complexity_aux_problem}, \eqref{local_iterations} and \eqref{eta_x}, we obtain
		\begin{eqnarray*}
			\mathcal{V}^{k+1}
			&\overset{\eqref{complexity_aux_problem}}{\leq}& \frac{1}{\eta_x}\|x^k-x^{\star}\|^2 + (1-\mu^2_{xy}\beta_y\eta_y)\frac{1}{\eta_y}\|y^k-y^{\star}\|^2 +  \frac{\theta}{2\eta_y}\|y^k-y^{k-1}\|^2 \\
			&&+\left(\frac{9}{4}\eta_x+\mu_x\eta^2_x\right){\color{blue}\frac{A\left(1+ \eta_x L_x\right)^2}{\eta^2_x T^{\alpha}}\|x^k-w^{\star k}\|^2 }- \frac{1}{4\eta_x}\|x^k-w^{\star k}\|^2  \\
			&& + \frac{\theta\eta_x}{8}{\color{blue}\frac{A\left(1+ \eta_x L_x\right)^2}{\eta^2_x T^{\alpha}}\|x^{k-1}-w^{\star k-1}\|^2 }\\
			&& + 2\la K^{\top}y^{k+1} - K^{\top}y^{k},\hat{x}^k-x^{\star}\ra - 2\theta \la K^{\top}y^{k} - K^{\top}y^{k-1},\hat{x}^{k-1}-x^{\star}\ra\\
			&\overset{\eqref{local_iterations}+ \eqref{eta_x}}{\leq}& \frac{1}{\eta_x}\|x^k-x^{\star}\|^2 + (1-\mu^2_{xy}\beta_y\eta_y)\frac{1}{\eta_y}\|y^k-y^{\star}\|^2 +  \frac{\theta}{2\eta_y}\|y^k-y^{k-1}\|^2 \\
			&&+ \frac{\theta}{8\eta_x}\|x^{k-1}-w^{\star k-1}\|^2 - \frac{1}{8\eta_x}\|x^k-w^{\star k}\|^2 \\
			&&+ 2\la K^{\top}y^{k+1} - K^{\top}y^{k},\hat{x}^k-x^{\star}\ra - 2\theta \la K^{\top}y^{k} - K^{\top}y^{k-1},\hat{x}^{k-1}-x^{\star}\ra .
		\end{eqnarray*}
		
		 \paragraph{Showing that $\Delta^{k+1}\leq \theta \Delta^k$ for $k\geq 1$.}				Using the definition of $\Delta^{k+1}$ (see \eqref{Lyapunov_function}), we have 
		\begin{eqnarray*}
			\Delta^{k+1} &\leq& \frac{1}{\eta_x}\|x^k-x^{\star}\|^2 + (1-\mu^2_{xy}\beta_y\eta_y)\frac{1}{\eta_y}\|y^k-y^{\star}\|^2 +  \frac{\theta}{2\eta_y}\|y^k-y^{k-1}\|^2 \\
			&& + \frac{\theta}{8\eta_x}\|x^{k-1}-w^{\star k-1}\|^2 - 2\theta \la K^{\top}y^{k} - K^{\top}y^{k-1},\hat{x}^{k-1}-x^{\star}\ra\\
			&\leq& \max\left\{\frac{2}{2+ \mu_x\eta_x}, 1 - \mu^2_{xy}\beta_y\eta_y\right\}\left(\left(1+\frac{\mu_x\eta_x}{2}\right)\frac{1}{\eta_x}\|x^k-x^{\star}\|^2 + \frac{1}{\eta_y}\|y^k-y^{\star}\|^2\right)\\
			&&+\theta\left(\frac{1}{2\eta_y}\|y^k-y^{k-1}\|^2 + \frac{1}{8\eta_x}\|x^{k-1}-w^{\star k-1}\|^2 - 2\la K^{\top}y^{k} - K^{\top}y^{k-1},\hat{x}^{k-1}-x^{\star}\ra\right) \\
			&\overset{\eqref{theta}}{\leq}&  \theta \Delta^{k},
		\end{eqnarray*}
		where in the last inequality we take $\theta = \max\left\{\frac{2}{2+\mu_x\eta_x},1-\mu^2_{xy}\beta_y\eta_y\right\} $ (see \eqref{theta}).
		
		\paragraph{Showing that $\Delta^{k+1}\leq \theta \Delta^k$ for $k = 0$.}			We start with inequality \eqref{eq:tyfytsdtyvs} for $k = 0$:	
		\begin{eqnarray}
			\label{eq:poiubvcd}
			\mathcal{V}^{1}
			&\overset{\eqref{eq:tyfytsdtyvs}}{\leq}& \frac{1}{\eta_x}\|x^0-x^{\star}\|^2 + (1-\mu^2_{xy}\beta_y\eta_y)\frac{1}{\eta_y}\|y^0-y^{\star}\|^2  +(2\eta_x+\mu_x\eta^2_x)\|\nabla\Psi^k(\hat{x}^0)\|^2 \notag\\
			&&- \frac{1}{2\eta_x}\|x^0-w^{\star 0}\|^2  +\left(\beta_y- \frac{1}{L_x}\right)\|\nabla G(\hat{x}^0) - \nabla G(x^{\star})\|^2\notag\\
			&& -2\la K^{\top}\Bar{y}^0 - K^{\top}y^{\star},\hat{x}^0-x^{\star}\ra +2\la K^{\top}y^{1} - K^{\top}y^{\star},\hat{x}^0-x^{\star}\ra \notag\\
			&\leq&  \frac{1}{\eta_x}\|x^0-x^{\star}\|^2 + (1-\mu^2_{xy}\beta_y\eta_y)\frac{1}{\eta_y}\|y^0-y^{\star}\|^2  -  \frac{1}{2\eta_x}\|x^0-w^{\star 0}\|^2\notag\\
			&& +(2\eta_x+\mu_x\eta^2_x)\|\nabla\Psi^k(\hat{x}^0)\|^2  + 2\la K^{\top}y^{1} - K^{\top}y^0,\hat{x}^0-x^{\star}\ra,
		\end{eqnarray}
		where in the last inequality we take $\beta_y \leq \frac{1}{L_x}$ (see \eqref{def_beta_y}). Now, we apply Lemma \ref{lem_convergence_norm_of_gradient} to the problem \eqref{auxiliary_problem} for $k = 0$ (see \eqref{complexity_aux_problem}):
		\begin{equation}
			\label{eq:hdudusu}
			\|\nabla\Psi^k(\hat{x}^0)\|^2 \leq \frac{A(1+\eta_xL_x)^2\|x^0-w^{\star 0}\|^2}{\eta_x T^{\alpha}}.
		\end{equation}
		Substituting \eqref{eq:hdudusu} into \eqref{eq:poiubvcd}, we can get 
		\begin{eqnarray}
			\label{eq:dnfjvnfkjn}
			\mathcal{V}^{1}
			&\leq&  \frac{1}{\eta_x}\|x^0-x^{\star}\|^2 + (1-\mu^2_{xy}\beta_y\eta_y)\frac{1}{\eta_y}\|y^0-y^{\star}\|^2  -  {\color{blue}\frac{1}{2\eta_x}\|x^0-w^{\star 0}\|^2}\notag\\
			&& +{\color{blue}(2\eta_x+\mu_x\eta^2_x)\frac{A(1+\eta_xL_x)^2}{\eta_x T^{\alpha}}\|x^0-w^{\star 0}\|^2 }+ 2\la K^{\top}y^{1} - K^{\top}y^0,\hat{x}^0-x^{\star}\ra\notag\\
			&\overset{\eqref{local_iterations}}{\leq}&  \frac{1}{\eta_x}\|x^0-x^{\star}\|^2 + (1-\mu^2_{xy}\beta_y\eta_y)\frac{1}{\eta_y}\|y^0-y^{\star}\|^2  -  {\color{blue}\frac{1}{8\eta_x}\|x^0-w^{\star 0}\|^2}\notag\\
			&& + 2\la K^{\top}y^{1} - K^{\top}y^0,\hat{x}^0-x^{\star}\ra,
		\end{eqnarray}
	where in the last inequality we take $T$ according to \eqref{local_iterations}. 
According to \eqref{Lyapunov_function} and \eqref{Lyapunov_function_0}, we have 
		\begin{eqnarray*}
			\Delta^1 &\overset{\eqref{Lyapunov_function}}{=}&  \mathcal{V}^{1} +  \frac{1}{8\eta_x}\|x^0-w^{\star 0}\|^2 - 2\la K^{\top}y^{1} - K^{\top}y^0,\hat{x}^0-x^{\star}\ra\\
			&\overset{\eqref{eq:dnfjvnfkjn}}{\leq}& \frac{1}{\eta_x}\|x^0-x^{\star}\|^2 + (1-\mu^2_{xy}\beta_y\eta_y)\frac{1}{\eta_y}\|y^0-y^{\star}\|^2\\
			&\leq& \theta\left(\left(1+\frac{\mu_x\eta_x}{2}\right)\frac{1}{\eta_x}\|x^0-x^{\star}\|^2 +\frac{1}{\eta_y}\|y^0-y^{\star}\|^2\right)\\
			&=& \theta \Delta^0,
		\end{eqnarray*}
	where in the last inequality we used the inequalities $1 \leq \theta\left(1+\frac{\mu_x\eta_x}{2}\right)$ and $1-\mu^2_{xy}\beta_y\eta_y\leq \theta$, which follow from \eqref{theta}.

		\paragraph{Showing that $\Delta^{k+1}\geq 0$ for $k\geq 0$.}	Finally, we need to show that $\Delta^k \geq 0$ for every $k$. This is obvious for $k=0$ from . Using the Cauchy-Schwarz inequality, the definition of $L_{xy}$ as the norm of $K$ (see \eqref{L_xy}), and applying Young's inequality,  we get
		\begin{eqnarray*}
			\Delta^{k+1} &\overset{\eqref{Lyapunov_function}}{=}& \left(1+\frac{\mu_x\eta_x}{2}\right)\frac{1}{\eta_x}\|x^{k+1}-x^{\star}\|^2+ \frac{1}{\eta_y}\|y^{k+1} - y^{\star}\|^2 +\frac{1}{2\eta_y}\|y^{k+1}-y^{k}\|^2 \notag\\
			&&+ \frac{1}{8\eta_x}\|x^k-w^{\star k}\|^2 - 2\la K^{\top}y^{k+1} - K^{\top}y^{k},\hat{x}^k-x^{\star}\ra\\
			&\overset{\eqref{L_xy}}{\geq}& \left(1+\frac{\mu_x\eta_x}{2}\right)\frac{1}{\eta_x}\|x^{k+1}-x^{\star}\|^2+ {\color{red}\frac{1}{\eta_y}\|y^{k+1} - y^{\star}\|^2} +\frac{1}{2\eta_y}\|y^{k+1}-y^{k}\|^2 \notag\\
			&&+ \frac{1}{8\eta_x}\|x^k-w^{\star k}\|^2 - 2L_{xy}{\color{red}\|y^{k+1} - y^{k}\|}{\color{blue}\|\hat{x}^k-x^{\star}\|}\\
			&\geq& \left(1+\frac{\mu_x\eta_x}{2}\right)\frac{1}{\eta_x}\|x^{k+1}-x^{\star}\|^2+ \frac{1}{\eta_y}\|y^{k+1} - y^{\star}\|^2 +{\color{red}\left(\frac{1}{2} - L_{xy}B\eta_y\right)\frac{1}{\eta_y}\|y^{k+1}-y^{k}\|^2} \notag\\
			&&+ \frac{1}{8\eta_x}\|x^k-w^{\star k}\|^2  -{\color{blue}\frac{L_{xy}}{B}\|\hat{x}^k-x^{\star}\|^2}\\
			&\geq& \left(1+\frac{\mu_x\eta_x}{2}\right)\frac{1}{\eta_x}\|x^{k+1}-x^{\star}\|^2+ \frac{1}{\eta_y}\|y^{k+1} - y^{\star}\|^2 +\left(\frac{1}{2} - L_{xy}B\eta_y\right)\frac{1}{\eta_y}\|y^{k+1}-y^{k}\|^2 \notag\\
			&&+ \frac{1}{8\eta_x}\|x^k-w^{\star k}\|^2  -\frac{2L_xy}{B}\|\hat{x}^k-x^{k+1}\|^2 -\frac{2L_xy}{B}\|x^{k+1}-x^{\star}\|^2 .
		\end{eqnarray*}
		for any $B > 0$. According to the definition of function $\Psi^k$ (see \ref{auxiliary_problem}), selecting $B = \frac{1}{2L_{xy}\eta_y}$, we get 
		\begin{eqnarray}
			\label{eq:jfnvsjfj}
			\Delta^{k+1} 
			&\geq& \left(1+\frac{\mu_x\eta_x}{2}\right)\frac{1}{\eta_x}\|x^{k+1}-x^{\star}\|^2+ \frac{1}{\eta_y}\|y^{k+1} - y^{\star}\|^2 + \frac{1}{8\eta_x}\|x^k-w^{\star k}\|^2\notag\\
			&& -4L^2_{xy}\eta_y\eta_x\frac{1}{\eta_x}\|\eta_x\nabla\Psi^{k}_{\eta_x}(\hat{x}^k)\|^2 -4L^2_{xy}\eta_y\eta_x\frac{1}{\eta_x}\|x^{k+1}-x^{\star}\|^2\notag\\
			&=& {\color{blue}\left(1-4L^2_{xy}\eta_y\eta_x+\frac{\mu_x\eta_x}{2}\right)\frac{1}{\eta_x}\|x^{k+1}-x^{\star}\|^2}\notag\\
			&& +\frac{1}{\eta_y}\|y^{k+1} - y^{\star}\|^2 + \frac{1}{8\eta_x}\|x^k-w^{\star k}\|^2 -{\color{blue}4L^2_{xy}\eta_y\eta_x\frac{1}{\eta_x}\|\eta_x\nabla\Psi^{k}_{\eta_x}(\hat{x}^k)\|^2 }.
		\end{eqnarray}
		
		Choosing stepsizes $\eta_x$, $\eta_y$ according to \eqref{eta_x} and \eqref{eta_y}, we can derive the following inequality: 
		\begin{equation} 
			\label{eq:cbhisjnxsdj}
			{\color{blue}32L^2_{xy}\eta_x\eta_y \leq 1}.
		\end{equation}
		
		Now, plugging \eqref{eq:cbhisjnxsdj} and \eqref{complexity_aux_problem} into \eqref{eq:jfnvsjfj},  we obtain
		\begin{eqnarray*}
			\Delta^{k+1}
			&\overset{\eqref{eq:cbhisjnxsdj}}{\geq}& {\color{blue}\left(\frac{7}{8}+\frac{\mu_x\eta_x}{2}\right)\frac{1}{\eta_x}\|x^{k+1}-x^{\star}\|^2}+\frac{1}{\eta_y}\|y^{k+1} - y^{\star}\|^2 \\
			&&+ \frac{1}{8\eta_x}\|x^k-w^{\star k}\|^2  -{\color{blue}\frac{1}{8\eta_x}\|\eta_x\nabla\Psi^{k}_{\eta_x}(\hat{x}^k)\|^2} \\
			&\overset{\eqref{complexity_aux_problem}}{\geq}& \left(\frac{7}{8}+\frac{\mu_x\eta_x}{2}\right)\frac{1}{\eta_x}\|x^{k+1}-x^{\star}\|^2+\frac{1}{\eta_y}\|y^{k+1} - y^{\star}\|^2\\
			&& + \frac{1}{8\eta_x}\|x^k-w^{\star k}\|^2  -\frac{1}{8\eta_x}\frac{C\left(1+ \eta_x L_x\right)^2}{T^{\alpha}}\|x^k-w^{\star k}\|^2\\
			&\overset{\eqref{local_iterations}}{\geq}& \frac{1}{2\eta_x}\|x^{k+1}-x^{\star}\|^2+\frac{1}{\eta_y}\|y^{k+1} - y^{\star}\|^2,
		\end{eqnarray*}
		where in last inequality we use \eqref{local_iterations} to evaluate two last terms from below by zero. 
\end{proof}

\subsection{Proof of Theorem \ref{alg:APDA-Inex}}
To prove the informal result \eqref{complexity_alg_1}, we simply apply the above theorem. Using the definition of $\theta$, the theorem implies that
	\begin{equation*}
		k  \geq \frac{1}{1-\theta}\log\frac{1}{\varepsilon} \quad \Rightarrow \quad \Delta^k \leq \varepsilon \Delta^0.
	\end{equation*}

It remains to remark that the number of outer iterations to find $\varepsilon$-solution is equal to 
	\begin{eqnarray*}
		 \mathcal{O}\left(\frac{1}{1-\theta}\log\frac{1}{\varepsilon}\right) &=&  \mathcal{O}\left(\max\left\{1+\frac{2}{\mu_x\eta_x},\frac{1}{\mu^2_{xy}\beta_y\eta_y}\right\}\log\frac{1}{\varepsilon}\right)\\
		&=& \mathcal{O}\left(\max\left\{1+\frac{1}{\mu_x\eta_x},\frac{L_{x}}{\mu^2_{xy}\eta_y},\frac{L^2_{xy}}{\mu^2_{xy}}\right\}\log\frac{1}{\varepsilon}\right) \\
		&=& \mathcal{O}\left(\max\left\{1+\sqrt{\frac{L_x}{\mu_x}}\frac{L_{xy}}{\mu_{xy}},\frac{L^2_{xy}}{\mu^2_{xy}}\right\}\log\frac{1}{\varepsilon}\right).
	\end{eqnarray*}
	In other words, the number of computations of prox is equal to
	\begin{equation*}
		\sharp\text{prox} = \mathcal{O}\left(\max\left\{1+\sqrt{\frac{L_x}{\mu_x}}\frac{L_{xy}}{\mu_{xy}},\frac{L^2_{xy}}{\mu^2_{xy}}\right\}\log\frac{1}{\varepsilon}\right),
	\end{equation*}	
and the number of gradient evaluations is	\begin{eqnarray*}
		\sharp \nabla G &=& k \cdot T\\
		&=& \cO\left(\max\left\{1+\sqrt{\frac{L_x}{\mu_x}}\frac{L_{xy}}{\mu_{xy}},\frac{L^2_{xy}}{\mu^2_{xy}}\right\}\log\frac{1}{\varepsilon}\right) \cdot \cO\left(\left(\frac{L_x}{\mu_x}\right)^{\nicefrac{1}{\alpha}}\right)\\
		&=&\cO\left(\max\left\{\left(\frac{L_x}{\mu_x}\right)^{\nicefrac{1}{\alpha}}+\left(\frac{L_x}{\mu_x}\right)^{\frac{2+\alpha}{2\alpha}}\frac{L_{xy}}{\mu_{xy}},\left(\frac{L_x}{\mu_x}\right)^{\nicefrac{1}{\alpha}}\frac{L^2_{xy}}{\mu^2_{xy}}\right\}\log\frac{1}{\varepsilon}\right).
	\end{eqnarray*}

\clearpage		
\section{Analysis of the Accelerated Primal-Dual Algorithm with Inexact Prox and Accelerated Gossip (Algorithm~\ref{alg:APDA-3})}

	
\subsection{Gossip matrices}	

In Algorithm~\ref{alg:APDA-3} we work with a more general notion of a gossip matrix (beyond graph Laplacians), defined next.

	\begin{assumption}[see \citep{scaman2017optimal}]
		\label{asm:gossip_matix}
			Let $\cG = (\cV, \cE)$ be a connected communication network. The communication process is represented via multiplication by a matrix $\hat{W}\in \R^{n\times n}$ which satisfies the following conditions:
			\begin{itemize}
				\item $\hat{W}$ is  symmetric,
				\item $\hat{W}$ is  positive semi-definite,
				\item the kernel of   $\hat{W}$ satisfies ${\rm Ker}~\hat{W} = {\rm span}\{(1,\dots,1)^{\top} \in \R^{n}\}$, and
				\item $\hat{W}_{i,j} \neq 0$ if  and only if  $i = j$ or $(i,j)\in \cE$.
			\end{itemize}
	\end{assumption}

It is easy to see that $\sigma(\hat{W}) \subset \sigma(W)$, where $\sigma(\cdot)$  denotes the spectrum,  and $W = \hat{W}\otimes I_{dn}$.

\subsection{Theorem~\ref{thm:APDA-3} follows from Theorem \ref{thm:APDA-Inex} }	

Our main result for Algorithm~\ref{alg:APDA-3}, i.e., Theorem~\ref{thm:APDA-3},  follows from Theorem \ref{thm:APDA-Inex}. Below we state it more formally.

	\begin{theorem}[Formal version of Theorem~\ref{thm:APDA-3}; convergence  for Algorithm \ref{alg:APDA-3}]
		\label{cor_conv_alg_3}
		Let us invoke the same assumptions as those made in Theorem~\ref{thm:APDA-Inex}. Additionally, let  Assumption~\ref{asm:gossip_matix} hold. Assume that the auxiliary problem \eqref{auxiliary_problem_acc_gossip} is solved by  one of the methods from Lemma~\ref{lem_convergence_norm_of_gradient}. Set the parameters $N$, $c_1$, $c_2$, $c_3$ to 
		\begin{equation*}
			N = \left\lfloor\sqrt{\chi}\right\rfloor,\quad c_1 = \frac{\sqrt{\chi}-1}{\sqrt{\chi}+1}, \quad  c_2=\frac{\sqrt{\chi}+1}{\sqrt{\chi}-1}, \quad c_3 = \frac{2\chi}{\left(1+ \chi\right)\lambda_{\max}\left(W\right)}
		\end{equation*}
		and let 
		\begin{equation*}
			\lambda_1 = 1+\frac{2c^N_1}{1+c^{2N}_1}, \quad \lambda_2 = 1-\frac{2c^N_1}{1+c^{2N}_1},
		\end{equation*}
		\begin{equation*}
			\beta_y = \min\left\{\frac{1}{L_x}, \frac{1}{2\lambda^2_1\eta_y}\right\},\quad \eta_x = \frac{1}{2\sqrt{L_x\mu_x}}\frac{\lambda_2}{\lambda_1}, \quad \eta_y = \frac{\sqrt{L_x\mu_x}}{\sqrt{2}\lambda_1\lambda_2},
		\end{equation*}
		\begin{equation*}
			\theta = \max\left\{\frac{2}{2+ \mu_x\eta_x}, 1 - \lambda^2_2\beta_y\eta_y\right\},
		\end{equation*}
		\begin{equation*}
			T = \sqrt[\alpha]{20A}\left(1+\sqrt{\frac{L_x}{\mu_x}}\right)^{\nicefrac{2}{\alpha}}.
		\end{equation*}
		Then for the Lyapunov function $\Delta^k$ from Theorem~\ref{th_conv_sc_1}, we have
		\begin{equation*}
			0\leq \Delta^{k}\leq \theta^k\Delta^0 \qquad \forall k\geq 0,
		\end{equation*}
		Moreover, for every $\varepsilon > 0$,  Algorithm~\ref{alg:APDA-3} finds $(\x^k, \y^k)$ for which $\Delta^k\leq\varepsilon\Delta^0$ in at most $\cO\left(\kappa^{\frac{2+\alpha}{2\alpha}}\log\left(\nicefrac{1}{\varepsilon}\right)\right)$ gradient computations and at most $\cO\left(\sqrt{\kappa\chi}\log\left(\nicefrac{1}{\varepsilon}\right)\right)$ communication rounds.
	\end{theorem}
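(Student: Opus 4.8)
The plan is to derive Theorem~\ref{cor_conv_alg_3} as a corollary of Theorem~\ref{thm:APDA-Inex} by recognizing that Algorithm~\ref{alg:APDA-3} is exactly Algorithm~\ref{alg:APDA-Inex} applied to the saddle-point problem \eqref{new_problem}, but with every matrix-vector product involving $\sqrt{W}$ (or $W$) replaced by a call to the \textsc{AG} (accelerated gossip) subroutine. The key algebraic fact I would establish first is a polynomial identity for the \textsc{AG} procedure: the Chebyshev-type recursion $a_{i+1}=2c_2a_i-a_{i-1}$, $\x_{i+1}=2c_2(I-c_2W)\x_i-\x_{i-1}$ implies that $\textsc{AG}(W,\x,N) = q_N(W)\x$ for an explicit polynomial $q_N$, and that $q_N(W)$ is symmetric PSD, vanishes on $\mathrm{Ker}\,W$, and has spectrum on $(0,\lambda_1^2)$ ``squeezed'' so that its nonzero eigenvalues lie in $[\lambda_2^2,\lambda_1^2]$ where $\lambda_1,\lambda_2$ are as defined (here I would use the standard Chebyshev estimate, cf.\ \citet{scaman2017optimal}, that $N=\lfloor\sqrt{\chi}\rfloor$ rounds of this recursion contract the relative spectral gap from $\chi$ down to a constant). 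Concretely, $\textsc{AG}$ plays the role of $\sqrt{\tilde W}$ for an effective gossip operator $\tilde W := q_N(W)$ whose condition number $\tilde\chi := \lambda_{\max}(\tilde W)/\lambda^+_{\min}(\tilde W) = \lambda_1^2/\lambda_2^2 = \mathcal{O}(1)$.

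Second, I would check that Algorithm~\ref{alg:APDA-3} coincides, line for line, with Algorithm~\ref{alg:APDA-Inex} run on the saddle-point problem $\min_\x\max_\y\{P(\x)+\langle\y,\sqrt{\tilde W}\x\rangle\}$, i.e.\ \eqref{new_problem} with $W$ replaced by $\tilde W=q_N(W)$. The only subtlety is the dual update: line~6 of Algorithm~\ref{alg:APDA-3} writes $\textsc{AG}(W,\textsc{AG}(W,\y^k,N)+\nabla P(\hat\x^k),N)$, and I would verify this equals $\sqrt{\tilde W}(\sqrt{\tilde W}\y^k+\nabla P(\hat\x^k))=\tilde W\y^k+\sqrt{\tilde W}\nabla P(\hat\x^k)$, matching the $K(K^\top y^k+\nabla G(\hat x^k))$ term of Algorithm~\ref{alg:APDA-Inex} with $K=\sqrt{\tilde W}$; similarly lines 4--5 match. (One should note the sign convention: Algorithm~\ref{alg:APDA-3} writes $\y^{k+1}=\y^k+\eta_y(\cdots)$ because $\psi^\star\equiv 0$ so $\partial F^\star(\y^{k+1})=0$, absorbing the minus sign.) Since $\psi^\star\equiv 0$ we have $\partial F^\star\equiv\{0\}$, which is why no prox of $F^\star$ is needed and why all of Assumptions~\ref{as_convexity} and \ref{as_L_xy} are trivially satisfied with $\mu_{xy}^2=\lambda^+_{\min}(\tilde W)$. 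Then $L_{xy}^2=\lambda_{\max}(\tilde W)$, $\mu_{xy}^2=\lambda^+_{\min}(\tilde W)$, so $\kappa_{xy}=L_{xy}/\mu_{xy}=\sqrt{\tilde\chi}=\lambda_1/\lambda_2=\mathcal{O}(1)$, and one checks the stepsize choices in Theorem~\ref{cor_conv_alg_3} are precisely those of Theorem~\ref{thm:APDA-Inex} with $L_{xy}=\lambda_1$, $\mu_{xy}=\lambda_2$ (up to the harmless constant $\sqrt2$). Invoking Theorem~\ref{thm:APDA-Inex} verbatim then gives $0\le\Delta^k\le\theta^k\Delta^0$ and the outer-iteration count $\mathcal{O}(\max\{\sqrt{\kappa}\,\kappa_{xy},\kappa_{xy}^2\}\log\frac1\varepsilon)=\mathcal{O}(\sqrt{\kappa}\log\frac1\varepsilon)$ since $\kappa_{xy}=\mathcal{O}(1)$, with $T=\mathcal{O}(\kappa^{1/\alpha})$ inner gradient steps each.

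Third, I would convert these counts into the communication and computation complexities. The number of outer iterations is $\mathcal{O}(\sqrt\kappa\log\frac1\varepsilon)$; each outer iteration performs $\mathcal{O}(1)$ calls to \textsc{AG}, each costing $N=\lfloor\sqrt\chi\rfloor$ communication rounds, giving $\sharp\text{comm}=\mathcal{O}(\sqrt\kappa\cdot\sqrt\chi\cdot\log\frac1\varepsilon)=\widetilde{\mathcal{O}}(\sqrt{\kappa\chi})$. Each outer iteration also runs method $\cM$ for $T=\mathcal{O}(\kappa^{1/\alpha})$ iterations (plus $\mathcal{O}(1)$ extra gradient evaluations), so $\sharp\text{comp}=\mathcal{O}(\sqrt\kappa\cdot\kappa^{1/\alpha}\cdot\log\frac1\varepsilon)=\widetilde{\mathcal{O}}(\kappa^{(2+\alpha)/(2\alpha)})$; substituting $\alpha=2,3,4$ gives the three stated regimes $\widetilde{\mathcal{O}}(\kappa),\widetilde{\mathcal{O}}(\kappa^{5/6}),\widetilde{\mathcal{O}}(\kappa^{3/4})$ with local-step counts $\widetilde{\mathcal{O}}(\kappa^{1/2}),\widetilde{\mathcal{O}}(\kappa^{1/3}),\widetilde{\mathcal{O}}(\kappa^{1/4})$ respectively.

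\textbf{Main obstacle.} The routine reduction is the easy part; the crux is the spectral analysis of the \textsc{AG} polynomial --- showing that $q_N(W)$ has the claimed symmetry/PSD/kernel properties and, crucially, that $N=\lfloor\sqrt\chi\rfloor$ suffices to drive the effective condition number $\lambda_1^2/\lambda_2^2$ to a universal constant. This requires the Chebyshev-polynomial contraction estimate, careful bookkeeping of the normalization by $a_N$ (so that $q_N$ fixes $\mathrm{Ker}\,W$ pointwise and acts as $I$ on it in the ``$\x-\x_N/a_N$'' return value, matching the structure needed so that $\psi^\star\equiv0$ is preserved), and verifying that $c_1^N=\mathcal{O}(1)$ is bounded away from $1$ so that $\lambda_1,\lambda_2$ are genuinely $\Theta(1)$. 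A secondary technical point is confirming the sign/indexing conventions in line~6 of Algorithm~\ref{alg:APDA-3} really reproduce the $\beta_y$-correction term of Algorithm~\ref{alg:APDA-Inex} under the substitution $K\mapsto\sqrt{\tilde W}$; once that bookkeeping is done, Theorem~\ref{thm:APDA-Inex} applies as a black box.
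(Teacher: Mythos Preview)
Your proposal is correct and follows essentially the same approach as the paper: both reduce Theorem~\ref{cor_conv_alg_3} to Theorem~\ref{thm:APDA-Inex} by observing that Algorithm~\ref{alg:APDA-3} is Algorithm~\ref{alg:APDA-Inex} with the coupling matrix replaced by the accelerated-gossip polynomial of $W$, so that $L_{xy}\to\lambda_1$ and $\mu_{xy}\to\lambda_2$ with $\lambda_1/\lambda_2=\mathcal{O}(1)$. The paper's own proof is in fact much terser than yours---it simply cites Corollary~1 of \citet{kovalev2020optimal} for the Chebyshev spectral estimate you spell out---so your outline is, if anything, more complete.
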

	\begin{proof}
		The main idea of the proof is this methods is to show two following things: i) Theorem \ref{thm:APDA-Inex} holds true with some replacements, i.e. $L_{xy}\rightarrow \lambda_1$, $\mu_{xy} \rightarrow \lambda_2$, where $\lambda_1$, $\lambda_2$ is upper bound and lower bound  of the spectrum of a matrix, which is defined below; ii) Estimate $\lambda_1$ and $\lambda_2$. 
		
		The proof of Theorem~\ref{cor_conv_alg_3} is similar to the proof of Corollary~1 from \citep{kovalev2020optimal}. According to the proof from~\citep{kovalev2020optimal}, we need to replace 
		\begin{equation*}
				L_{xy} = \lambda_{\max}(\sqrt{W}) \rightarrow \lambda_1 = 1+\frac{2c^N_1}{1+c^{2N}_1}, \quad  \mu_{xy} =  \lambda^+_{\min}(\sqrt{W}) \rightarrow \lambda_2 = 1-\frac{2c^N_1}{1+c^{2N}_1}.
		\end{equation*}
		
	\end{proof}

	\clearpage

	\section{Proof of Lemma \ref{lem_convergence_norm_of_gradient}}
	\begin{proof}
	We consider the three options separately. We will use $f$ instead of $\Psi$ and $x$ instead of $w$ in the proof. Let $T=2K$.

\begin{itemize}
\item [(i)]	{\bf $K$ iterations of GD followed by $K$ iterations GD.} 	We consider the GD method with stepsize $\gamma=\frac{1}{L}$,
which performs the iterations
		\begin{equation*}
			x^{k+1} = x^k - \gamma \nabla f(x^k) = x^k - \frac{1}{L} \nabla f(x^k),
		\end{equation*}
where $L$ is the smoothness constant of $f$. We assume that $f$ is convex.

		{\bf $\bullet$ Gradient decreases monotonically.}		According to the  chain of inequalities
		\begin{eqnarray}
			\|\nabla f(x^{k+1})\|^2 &=& \|\nabla f(x^{k+1}) - \nabla f(x^{k})\|^2 + 2\la  \nabla  f(x^{k+1}) - \nabla f(x^{k}), \nabla f(x^k)\ra + \|\nabla f(x^k)\|^2\notag \\
			&=& \|\nabla f(x^{k+1}) - \nabla f(x^{k})\|^2 - \frac{2}{\gamma}\la \nabla f(x^{k+1}) - \nabla f(x^{k}), x^{k+1}-x^k\ra + \|\nabla f(x^k)\|^2\notag \\
			&\leq& \left(1 - \frac{2}{\gamma L}\right)\|\nabla f(x^{k+1}) - \nabla f(x^{k})\|^2 + \|\nabla f(x^k)\|^2 \notag \\
			&\leq& \|\nabla f(x^k)\|^2,\label{eq:basic_gd_2}
		\end{eqnarray}		
		where the first inequality follows from convexity and $L$-smoothness, 
		the gradient norm decreases monotonically.
				
		{\bf $\bullet$ Bound on best gradient norm.}	
	Further, using $L$-smoothness of $f$, we obtain
	\begin{eqnarray*}
		f(x^{k+1})&\leq& f(x^k)+ \la \nabla f(x^k), x^{k+1} - x^k\ra +\frac{L}{2}\|x^{k+1} - x^k\|^2\\
		&=&f(x^k)-\gamma \|\nabla f(x^k)\|^2 +\frac{\gamma^2L}{2}\|\nabla f(x^k)\|^2\\
		&=&f(x^k)-\frac{1}{2L}\|\nabla f(x^k)\|^2.
		\end{eqnarray*}
	Summing up from $k=K$  to $k = 2K$, we get 
	\begin{equation}
		\sum^{2K}_{k=K}\|\nabla f(x^k)\|^2 \leq 2L\left( f(x^K)- f(x^{2K}) \right) \leq 2L\left( f(x^K)- f^{\star}\right),\notag
	\end{equation}
which implies 
\begin{equation}
	\label{eq:basic_gd}
	\min_{k\in[K,2K]}\|\nabla f(x^k)\|^2 \leq \frac{2L\left( f(x^K)- f^{\star} \right)}{K+1}.
\end{equation}

	{\bf $\bullet$ Bound on function suboptimality.} 		It is known that for convex $L$-smooth functions, Gradient Descent (GD) with constant stepsize $\gamma = \frac{1}{L}$, i.e., the method
		\begin{equation*}
			x^{k+1} = x^k - \gamma \nabla f(x^k) = x^k - \frac{1}{L} \nabla f(x^k),
		\end{equation*}
	satisfies \citep{NesterovBook} 	
		\begin{equation}\label{eq:ug9gd8fd_098D}
			f(x^K) -f^{\star} \leq \frac{L\|x^0-x^{\star}\|^2}{2K} .
		\end{equation}

{\bf $\bullet$ Bound on last gradient norm.}	
By combining \eqref{eq:basic_gd_2}, \eqref{eq:ug9gd8fd_098D} and \eqref{eq:basic_gd}, we get
		\begin{eqnarray*}
			\|\nabla f(x^{2K})\|^2 & \overset{\eqref{eq:basic_gd_2}}{=} & \min_{k\in[K, 2K]}\|\nabla f(x^k)\|^2 \\
			&\overset{\eqref{eq:basic_gd}}{\leq} & \frac{2L(f(x^K) - f^{\star})}{K+1} \\
			& \leq & \frac{2L(f(x^K) - f^{\star})}{K} \\
			& \overset{\eqref{eq:ug9gd8fd_098D}}{\leq} & \frac{L^2\|x^0-x^{\star}\|^2}{K^2}.
		\end{eqnarray*}

\item [(ii)] {\bf $K$ iterations of FGD followed by $K$ iterations GD.} A better rate can be obtained if we replace the first half of the iterative process with the Fast Gradient Descent (FGD) method \citep{NesterovBook}. 	

{\bf $\bullet$ Bound on function suboptimality.} 	 If we employ the Fast Gradient Descent (FGD) method during the first $K$ iterations, we get (see, for example, \citep{NesterovBook}):
		\begin{equation}
			f(x^K) -f^{\star} \leq \frac{4L\|x^0-x^{\star}\|^2}{K^2}.\label{eq:h809fd0fd}
		\end{equation}
		
	{\bf $\bullet$ Bound on last gradient norm.} Since in the last $K$ iterations we use GD, inequalities \eqref{eq:basic_gd_2} and \eqref{eq:basic_gd} still apply. It remains to combine \eqref{eq:basic_gd_2}, \eqref{eq:basic_gd} and \eqref{eq:h809fd0fd}:		
\begin{eqnarray*}
			\|\nabla f(x^{2K})\|^2 & \overset{\eqref{eq:basic_gd_2}}{=} & \min_{k\in[K, 2K]}\|\nabla f(x^k)\|^2 \\
			&\overset{\eqref{eq:basic_gd}}{\leq} & \frac{2L(f(x^K) - f^{\star})}{K+1} \\
			& \leq & \frac{2L(f(x^K) - f^{\star})}{K} \\
			& \overset{\eqref{eq:h809fd0fd}}{\leq} &  \frac{8L^2\|x^0-x^{\star}\|^2}{K^3}.
		\end{eqnarray*}

\item	[(iii)]	{\bf $K$ iterations of FGD followed by $K$ iterations FSFOM.} A better rate can be obtained if we further replace the second half of the iterative process with the  FSFOM method of \citet{kim2021optimizing}. 	

{\bf $\bullet$ Bound on the gradient norm of FSFOM.}  The FSFOM method satisfies the following inequality  (see Theorem~6.1 from \citep{kim2021optimizing}):
		\begin{equation}
			\label{eq:conv_norm_fsfom}
			\|\nabla f(x^{2K})\|^2 \leq \frac{4L(f(x^K)-f^{\star})}{K^2}.
		\end{equation}
		
{\bf $\bullet$ Bound on last gradient norm.}		
		By combining  \eqref{eq:conv_norm_fsfom} and \eqref{eq:h809fd0fd}, we obtain
	$$
					\|\nabla f(x^{2K})\|^2 
					\overset{\eqref{eq:conv_norm_fsfom}}{\leq} \frac{4L(f(x^K) - f^{\star})}{K^2} 
					\overset{\eqref{eq:h809fd0fd}}{\leq} \frac{16L^2\|x^0-x^{\star}\|^2}{K^4}.
		$$
	
\end{itemize}

Results from parts (i), (ii) and (iii) proved above can be written in a unified form as \begin{equation*}
			\|\nabla f(x^{T})\|^2 \leq \frac{AL^2\|x^0-x^{\star}\|^2}{T^{\alpha}},
		\end{equation*}
		where $T=2K$, and $A$ is a constant which depends on the combination of the methods used during the first $K$ iterations and the last $K$ iterations.

	\end{proof}

\end{document}